\documentclass{IEEEoj}

\usepackage{cite}
\usepackage{url}
\usepackage{verbatim}
\usepackage{textcomp}
\usepackage{silence} % Filter warnings
\usepackage{microtype} % Improves justification and reduces overfull/underfull warnings

\usepackage{amsmath,amssymb,amsfonts,mathrsfs}
\usepackage{mathtools}
\usepackage{amsthm}
\usepackage{siunitx}

\usepackage{graphicx}
\usepackage[dvipsnames]{xcolor}
\usepackage{svg}
\usepackage{tikz}
\usetikzlibrary{external, spy}

\usepackage{pgfplots}
\usepgfplotslibrary{fillbetween, statistics}
\pgfplotsset{compat=1.18}
\usepackage{tikzducks} % For dummy figures

\usepackage[font={footnotesize}]{caption}
\usepackage[font={footnotesize}]{subcaption}

\usepackage{float}
\usepackage{array}
\usepackage{stfloats}
\usepackage{booktabs}
\usepackage{adjustbox}
\usepackage{multirow}
\let\labelindent\relax
\usepackage{enumitem}

\usepackage{algorithm}
\usepackage[noEnd=true]{algpseudocodex}

\usepackage{hyperref}

\graphicspath{{figures/}}

\theoremstyle{plain}

\newtheorem{lemma}{Lemma}

\theoremstyle{definition}
\newtheorem{definition}{Definition}

\newtheorem*{runningexample}{Running Example}

\algrenewcommand\algorithmicprocedure{\textbf{Function}}

\definecolor{BitA}{RGB}{238,242,255} % soft indigo
\definecolor{BitB}{RGB}{236,253,245} % soft teal
\definecolor{BitC}{RGB}{255,247,237} % soft amber
\definecolor{BitAframe}{RGB}{99,102,241}
\definecolor{BitBframe}{RGB}{16,185,129}
\definecolor{BitCframe}{RGB}{245,158,11}

\newcommand{\BitBoxA}[1]{\fcolorbox{BitAframe}{BitA}{\strut #1}}
\newcommand{\BitBoxB}[1]{\fcolorbox{BitBframe}{BitB}{\strut #1}}
\newcommand{\BitBoxC}[1]{\fcolorbox{BitCframe}{BitC}{\strut #1}}

\newcommand{\bx}{\boldsymbol{x}}
\newcommand{\by}{\boldsymbol{y}}
\newcommand{\bu}{\boldsymbol{u}}

\newcommand{\bc}{\boldsymbol{c}}
\newcommand{\bq}{\boldsymbol{q}}
\newcommand{\bkappa}{\boldsymbol{\kappa}}
\newcommand{\beps}{\boldsymbol{\epsilon}}
\newcommand{\bSigma}{\boldsymbol{\Sigma}}

\DeclareMathOperator{\sign}{sgn}
\newcommand{\predrob}{\varrho}
\newcommand{\amin}{\bigtriangledown}
\newcommand{\amax}{\bigtriangleup}

\newcommand{\prelex}{\preceq_{\text{lex}}}      % Lex Precedes Equal
\newcommand{\preslex}{\prec_{\text{lex}}}      % Lex Precedes Equal
\newcommand{\succlex}{\succeq_{\text{lex}}}      % Lex Succeeds Equal
\newcommand{\pres}{\preceq_{\text{spec}}}       % Rule Precedes Equal
\newcommand{\predisc}{\preceq_{\text{disc}}}    % Discrete Precedes Equal
\newcommand{\presdisc}{\prec_{\text{disc}}}    % Discrete Precedes Strict
\newcommand{\precont}{\preceq_{\text{cont}}}    % Continuous Precedes Equal
\newcommand{\prescont}{\prec_{\text{cont}}}    % Continuous Precedes Strict

\makeatletter
\renewcommand{\p@subsection}{\thesection\mbox{-}}
\renewcommand{\p@subsubsection}{\thesection\mbox{-}\thesubsection}
\makeatother

\AtBeginDocument{\definecolor{ojcolor}{RGB}{0, 115, 120}}
\def\OJlogo{\vspace{-14pt}\includegraphics[height=23pt]{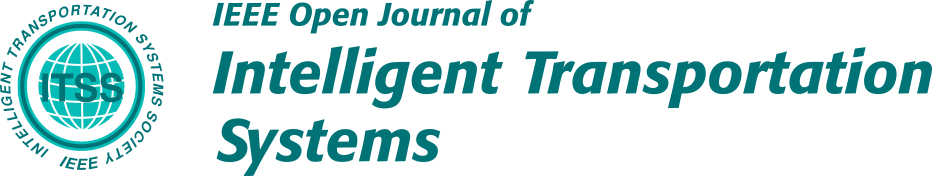}}

\begin{document}

% Running headers
\receiveddate{XX Month, XXXX}
\reviseddate{XX Month, XXXX}
\accepteddate{XX Month, XXXX}
\publisheddate{XX Month, XXXX}
\currentdate{XX Month, XXXX}
\doiinfo{OJITS.2026.1234567}
\jvol{XX}
\pubyear{2026}

% Title and running headers
\title{Lexicographic Minimum-Violation Motion Planning using Signal Temporal Logic}
\markboth{Lexicographic Minimum-Violation Motion Planning using Signal Temporal Logic}{Halder \textit{et al.}}

% Authors
\author{PATRICK HALDER\textsuperscript{1,2},
    LOTHAR KILTZ\textsuperscript{2},
    HANNES HOMBURGER\textsuperscript{3},
    JOHANNES REUTER\textsuperscript{3},
    AND MATTHIAS ALTHOFF\textsuperscript{1}}

% Affiliations
\affil{School of Computation, Information and Technology, Technical University of Munich, 85748 Garching, Germany}
\affil{ZF Friedrichshafen AG, 88046 Friedrichshafen, Germany}
\affil{Institute of System Dynamics, HTWG Konstanz, 78562 Konstanz, Germany}
\corresp{CORRESPONDING AUTHOR: P. HALDER (e-mail: patrick.halder@tum.de)}

% Abstract and keywords
\begin{abstract}
    Motion planning for autonomous vehicles often requires satisfying multiple conditionally conflicting specifications. In situations where not all specifications can be met simultaneously, minimum-violation motion planning maintains system operation by minimizing violations of specifications in accordance with their priorities. Signal temporal logic (STL) provides a formal language for rigorously defining these specifications and enables the quantitative evaluation of their violations. However, a total ordering of specifications yields a lexicographic optimization problem, which is typically computationally expensive to solve using standard methods. We address this problem by discretizing the multi-objective lexicographic optimization problem via non-uniform quantization and transforming it into a single-objective optimization problem using bit-shifting. Specifically, we extend a deterministic model predictive path integral (MPPI) solver to efficiently solve optimization problems without quadratic input cost. Additionally, a novel predicate robustness measure that combines spatial and temporal violations is introduced. Our results show that the proposed method offers an interpretable and scalable solution for lexicographic STL minimum-violation motion planning using a single-objective solver.
\end{abstract}

\begin{IEEEkeywords}
    Autonomous vehicles, motion planning, formal methods in robotics and automation, intelligent transportation systems, minimum-violation motion planning, multi-objective optimization.
\end{IEEEkeywords}

\maketitle

% Copyright notice (uncomment if needed)
%\copyrightnotice

% ===================================
% Main Content Sections
% ===================================

\section{Introduction}
\IEEEPARstart{M}{otion} planning for autonomous vehicles requires satisfying numerous specifications, such as collision avoidance, traffic rules, and schedules \cite{Mehdipour2023, Althoff2025}. In complex environments, these specifications are often in conflict and cannot be satisfied simultaneously. To resolve such situations systematically, specifications can be prioritized, enabling a deliberate selection of a minimum-violation maneuver. Fig.~\ref{fig:RunningExampleScenario} provides an illustrative example: complying with the lane boundaries prevents the ego vehicle from passing the broken-down vehicle, creating a conflict between making progress and adhering to the in-lane-driving specification.

In this work, we present a novel minimum-violation motion planner utilizing totally ordered signal temporal logic (STL) specifications. By discretizing the underlying multi-objective lexicographic optimization problem and transforming it into a single-objective optimization problem using bit-shifting, we enable the application of standard single-objective solvers. Specifically, we extend the deterministic model predictive path integral (MPPI) solver from our previous work~\cite{Halder2025ICRA} by eliminating the quadratic input cost and dynamically adapting the number of used samples. The resulting framework is highly flexible and supports various STL robustness definitions without restricting STL fragments or predicates. Furthermore, we introduce a new predicate robustness measure that jointly quantifies spatial and temporal violations.

\vspace{3mm}

\begin{figure}[htb]
    \centering
    \def\svgwidth{\linewidth}
    {
        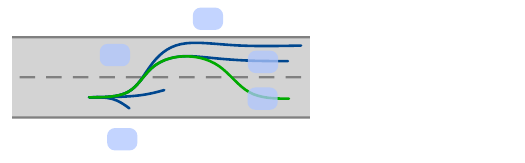
    }
    \caption{An overtaking scenario involving a broken-down vehicle and five example trajectories: $\by_a, \by_b, \by_c, \by_d$, and $\by_e$. According to the specification prioritization (the top specification has the highest priority; the bottom specification has the lowest), $\by_d$ is the worst because it violates the collision avoidance specification. Trajectory $\by_c$ is collision-free and makes the most progress, but fails to eventually return to the lane. Trajectories $\by_a$ and $\by_b$ make similar progress to $\by_c$, but $\by_a$ also fails to eventually return to the lane, making $\by_b$ the best trajectory.}
    \label{fig:RunningExampleScenario}
\end{figure}

\subsection{Related Work}
We subsequently review related work on motion planning subject to STL specifications, prioritized motion planning, and minimum-violation motion planning.

% ============================================================================
\subsubsection{Motion Planning subject to STL specifications}
STL-based motion planning requires formalizing objectives, defining their robustness measures, and optimizing these measures.

\paragraph*{Formalizing Objectives in Temporal Logic}
Objectives for cyber-physical systems are often specified in natural language, which is inherently vague and ambiguous. A formal representation of objectives is therefore crucial: it removes linguistic ambiguity, enables machine interpretability, and supports the formal verification of system behavior. Temporal logic is a common choice, as it can describe system behavior over time while remaining intuitive to humans~\cite{Mehdipour2023}. It is widely used in verification, runtime monitoring, and planning, as it provides a rigorous mathematical foundation that enables formal analysis of system behavior.
One of the most commonly used temporal logics is linear temporal logic (LTL)~\cite{Rizaldi2017, Esterle2020, Tumova2013a, ReyesCastro2013, Rong2020}. While LTL is widely adopted as a specification language in verification, its standard Boolean semantics is less suitable for motion planning, where a quantitative measure of satisfaction is often required to guide optimization algorithms. Consequently, applying LTL to motion planning usually necessitates defining auxiliary measures to quantify satisfaction and violation~\cite{Schlueter2018, Vasile2017}.

In contrast to LTL, metric temporal logic (MTL) and STL provide an inherent robustness measure~\cite{Bartocci2018}, making them well-suited for expressing spatio-temporal properties. Consequently, motion planning for autonomous vehicles predominantly employs MTL~\cite{Maierhofer2020, Maierhofer2022, Esterle2020, Krasowski2021, Shi2025} and STL~\cite{Hekmatnejad2019, Arechiga2019, Halder2023}.

In~\cite{Siu2023, Hurley2024}, the interpretability of STL specifications is questioned based on user studies. However, in engineering practice, temporal logic specifications are either formalized by experts~\cite{Maierhofer2020} or derived from data~\cite{Linard2022, Khanna2024}.

\paragraph*{STL Robustness Measures}
Many works on temporal-logic motion planning optimize the standard space robustness of STL~\cite{Belta2019, Kurtz2022}. In contrast, only a few works focus on the time robustness~\cite{Rodionova2022}, and, to the best of our knowledge, no work optimizes the standard space-time robustness of~\cite{Donze2010}. Instead, a wide range of robustness measures is proposed, each targeting specific use cases or properties, e.g., smoothness and differentiability, monotonicity, soundness, or completeness. For instance, smooth approximations~\cite{Pant2017, Pant2018, Dawson2022, Gilpin2021, Kurtz2021, Welikala2023, Uzun2024} provide differentiability for gradient-based solvers, while weighted variants of STL enable prioritization of subformulas~\cite{Mehdipour2021, Cardona2023, Baharisangari2022}. Others define averaging-based and accumulative measures~\cite{Lindemann2017, Haghighi2019, Mehdipour2019, Mehdipour2019a, Varnai2020, Zhao2022, Halder2022} or a model predictive robustness~\cite{Lin2023}. The work in \cite{Mehdipour2025} shows that many of the aforementioned measures are special cases of a generalized mean robustness. Temporal deviations are addressed by time-robustness measures~\cite{Lindemann2022, Rodionova2023, Yu2023, Verhagen2024}. In addition, some works propose custom combinations, such as products of space and time robustness~\cite{Karlsson2020, Karlsson2021}, a weighted sum of space robustness with a measure of time tolerances~\cite{Lin2020}, or a combination of violation magnitude and violation duration~\cite{Finkeldei2025}.

\paragraph*{Optimizing STL Robustness}
Considering the standard space and time robustness measures in optimization problems poses significant challenges due to their non-smoothness and non-convexity. Mostly, these robustness measures are encoded as mixed-integer constraints~\cite{Sadraddini2019, Sahin2020, Cardona2023, Aasi2021, Kurtz2022, Yuan2024, Aasi2025, Rodionova2021, Rodionova2022, Rodionova2023, Verhagen2024, Yu2023}. However, these encodings introduce many binary variables, resulting in an exponential computational complexity~\cite{Kurtz2022}.

To enable gradient-based optimization, the minimum and maximum operators are often replaced with smooth approximations. This allows the use of sequential quadratic programming~\cite{Pant2017, Pant2018, Gilpin2021, Welikala2023}, first-order methods~\cite{Leung2023, Mehdipour2019, Mehdipour2021}, or convex optimization~\cite{Mao2022, Claudet2024, Takayama2025}. However, these methods trade the exactness of the original non-smooth robustness for computational tractability.

Control barrier functions encode the robustness using barrier certificates~\cite{Lindemann2019, Lindemann2020, Charitidou2021, Xiao2021, Lindemann2020a, Huang2024, Ruo2024, Yu2024}, and are often formulated as mixed-integer or quadratic programs. However, these methods may not find feasible solutions for complex STL specifications, due to the non-convex, combinatorial structure of the resulting constraints.

Sampling-based methods, such as rapidly exploring random trees~\cite{Vasile2017, Karlsson2020, Barbosa2021, Linard2023, Sewlia2023, Marchesini2025} or MPPI~\cite{Halder2025ICRA, Varnai2022}, can handle arbitrary STL robustness measures, though they generally only provide probabilistic completeness. Learning-based methods, such as supervised learning~\cite{Liu2022, Leung2022, Meng2023} or reinforcement learning~\cite{Singh2023, Saxena2023}, enable fast execution but typically lack formal guarantees.

Despite the variety of STL optimization methods, most solvers impose restrictions: they are limited to STL fragments~\cite{Lindemann2019, Barbosa2019, Lindemann2020, Kapoor2024, Buyukkocak2025}, restrict predicates to be linear~\cite{Sadraddini2015, Cardona2023}, piecewise-linear~\cite{Farahani2015}, or convex~\cite{Halder2022, Kurtz2022}, or alter the original problem by smoothing the robustness~\cite{Pant2017, Pant2018, Dawson2022, Gilpin2021, Kurtz2021, Welikala2023}. In this work, we present a method that imposes no restrictions on STL specifications.
To handle conflicting specifications, we discuss prioritization methods next.

% ============================================================================
\subsubsection{Prioritized Motion Planning}
We first review prioritization methods in motion planning, followed by lexicographic optimization methods.

\paragraph*{Methods for Prioritization}
Although conflicts of specifications can be resolved offline by explicitly encoding every possible scenario (e.g., allowing the use of a bike lane to avoid a crash), enumerating all combinations of conditionally conflicting specifications is cumbersome and often intractable~\cite{Gambo2021}. Therefore, such conflicts are typically addressed online during planning.

Weighting methods~\cite{Marler2004} are the most common form of prioritization. The weighted-sum approach~\cite{Paden2016} is widely used, but it cannot capture non-convex regions of the Pareto front~\cite{Das1997, Branke2008}. Other variants include the weighted product method and the weighted Tchebycheff method~\cite{Yi2015, Thoma2023, Wilde2024}. However, weights are highly scenario-dependent~\cite{Abouelazm2024, Halder2023}, prone to misinterpretation, and often require normalization, which complicates tuning tremendously~\cite{Mehdipour2021, Cardona2023, Halder2025IV}. Despite the ease of integrating weighting methods into optimization problems, the high tuning effort remains a key drawback.

On the contrary, hierarchical methods allow qualitative prioritization based on order relations. Lexicographic ordering is the most prevalent method~\cite{Halder2022, Halder2023, Schnurr2018, Khosravani2018, Shan2020, Veer2023, Penlington2024, Molins2025}. In~\cite{Xiao2021}, rules are organized into totally ordered equivalence classes and lexicographic optimization is performed across the classes; within each class, rules are aggregated by a weighted Tchebycheff formulation. A generalization to lexicographic preferences is provided by the rulebook framework~\cite{Censi2019}, where specifications are preordered. In~\cite{Halder2025IV}, the relation of rulebooks to mixed Pareto-lexicographic optimization~\cite{Lai2022} is shown. Rulebooks are more interpretable than machine learning models~\cite{Helou2021}, but they can yield multiple incomparable solutions~\cite{Halder2025IV}, raising the issue of solution selection. While hierarchical methods allow straightforward prioritization of specifications, incorporating them into optimization problems remains a challenge.
In this work, we focus on lexicographic optimization and subsequently review methods for solving such problems.

\paragraph*{Lexicographic Optimization Methods}
Lexicographic optimization methods can be broadly categorized into preemptive, search-based, sampling-based, and scalarization approaches. In the commonly used preemptive scheme, an optimization problem is solved for each objective, with solutions to higher-priority objectives constraining those to lower-priority objectives~\cite{Marler2004, Cococcioni2018, Wang2020, Hussain2021, Zanardi2021, Fiaschi2022, Halder2023}. Consequently, the complexity of the solution method strongly depends on the number of objectives. Search-based methods instead compare path costs lexicographically, e.g., using Dijkstra~\cite{Shan2020, MaristanydelasCasas2023} or A*~\cite{Halder2022} algorithms, but they require cost separability, making them unsuitable for STL robustness. Sampling-based approaches offer fast convergence and can handle arbitrary robustness measures, but they are only probabilistically complete~\cite{Halder2025IV}.

Scalarization methods aim to transform the lexicographic problem into a weighted-sum problem, enabling the use of standard single-objective solvers. However, an order-preserving utility function does not exist in general, i.e., a set of equivalent weights~\cite{Debreu1983}. Equivalent weights can be constructed only under specific assumptions, such as linearity~\cite{Sherali1983, Sathya2021}, convexity~\cite{Schnurr2018, Zarepisheh2011}, or when Boolean objectives are used~\cite{Rahmani2020, Veer2023}. A utility function that approximates the lexicographic order is proposed in~\cite{Penlington2024}.

% ============================================================================
\subsubsection{Minimum-Violation Motion Planning}
Minimum-violation motion planning addresses situations where it is impossible to satisfy all specifications simultaneously. It combines a quantitative violation measure with specification prioritization to determine a minimum-violation trajectory.
When specifications are formalized in LTL, minimum-violation motion planning approaches typically quantify violations using weighted automata~\cite{Lahijanian2015, Karlsson2018} and compare trajectories lexicographically~\cite{Tumova2013, Wongpiromsarn2021}. However, these approaches rely on auxiliary violation measures that are not inherent to the logic itself~\cite{Schlueter2018}, severely limiting their usability.

For STL specifications, violation is typically quantified using space robustness, and the prioritization is expressed through weighted sums~\cite{Karlsson2020, Karlsson2021, Baldini2024}, total orders~\cite{Halder2022, Halder2023}, or preorders~\cite{Censi2019}.

\subsection{Contributions}

We present a minimum-violation motion planning framework that supports various STL robustness definitions without restricting the STL fragment or predicates. Conflicting specifications are resolved using a lexicographic ordering within a single-objective optimization framework. In detail, our contributions are:
\begin{itemize}
    \item introducing a novel predicate robustness measure that combines spatial and temporal robustness;
    \item providing a new scalarization technique that transforms the multi-objective lexicographic optimization problem into a single-objective problem using non-uniform quantization and bit-shifting, allowing the use of single-objective solvers;
    \item extending the deterministic MPPI solver~\cite{Halder2025ICRA} to handle non-smooth, non-convex optimization problems by eliminating the mandatory quadratic input cost. We further provide adaptations improving solution quality and convergence time; and
    \item extensively validating the proposed approach through numerical experiments, which analyze the impact of cost discretization, evaluate the solver adaptations, demonstrate its applicability to complex autonomous driving scenarios, and benchmark various STL robustness measures.
\end{itemize}
The remainder of this article is structured as follows: after introducing the preliminaries in Sec.~\ref{sec:Preliminaries}, we present our problem statement in Sec.~\ref{sec:ProblemStatement}. Our solution is discussed in Sec.~\ref{sec:Solution}, and the extended deterministic MPPI solver is introduced in Sec.~\ref{sec:Solver}. Finally, we evaluate our minimum-violation motion planning approach in Sec.~\ref{sec:NumericalExperiments} and conclude in Sec.~\ref{sec:Conclusions}.

\section{Preliminaries}\label{sec:Preliminaries}
This section introduces fundamentals for our approach: the system model, definitions of order theory, STL, preemptive lexicographic optimization, and the deterministic MPPI solver.

\subsection{System Definition}
Let $k \in \mathbb{N}_0$ denote the discrete time index, with the corresponding continuous time given by $t_k:= k\,\Delta t$, where $\Delta t \in \mathbb{R}_{>0}$ is the time increment. Without loss of generality, the initial time index is $k_0 = 0$, and the final time index is $K \in \mathbb{N}$. We model the vehicle motion using a deterministic, nonlinear, discrete-time system
\begin{equation}
    \label{eq:SystemDefinition}
    \begin{aligned}
        \bx_{k+1} & = f(\bx_k, \bu_k), \\
        \by_k     & = g(\bx_k, \bu_k),
    \end{aligned}
\end{equation}
with $\bx_k \in \mathbb{R}^{n_x}$, $ \bu_k \in \mathbb{R}^{n_u}$, and $\by_k \in \mathbb{R}^{n_y}$, representing the state, input, and output, respectively, at discrete time  $k \in [0, K]$. Further, the inputs are bounded elementwise by $\underline{\bu} \le \bu_k \le \overline{\bu}$ for all $k \in [0,K]$, with $\underline{\bu}, \overline{\bu} \in \mathbb{R}^{n_u}$. The solution of~\eqref{eq:SystemDefinition} over the horizon $[0,K]$ for an initial state $\bx_0 \in \mathbb{R}^{n_x}$ and an input trajectory $\bu([0,K]) := [\bu_0,\ldots,\bu_{K}]$ is the state trajectory $\bx([0,K]) := [\bx_0,\bx_1,\ldots,\bx_K]$. The corresponding output trajectory is $\by([0,K]) := [\by_0,\by_1,\ldots,\by_K]$. Subsequently, we use $\bx$, $\bu$, and $\by$ as shorthand notations for $\bx([0,K])$, $\bu([0,K])$, and $\by([0,K])$, respectively. Further, let $\mathcal{Y}$ be the set of all possible output trajectories.

\subsection{Order-Theoretic Definitions}
In this subsection, we present the order-theoretic notions used in our proposed approach.

\begin{definition}[Partially Ordered Set, Partial Order~{\cite[Def.~1.1]{Schroeder2016}}]
    A \emph{partially ordered set} is a tuple $(\mathcal{Q}, \preceq)$, where $\mathcal{Q}$ is a set and $\preceq\ \subseteq \mathcal{Q} \times \mathcal{Q}$ is a reflexive, antisymmetric, and transitive binary relation. We call $\preceq$ a \emph{partial order}.
\end{definition}
For convenience, we define $q \preceq q' \Longleftrightarrow (q, q') \in\ \preceq$, with $q, q' \in \mathcal{Q}$, and say that $q$ \textit{precedes} $q'$. Let us further introduce the following shorthand notations:
\begin{equation*}
    \begin{alignedat}{2}
        \text{\textit{strict precedence:}} & \ q \prec q' \Longleftrightarrow q \preceq q' \land q' \npreceq q, \\
        \text{\textit{indifference:}}      & \ q \sim q' \Longleftrightarrow q \preceq q' \land q' \preceq q.
    \end{alignedat}
\end{equation*}

\begin{definition}[Totally Ordered Set, Totality~{\cite[Def.~1.1 and~2.1]{Schroeder2016}}]\label{def:TotallyOrderedSet}
    A \emph{totally ordered set} is a tuple $(\mathcal{Q}, \preceq)$, where $\mathcal{Q}$ is a set and $\preceq\ \subseteq \mathcal{Q} \times \mathcal{Q}$ is a partial order that additionally satisfies \emph{totality}; i.e., for all $q, q' \in \mathcal{Q}$ it holds that $q \preceq q' \lor q' \preceq q$.
\end{definition}

\begin{definition}[Lexicographically Ordered Set~{\cite[Ex.~1.2-9]{Schroeder2016}}]\label{def:LexicographicallyOrderedSet}
    A \emph{lexicographically ordered set} is a tuple $(\mathcal{Q}, \prelex)$, where $\mathcal{Q} \subseteq \mathbb{R}^{N}$ and $\prelex\ \subseteq \mathcal{Q} \times \mathcal{Q}$ is defined as follows: for $\bq=[q_1,\dots,q_N]^{\top}\in\mathcal{Q}$ and $\bq'=[q'_1,\dots,q'_N]^{\top}\in\mathcal{Q}$, we write $\bq \prelex \bq'$ iff $q_i = q'_i\ \text{for all } i\in\{1,\dots,N\}$ or there exists an index $i^*\in\{1,\dots,N\}$ such that $q_i = q'_i$ for all $i < i^*$ and $q_{i^*} < q'_{i^*}$.
\end{definition}

\begin{definition}[Utility Representability, Utility Function~{\cite[Sec.~1]{Beardon2002}}]
    Let ${(\mathcal{Q},\preceq)}$ be a totally ordered set. We say that $(\mathcal{Q},\preceq)$ is \emph{representable by a utility function} if there exists a function $h:\mathcal{Q}\to\mathbb{R}$ such that
    \begin{equation*}
        q \preceq q' \iff h(q)\leq h(q'),\ \forall q,q'\in\mathcal{Q}.
    \end{equation*}
    The function $h$ is then called a \emph{utility function}.
\end{definition}

% ----------------------------------------------------------------------------

\subsection{Signal Temporal Logic}\label{sec:SignalTemporalLogicDefinition}

The syntax of STL is given by the grammar~\mbox{\cite[Sec.~2.1]{Bartocci2018}}
\begin{equation*}
    \varphi:= \mu \; \vert\; \lnot \varphi\; |\; \varphi_1 \land \varphi_2 \;|\; \varphi_1 \mathbf{U}_{I} \varphi_2 \;|\; \varphi_1 \mathbf{S}_{I} \varphi_2,
\end{equation*}
where $\mu$ denotes a predicate of the form $\mu := p(\by, k) \ge 0$, with $p: \mathbb{R}^{n_y \times (K+1)} \times \mathbb{N}_0 \rightarrow \mathbb{R}$. The expressions $\varphi$, $\varphi_1$, and $\varphi_2$ represent STL specifications, while $\lnot$ and $\land$ correspond to logical negation and conjunction, respectively. Disjunction is introduced as a derived operator $\varphi_1 \lor \varphi_2 := \lnot(\lnot \varphi_1 \land \lnot \varphi_2)$, and implication is defined as $\varphi_1 \Longrightarrow \varphi_2 := \lnot \varphi_1 \lor \varphi_2$. The temporal operator $\mathbf{U}_{I}$ specifies that $\varphi_1$ must hold until $\varphi_2$ becomes true at some discrete time $k$ in the interval $I := [\underline{k}, \overline{k}] \subseteq \mathbb{N}_0$, with $\underline{k} \leq \overline{k}$. The since operator $\mathbf{S}_{I}$ is defined equivalently. Other derived temporal operators are ${\mathbf{F}_{I}\varphi := \mathrm{True}\mathbf{U}_{I} \varphi}$ (\textit{eventually}), ${\mathbf{O}_{I}\varphi := \mathrm{True}\mathbf{S}_{I} \varphi}$ (\textit{once}), ${\mathbf{G}_{I}\varphi := \lnot \mathbf{F}_{I}\lnot\varphi}$ (\textit{globally}), and ${\mathbf{H}_{I}\varphi := \lnot \mathbf{O}_{I}\lnot\varphi}$ (\textit{historically}). When $I=[0, K]$, we omit $I$ from the operator. Let $(\by, k) \models \varphi$ indicate that trajectory $\by$ satisfies STL specification $\varphi$ at discrete time~$k$.

Next, we introduce a quantitative semantics, called robustness. The robustness $\eta^{\varphi}(\by,k): \mathbb{R}^{n_y \times (K+1)} \times \mathbb{N}_0 \rightarrow \overline{\mathbb{R}}$ characterizes the degree to which an STL specification $\varphi$ is satisfied or violated by a trajectory $\by$ at discrete time $k \in [0, K]$, where $\overline{\mathbb{R}} := \mathbb{R} \cup \{-\infty, +\infty\}$. We define it as follows~\cite[Sec. 2.2]{Bartocci2018}:
\begin{align*}
    \eta^{\mathrm{True}} (\by, k)                    & := +\infty,                                                                                                           \\
    \eta^{\mu} (\by, k)                              & := \predrob^{\mu}(\by, k),                                                                                            \\
    \eta^{\lnot\varphi}(\by, k)                      & := -\eta^{\varphi}(\by, k),                                                                                           \\
    \eta^{\varphi_1 \land \varphi_2}(\by, k)         & := \amin\big(\eta^{\varphi_1}(\by, k), \eta^{\varphi_2}(\by, k)\big),                                                 \\
    \eta^{\varphi_1\mathbf{U}_{I} \varphi_2}(\by, k) & := \amax_{k^\prime \in (k+I) \cap [0, K]}\big(\amin( \eta^{\varphi_2}(\by, k^\prime),                                 \\
                                                     & \hspace{0.55cm}\amin_{k^{\prime\prime} \in [k, k^\prime)}\left(\eta^{\varphi_1}(\by, k^{\prime\prime})\right)) \big), \\
    \eta^{\varphi_1\mathbf{S}_{I} \varphi_2}(\by, k) & := \amax_{k^\prime \in (k-I) \cap [0, K]}\big(\amin(\eta^{\varphi_2}(\by, k^\prime),                                  \\
                                                     & \hspace{0.55cm}\amin_{k^{\prime\prime} \in (k^\prime, k]}\left(\eta^{\varphi_1}(\by, k^{\prime\prime})\right)) \big),
\end{align*}
with $k \mathbin{\circ} I := \{\,k \circ k' \mid k' \in I\,\}$, and $\circ \in \{-,+\}$. Here, $\predrob: \mathbb{R}^{n_y \times (K+1)} \times \mathbb{N}_0 \rightarrow \mathbb{R}$ denotes a predicate robustness measure, and $\amin,\amax$ are custom minimum and maximum functions. Various definitions for $\predrob$, $\amin$, and $\amax$ exist, yielding different robustness measures that each exhibit distinct properties. In this work, we compare the selected subset of robustness measures shown in Tab.~\ref{tab:RobustnessDefinitions}. The corresponding definitions of $\predrob$ are presented in Tab.~\ref{tab:PredicateRobustnessDefinitions}. Our proposed space-left-time robustness for predicates $\predrob_{\text{space-left-time}}^{\mu}$ is discussed further in Sec.~\ref{sec:RobustnessMeasure}. The definitions for $\amin$ and $\amax$ are provided in Appendix~\ref{app:MinMaxDefinitions}. For instance, the space robustness uses the standard $\min$ and $\max$ operators.

\begin{table}[!tb]
    \caption{Definitions of the used robustness measures.}
    \renewcommand{\arraystretch}{1.1}
    \centering
    \begin{adjustbox}{max width=1.0\columnwidth}
        \begin{tabular}{llll}
            \toprule
            \textbf{Measure}                                                 & \textbf{Pred. Rob.} ($\predrob$)    & \textbf{Min} ($\amin$)   & \textbf{Max} ($\amax$)   \\
            \midrule
            Space~\cite{Donze2010} ($\eta_{\text{space}}$)                   & $\predrob_{\text{space}}$           & $\amin_{\text{std}}$     & $\amax_{\text{std}}$     \\
            Left-Time~\cite{Donze2010} ($\eta_{\text{left-time}}$)           & $\predrob_{\text{left-time}}$       & $\amin_{\text{std}}$     & $\amax_{\text{std}}$     \\
            Right-Time~\cite{Donze2010} ($\eta_{\text{right-time}}$)         & $\predrob_{\text{right-time}}$      & $\amin_{\text{std}}$     & $\amax_{\text{std}}$     \\
            Combined-Time~\cite{Rodionova2023} ($\eta_{\text{comb-time}}$)   & $\predrob_{\text{comb-time}}$       & $\amin_{\text{std}}$     & $\amax_{\text{std}}$     \\
            \midrule
            Duration~\cite{Finkeldei2025} ($\eta_{\text{dur}}$)              & $\predrob_{\text{space}}$           & $\amin_{\text{dur}}$     & $\amax_{\text{dur}}$     \\
            Duration-Severity~\cite{Finkeldei2025} ($\eta_{\text{dur-sev}}$) & $\predrob_{\text{space}}$           & $\amin_{\text{dur-sev}}$ & $\amax_{\text{dur-sev}}$ \\
            Smooth~\cite{Gilpin2021} ($\eta_{\text{smooth}}$)                & $\predrob_{\text{space}}$           & $\amin_{\text{smooth}}$  & $\amax_{\text{smooth}}$  \\
            AGM~\cite{Mehdipour2019a} ($\eta_{\text{agm}}$)                  & $\predrob_{\text{space}}$           & $\amin_{\text{agm}}$     & $\amax_{\text{agm}}$     \\
            New~\cite{Varnai2020} ($\eta_{\text{new}}$)                      & $\predrob_{\text{space}}$           & $\amin_{\text{new}}$     & $\amax_{\text{new}}$     \\
            Power-Mean~\cite{Mehdipour2025} ($\eta_{\text{pm}}$)             & $\predrob_{\text{space}}$           & $\amin_{\text{pm}}$      & $\amax_{\text{pm}}$      \\
            \midrule
            Space-Left-Time ($\eta_{\text{space-left-time}}$)                & $\predrob_{\text{space-left-time}}$ & $\amin_{\text{std}}$     & $\amax_{\text{std}}$     \\
            \bottomrule
        \end{tabular}
        \label{tab:RobustnessDefinitions}
    \end{adjustbox}
    \vspace{-4mm}
\end{table}

\begin{table*}[!tb]
    \caption{Predicate robustness definitions. Let $w \in \mathbb{R}_{>0}$ and $\sign(\kappa) := 1$ if $\kappa \ge 0$, else $-1$, with $\kappa \in \mathbb{R}$.}
    \renewcommand{\arraystretch}{1.2}
    \centering
    \begin{adjustbox}{max width=2.0\columnwidth}
        \begin{tabular}{l r@{ $\;:=\;$ } l}
            \toprule
            \textbf{Predicate Rob.}         & \multicolumn{2}{c}{\textbf{Definition}}                                                                                                                                                                                                                                                                                                      \\
            \midrule
            Space~\cite{Donze2010}          & $\predrob_{\text{space}}^{\mu}(\by, k)$                                                                                                      & $p_{\mu}(\by, k)$                                                                                                                                                                             \\
            Left-Time~\cite{Donze2010}      & $\predrob_{\text{left-time}}^{\mu}(\by, k)$                                                                                                  & $\sign(p_{\mu}(\by, k))\,  \max_{\tau \in \mathbb{N}_0} \tau \; \text{s.t.\ } \sign(p_{\mu}(\by, k')) = \sign(p_{\mu}(\by, k)), \forall k' \in [k, k+\tau], k+\tau \leq K$                    \\
            Right-Time~\cite{Donze2010}     & $\predrob_{\text{right-time}}^{\mu}(\by, k)$                                                                                                 & $\sign(p_{\mu}(\by, k))\, \max_{\tau \in \mathbb{N}_0} \tau \; \text{s.t.\ } \sign(p_{\mu}(\by, k')) = \sign(p_{\mu}(\by, k)), \forall k' \in [k-\tau, k], k-\tau \geq 0$                     \\
            Comb.-Time~\cite{Rodionova2023} & $\predrob_{\text{comb-time}}^{\mu}(\by, k)$                                                                                                  & $\sign(p_{\mu}(\by, k))\, \max_{\tau \in \mathbb{N}_0} \tau \; \text{s.t.\ } \sign(p_{\mu}(\by, k')) = \sign(p_{\mu}(\by, k)), \forall k' \in [k-\tau, k+\tau], k-\tau \geq 0, k+\tau \leq K$ \\
            \midrule
            Space-Left-Time                 & $\predrob_{\text{space-left-time}}^{\mu}(\by, k)$                                                                                            & $\sign(p_{\mu}(\by, k))\, \max_{\tau \in \mathbb{N}_0} \left( w \frac{\tau}{K} + |p_{\mu}(\by, k+\tau)| \right)$                                                                              \\
                                            & \multicolumn{2}{l}{\hspace{6cm} $\text{s.t.\ } \sign(p_{\mu}(\by, k')) = \sign(p_{\mu}(\by, k)), \forall k' \in [k, k+\tau], k+\tau \leq K$}                                                                                                                                                                                                 \\
            \bottomrule
        \end{tabular}
        \label{tab:PredicateRobustnessDefinitions}
    \end{adjustbox}
    \vspace{-3mm}
\end{table*}

% -------------------------------------------------------------------------------
\subsection{Preemptive Lexicographic Optimization}\label{sec:PreemptiveLexicographicOptimization}

One common approach to solving lexicographic optimization problems is the \emph{preemptive} scheme, in which cost functions are optimized sequentially in decreasing order of priority. Let $\bc(\by): \mathbb{R}^{n_y \times (K+1)} \rightarrow \overline{\mathbb{R}}^N$ be a vector-valued cost function and let $c_i(\by)$ denote its $i$-th component. Given a fixed initial state $\hat{\bx}_0 \in \mathbb{R}^{n_x}$, the preemptive lexicographic optimization problem is formulated as a sequence of $N$ optimization problems~\cite[Sec.~3.3]{Marler2004}:
\begin{equation}
    \label{eq:LexMinProblem}
    \begin{aligned}
        \min_{\bu, \bx, \by} \; & c_i(\by)                                             \\
        \text{s.t. }\;          & \bx_0 = \hat{\bx}_0,                                 \\
                                & \bx_{k+1} = f(\bx_k, \bu_k), \forall k \in [0, K-1], \\
                                & \by_k = g(\bx_k, \bu_k), \forall k \in [0, K],       \\
                                & c_{j}(\by) = c_{j}(\by^{*}_{j}),                     \\
                                & j = 1,2,\ldots,i-1 \quad \text{when } i>1,           \\
                                & i = 1,2,\ldots,N.
    \end{aligned}
\end{equation}
The equality $c_{j}(\by) = c_{j}(\by^{*}_{j})$ encodes that the $i$-th optimization problem optimizes the $i$-th cost function $c_i(\by)$ subject to the optimal cost values $c_{j}(\by^{*}_{j})$ obtained in the previous $j=1,2,\ldots,i-1$ optimization problems. Thus, each subsequent problem introduces additional constraints and increases complexity, making the overall optimization computationally expensive.

% -------------------------------------------------------------------------------
\subsection{Deterministic MPPI Solver}\label{sec:PathIntegralSolver}

In this work, we modify a deterministic MPPI solver such that it is not restricted to quadratic costs of the input. The underlying deterministic MPPI solver is a variant of the MPPI approach presented in~\cite{Williams2018} tailored to solve deterministic programs~\cite[Sec. IV-B]{Halder2025ICRA}:
\begin{equation}
    \label{eq:PIOptimizationProblem}
    \begin{aligned}
        \min_{\bu, \bx, \by} \; & s(\by) + \sum_{k=0}^{K} \frac{1}{2}\,\bu_k^\top R \bu_k \\
        \text{s.t.}\;           & \bx_0 = \hat{\bx}_0,                                    \\
                                & \bx_{k+1} = f(\bx_k, \bu_k), \forall k \in [0, K-1],    \\
                                & \by_k = g(\bx_k, \bu_k), \forall k \in [0, K],
    \end{aligned}
\end{equation}
with $s: \mathbb{R}^{n_y \times (K+1)} \rightarrow \mathbb{R}$ and $\hat{\bx}_0 \in \mathbb{R}^{n_x}$. The system input is penalized quadratically with $R \in \mathbb{R}^{n_u\times n_u}$, given by ${R := \lambda \bSigma^{-1}}$, where $\bSigma\in\mathbb{R}^{n_u\times n_u}$ is a positive definite covariance matrix and $\lambda\in\mathbb{R}_{>0}$ is a tuning parameter called \emph{temperature}.

In the deterministic MPPI algorithm (cf.~\cite[Sec. IV-B]{Halder2025ICRA}), $\lambda$ and $\bSigma$ are both scaled by a factor $\beta^2$, where $\beta \in (0,1)$ is reduced over successive iterations. As $\beta\rightarrow 0$, the algorithm returns the optimal input $\bu^*$~\cite[Thm.~1]{Homburger2025}. Since $\lambda$ and $\bSigma$ are scaled identically, the scalar $\beta^2$ cancels out, meaning the quadratic input cost weighting $R$ remains constant and is independent of $\beta$ for any $\beta > 0$:
\begin{equation}\label{eq:SOTAScaling}
    (\beta^2 \lambda)\, (\beta^2 \bSigma)^{-1} = \lambda \bSigma^{-1} = R.
\end{equation}
We will later modify this behavior and scale $\lambda$ and $\bSigma$ at different rates, allowing us to eliminate the quadratic input cost term in~\eqref{eq:PIOptimizationProblem} as $\beta \rightarrow 0$.

\section{Problem Statement}\label{sec:ProblemStatement}

Our minimum-violation motion planner is embedded within a typical motion planning framework, as depicted in Fig.~\ref{fig:PlanningApproachOverview}.
\begin{figure}[!htb]
    \centering
    \begin{adjustbox}{max width=0.8\columnwidth}
        \def\svgwidth{\columnwidth} % DO NOT CHANGE
        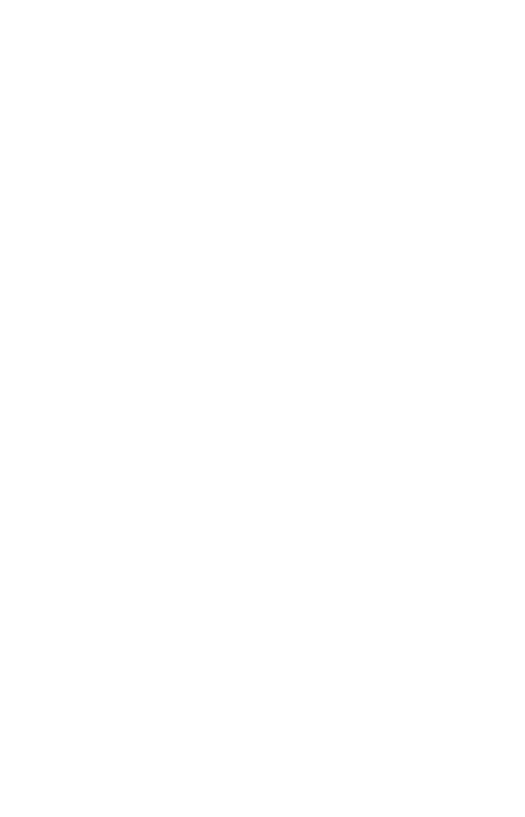
    \end{adjustbox}
    \caption{Overview of the motion planning framework. Figure derived from~\cite[Fig.~4]{Lin2025}.}
    \label{fig:PlanningApproachOverview}
    \vspace{-5mm}
\end{figure}

The planning process begins with a nominal planner that attempts to generate a trajectory $\by_{\text{nom}}$ that satisfies all specifications simultaneously (see, e.g., \cite{Lercher2025}). If this fails, a repairer may be used to generate a repaired trajectory $\by_{\text{rep}}$ (see, e.g., \cite{Lin2025, Tong2025}). However, due to a) other traffic participants violating traffic rules, b) overly restrictive specifications, or c) incorrect assumptions, a trajectory that satisfies all specifications may not always exist. In this case, minimum-violation motion planning is employed to ensure continued vehicle operation by generating a trajectory $\by_{\text{min-viol}}$ that minimizes specification violations according to a violation measure and a prioritization. Due to the intentional violation of the specifications, such driving maneuvers should be approved by a control instance (e.g., an offline safety operator or an automated verification system). Approved trajectories are executed by the vehicle, while unapproved trajectories trigger the execution of a fail-safe maneuver $\by_{\text{fail-safe}}$ that at least ensures collision avoidance (see, e.g., \cite{Pek2021}). In this paper, we focus on the minimum-violation motion planner and subsequently introduce a corresponding planning problem.

Let $\Phi := \{\varphi_1,\varphi_2,\dots,\varphi_N\}$ be a finite set of STL specifications. To quantify the violation of a specification $\varphi \in \Phi$, we define a cost function $c_{\varphi}(\by)$ based on the negative part of its robustness:
\begin{equation} \label{eq:ContinuousCost}
    c_{\varphi}(\by) := -\min\bigl(0,\eta^{\varphi}(\by,0)\bigr).
\end{equation}
In our minimum-violation motion planning context, this definition requires the robustness measure $\eta^{\varphi}$ to be at least \emph{sound}, i.e., $\eta^{\varphi}(\by,k) \geq 0 \Longrightarrow (\by,k)\models \varphi$. Soundness guarantees that we always penalize violations, i.e., $(\by,0) \not\models \varphi \Longrightarrow c_{\varphi}(\by) > 0$, but it does not exclude the possibility that satisfied trajectories are also penalized. If the robustness measure is additionally \emph{reverse sound}, i.e., $\eta^{\varphi}(\by,k) < 0 \Longrightarrow (\by,k) \not\models \varphi$, then we exclusively penalize violating trajectories, i.e., $(\by,0) \not\models \varphi \Longleftrightarrow c_{\varphi}(\by) > 0$. All robustness measures listed in Tab.~\ref{tab:RobustnessDefinitions} are at least sound, and some are additionally reverse sound.

Next, we equip $\Phi$ with a total order $\pres$, such that $(\Phi,\pres)$ forms a finite totally ordered set. Without loss of generality, we index the specifications such that
\begin{equation*}
    \varphi_1 \pres \varphi_2 \pres \cdots \pres \varphi_N.
\end{equation*}
Let ${\bc: \mathbb{R}^{n_y \times (K+1)} \to \overline{\mathbb{R}}^N}$ be the corresponding vector-valued cost function defined by
\begin{align}
    \bc(\by) & := [c_{\varphi_1}(\by), c_{\varphi_2}(\by), \dots, c_{\varphi_N}(\by)]^\top.     \label{eq:ContCostVector}
\end{align}
We compare trajectories lexicographically, which induces a total order $\precont$ on the set of output trajectories $\mathcal{Y}$ according to
\begin{align}
    \by \precont \by' & \Longleftrightarrow \bc(\by) \prelex \bc(\by'),
\end{align}
with $\by, \by' \in \mathcal{Y}$. Intuitively, this expresses that a trajectory $\by$ precedes (i.e., is at least as good as) a trajectory $\by'$ iff its cost vector is lexicographically less than or equal to that of~$\by'$. In contrast to weighted-sum formulations, this lexicographic formulation does not require manual weight tuning to ensure that higher-priority specifications dominate lower-priority ones.

\begin{definition}[Set of Optimal Trajectories~{\cite[Def.~1.23]{Ehrgott2005}}]\label{def:Optimality}
    Let $(\Phi,\pres)$ be a totally ordered set of STL specifications and let $\mathcal{Y}$ denote the set of output trajectories. The \emph{set of optimal trajectories} is defined as
    \begin{equation*}
        \mathcal{Y}_{\text{cont}}^* :=
        \Bigl\{
        \by \in \mathcal{Y}
        \,\Big|\,
        \forall \by' \in \mathcal{Y}:\;
        \by \precont \by'
        \Bigr\}.
    \end{equation*}
\end{definition}

\vspace{1mm}
\begin{definition}[Planning Problem]\label{def:PlanningProblem}
    Given a totally ordered set of STL specifications $(\Phi,\pres)$ and a set $\mathcal{Y}$ of output trajectories, the \emph{planning problem} consists of
    finding a trajectory $\by^*_{\text{cont}} \in \mathcal{Y}_{\text{cont}}^*$.
\end{definition}

However, this continuous problem formulation has two major drawbacks. First, lexicographic comparison is typically overly sensitive to small continuous deviations, e.g., for $[3.005, 100]^\top \preslex[3.006, 2]^\top$, a negligible cost deviation of $0.001$ of the higher-priority specification determines the order, and makes the cost values of lower-priority specifications irrelevant. Second, lexicographic orders on continuous domains generally cannot be represented by scalar utility functions~\cite{Debreu1983}, preventing the use of standard single-objective optimization methods. To overcome these limitations, we propose a respective discretization approach in Sec.~\ref{sec:Solution}.

\section{Solution}\label{sec:Solution}

Subsequently, we present our solution to the planning problem in Def.~\ref{def:PlanningProblem}. We first introduce a robustness measure that combines spatial and temporal violations. Next, we demonstrate how to discretize the continuous cost, analyze the effects of this discretization, and finally employ a utility-based reformulation to transform the discretized multi-objective lexicographic optimization problem into an equivalent single-objective optimization problem.

\subsection{Robustness Measure}\label{sec:RobustnessMeasure}

The standard measures of space and time robustness $\eta_{\text{space}}$, $\eta_{\text{left-time}}$, $\eta_{\text{right-time}}$, and $\eta_{\text{comb-time}}$ are not suitable for minimum-violation motion planning, because they quantify either the magnitude of a violation or its duration. This is problematic because trajectories describing different maneuvers can become indistinguishable, as shown next.

\begin{runningexample}
    Consider the scenario depicted in Fig.~\ref{fig:RunningExampleScenario}, where the ego vehicle approaches an obstacle that is broken down. We introduce STL specifications for collision avoidance, making progress, and in-lane driving:
    \begin{align*}
        \varphi_{\text{coll}} & := \mathbf{G}(\lnot \mu_{\text{collision}}),                           \\
        \varphi_{\text{prog}} & := \mathbf{F}(\mu_{\text{make-progress}}),                             \\
        \varphi_{\text{lane}} & := \mathbf{G}(\mu_{\text{left-bound}} \land \mu_{\text{right-bound}}).
    \end{align*}
    The space robustness for $\varphi_{\text{lane}}$ has the form
    \begin{equation*}
        \eta_{\text{space}}^{\varphi_{\text{lane}}}(\by,k) = \min_{k\in [0,K]}(\min(\predrob_{\text{space}}^{\mu_{\text{left-bound}}}(\by,k), \predrob_{\text{space}}^{\mu_{\text{right-bound}}}(\by,k))).
    \end{equation*}
    Since the trajectories $\by_a$ and $\by_b$ have the same maximum lateral deviation from the left bound of the lane, their space robustness is equal, i.e., $\eta_{\text{space}}^{\varphi_{\text{lane}}}(\by_a,0) = \eta_{\text{space}}^{\varphi_{\text{lane}}}(\by_b,0)$, despite describing different maneuvers.
\end{runningexample}

Thus, we introduce a custom robustness measure, denoted by $\eta_{\text{space-left-time}}$, that combines spatial and temporal violations. It utilizes the standard $\amin_{\text{std}}$ and $\amax_{\text{std}}$ operators (see Tab.~\ref{tab:RobustnessDefinitions} and Appendix~\ref{app:MinMaxDefinitions}) and a custom predicate robustness function $\predrob_{\text{space-left-time}}$ (see final row in Tab.~\ref{tab:PredicateRobustnessDefinitions}). Inspired by the space-time robustness proposed in~\cite{Donze2010}, it is intuitively defined as the maximum sum of the space robustness and the left-time robustness. Fig.~\ref{fig:FirstRunningExample}~(a) provides a visualization.
\begin{figure}[!htb]
    \centering
    \begin{subfigure}[b]{0.48\columnwidth}
        \centering
        {\footnotesize
            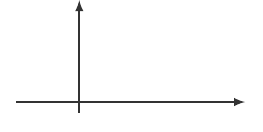
        }
        \vspace{-2mm}
        \caption{Space-left-time robustness.}
    \end{subfigure}
    \hfill
    \begin{subfigure}[b]{0.48\columnwidth}
        \centering
        {\footnotesize
            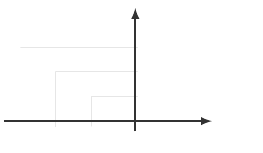
        }
        \vspace{-2mm}
        \caption{Cost functions.}
    \end{subfigure}
    \vspace{-1mm}
    \caption{Example visualizations of predicate robustness and cost functions. (a)~Space-left-time robustness for a predicate $\mu$ at discrete time $k$. (b)~Continuous and discretized cost functions for the specification $\varphi_{\text{lane}}$.}
    \label{fig:FirstRunningExample}
    \vspace{-5mm}
\end{figure}

The weight $w$ in the definition of the space-left-time robustness for predicates $\predrob_{\text{space-left-time}}^{\mu}$ allows one to tune the trade-off between spatial and temporal violations. Specifically, a small $w$ biases the planner to minimize the spatial magnitude of specification violation, while a large $w$ biases the planner to minimize the duration of specification violation. Tuning $w$ is straightforward, since only the value range of the respective predicate function $p_\mu(\by, k)$ needs to be considered. Furthermore, normalizing the temporal violation by the planning time horizon $K$ ensures that its value remains independent of the time discretization. Also note that $\eta_{\text{space-left-time}}$ maintains the properties of soundness and reverse soundness of $\eta_{\text{space}}$ and $\eta_{\text{left-time}}$.

The robustness measures of the second group in Tab.~\ref{tab:RobustnessDefinitions} implicitly combine spatial and temporal violations since they use alternative definitions of the minimum and maximum operators. This works for simple temporal specifications, e.g., $\mathbf{G}_I(\mu)$, as it provides an average violation of the predicate $\mu$ over the time interval $I$. However, for logically composed or nested specifications, e.g., $\mathbf{G}(\mu) \land \mathbf{F}(\mu')$, it yields an unintuitive violation measure, since the violation of time is coupled with the violation between specifications. As an example, using the power-mean robustness for a conjunction where one specification is strongly violated ($-100$) and the other specification is marginally satisfied ($0.1$) results in $\eta_{\text{pm}} = -50$ (using $\nu_5 = 1.0$). This severely underrepresents the actual magnitude of the violation and obscures the influence of the involved sub-specifications. In contrast, our proposed $\eta_{\text{space-left-time}}$ relies on the standard $\amin_{\text{std}}$ and $\amax_{\text{std}}$ operators and combines spatial and temporal violations exclusively at the predicate level. This enables a comprehensible, traceable quantification of violations.

% --------------------------------------------------------------
\subsection{Cost Discretization}

To overcome the drawbacks of the continuous problem formulation in Def.~\ref{def:PlanningProblem}, we discretize the continuous cost function $c_{\varphi}(\by)$. This reduces the sensitivity of lexicographic comparisons to negligible cost variations and allows the cost functions to better reflect the intended severity of specification violations, e.g., exceeding a speed limit by \SI{0.5}{\km\per\hour} versus \SI{1.0}{\km\per\hour} might be considered practically equivalent, whereas a violation of \SI{10}{\km\per\hour} is significantly more severe. Finally, the discretization enables transforming the multi-objective optimization problem into a single-objective optimization problem.

We introduce the discretized cost $\tilde c_{\varphi}(\by)$ by non-uniform quantization of the continuous cost $c_{\varphi}(\by)$. Let $m_\varphi \in \mathbb{N}$ be the number of violation intervals and let $\xi \in \{0, \dots, m_\varphi\}$ be an index. We introduce strictly increasing positive thresholds $\alpha_\xi \in \mathbb{R}_{>0}$ for $\xi \in \{1, \dots, m_\varphi-1\}$, such that ${\alpha_1 < \alpha_2 < \cdots < \alpha_{m_\varphi-1}}$. By additionally defining $\alpha_0 := 0$, the intervals $I_{\xi}$ are constructed as follows:
\begin{equation}\label{eq:DefinitionIntervals}
    I_{\xi} :=
    \begin{cases}
        [0,0],                          & \text{if } \xi = 0,               \\
        (\alpha_{\xi-1}, \alpha_{\xi}], & \text{if } 1 \le \xi < m_\varphi, \\
        (\alpha_{m_\varphi-1}, \infty), & \text{if } \xi = m_\varphi.
    \end{cases}
\end{equation}
The unique assignment defines the discretized cost:
\begin{equation}\label{eq:uniqueAssignment}
    \tilde c_{\varphi}(\by) = \xi \quad \Longleftrightarrow \quad c_{\varphi}(\by) \in I_{\xi}.
\end{equation}
It is important to note that $\tilde c_{\varphi}(\by)$ is bounded, i.e., $\tilde c_{\varphi}(\by) \in \{0,1,\dots, m_{\varphi} \}$.

\begin{runningexample}
    For the specifications $\varphi_{\text{coll}}, \varphi_{\text{prog}}$, and $\varphi_{\text{lane}}$, we define the discretized cost functions using $m_{\varphi_{\text{coll}}} =1,  m_{\varphi_{\text{prog}}} = 6$, and $m_{\varphi_{\text{lane}}} = 3$. Fig.~\ref{fig:FirstRunningExample}~(b) shows $c_{\varphi_{\text{lane}}}(\by)$ and $\tilde c_{\varphi_{\text{lane}}}(\by)$ by example.
\end{runningexample}

Equivalent to the vector-valued continuous cost, we define a vector-valued discrete cost function $\tilde \bc: \mathbb{R}^{n_y \times (K+1)} \to \mathbb{N}_0^N$:
\begin{align}
    \tilde \bc(\by) := [\tilde c_{\varphi_1}(\by), \tilde c_{\varphi_2}(\by), \dots, \tilde c_{\varphi_N}(\by)]^\top, \label{eq:DiscreteCostVector}
\end{align}
based on which the total order $\predisc$ is induced on the set of output trajectories $\mathcal{Y}$ according to
\begin{equation}\label{eq:InducedPartialOrderOriginal}
    \by \predisc \by' \Longleftrightarrow \tilde \bc(\by) \prelex \tilde \bc(\by'),
\end{equation}
with $\by, \by' \in \mathcal{Y}$. Using $\predisc$, the set of optimal trajectories $\mathcal{Y}_{\text{disc}}^*$ and the optimal trajectory $\by^*_{\text{disc}}$ of the respective planning problem are defined analogously to Def.~\ref{def:Optimality} and Def.~\ref{def:PlanningProblem}.

\begin{runningexample}
    Based on Fig.~\ref{fig:RunningExampleScenario}, the order of the specifications is $\varphi_{\text{coll}} \pres \varphi_{\text{prog}} \pres \varphi_{\text{lane}}$, i.e., we prioritize collision avoidance over making progress and making progress over staying in lane. Example discrete cost for the trajectories from Fig.~\ref{fig:RunningExampleScenario} are as follows:
    \begin{equation*}
        \overset{\tilde\bc(\by_b)=}{\begin{bmatrix}0\\1\\1\end{bmatrix}} \prelex
        \overset{\tilde\bc(\by_a)=}{\begin{bmatrix}0\\1\\2\end{bmatrix}} \prelex
        \overset{\tilde\bc(\by_c)=}{\begin{bmatrix}0\\1\\3\end{bmatrix}} \prelex
        \overset{\tilde\bc(\by_e)=}{\begin{bmatrix}0\\5\\0\end{bmatrix}} \prelex
        \overset{\tilde\bc(\by_d)=}{\begin{bmatrix}1\\4\\0\end{bmatrix}}.
    \end{equation*}
    The induced order on the trajectories is $\by_b \predisc \by_a \predisc \by_c \predisc \by_e \predisc \by_d$ and thus, the best trajectory is ${\by^*_{\text{disc}} = \by_b}$.
\end{runningexample}

\subsection{Effects of Discretization}\label{sec:EffectsofDiscretization}

The relation between the set of optimal trajectories of the continuous and the discretized problem, $\mathcal{Y}_{\text{cont}}^*$ and $\mathcal{Y}_{\text{disc}}^*$, is governed by the orders $\precont$ and $\predisc$ on $\mathcal{Y}$. If two trajectories have identical continuous cost vectors, then they also have identical discrete cost vectors, since each component is mapped deterministically to a discrete interval index (cf.~\eqref{eq:uniqueAssignment}). Formally, it holds $\by \sim_{\text{cont}} \by' \Longrightarrow \by \sim_{\text{disc}} \by'$, with $\by, \by' \in \mathcal{Y}$. However, discretization may map distinct continuous costs to the same discrete value. Thus, the critical case is strict continuous precedence, i.e., $\by \prescont \by'$, which yields three possible outcomes for the resulting discrete order:

\begin{enumerate}
    \item \emph{Order preservation:} If $\by \prescont \by' \Longrightarrow \by \presdisc \by'$ for all $\by, \by' \in \mathcal{Y}$, then the sets of optimal trajectories are identical, i.e., $\mathcal{Y}_{\text{cont}}^* = \mathcal{Y}_{\text{disc}}^*$;
    \item \emph{Order relaxation:} If $\by \prescont \by' \Longrightarrow \by \presdisc \by' \lor \by \sim_{\text{disc}} \by'$ for all $\by, \by' \in \mathcal{Y}$, then the set of optimal trajectories of the discretized problem still contains the continuous optima, i.e., $\mathcal{Y}_{\text{cont}}^* \subseteq \mathcal{Y}_{\text{disc}}^*$;
    \item \emph{Order inversion:} If there exist $\by, \by' \in \mathcal{Y}$ such that ${\by \prescont \by'} \Longrightarrow \by' \presdisc \by$, then the continuous optima may not be contained in the discretized optima, i.e., $\mathcal{Y}_{\text{cont}}^* \nsubseteq \mathcal{Y}_{\text{disc}}^*$ may occur.
\end{enumerate}

To understand when these outcomes occur, let $i^*_{\text{cont}} \in \{1,\dots, N\}$ be the smallest index at which the continuous cost vectors differ, i.e., $c_{\varphi_i}(\by) = c_{\varphi_i}(\by')$ for all $i < i^*_{\text{cont}}$ and $c_{\varphi_{i^*_{\text{cont}}}}(\by) < c_{\varphi_{i^*_{\text{cont}}}}(\by')$ (cf. Def.~\ref{def:LexicographicallyOrderedSet}). If these costs lie in different violation intervals, i.e., $c_{\varphi_{i^*_{\text{cont}}}}(\by) \in I_{\xi}$ and $c_{\varphi_{i^*_{\text{cont}}}}(\by') \in I_{\xi'}$ with $\xi < \xi'$, then $\by \presdisc \by'$ is preserved. If, in contrast, both costs lie in the same interval, i.e., $c_{\varphi_{i^*_{\text{cont}}}}(\by), c_{\varphi_{i^*_{\text{cont}}}}(\by') \in I_{\xi}$ for some $\xi$, then the distinction at index $i^*_{\text{cont}}$ is lost. The resulting order between $\by$ and $\by'$ is then determined solely by lower-priority specifications $\varphi_{i^*_{\text{cont}}+1}, \dots, \varphi_N$ and may therefore be preserved ($\by \presdisc \by'$), relaxed to a tie ($\by \sim_{\text{disc}} \by'$), or even inverted ($\by' \presdisc \by$). However, the effect of lower-priority specifications cannot be assessed \textit{a priori} in our highly non-smooth and non-convex problem setup. Nevertheless, our numerical evaluations in Sec.~\ref{sec:NumericalEffectsOfDiscretization} demonstrate that if the intention is to achieve a close approximation of the continuous lexicographic optimum in practical applications, the impact of order relaxation and order inversion can be effectively mitigated through an appropriate discretization granularity and distribution of violation intervals.

\begin{runningexample}
    Consider the trajectories $\by_a$ and $\by_c$ in Fig.~\ref{fig:RunningExampleScenario} with the continuous and discretized cost shown below, and let a violation interval for $\varphi_{\text{prog}}$ be $I_1 = [5.0, 7.0)$:
    \begin{equation*}
        \overset{\bc(\by_c)=}{\begin{bmatrix}0.0\\\mathbf{5.8}\\6.3\end{bmatrix}} \prelex
        \overset{\bc(\by_a)=}{\begin{bmatrix}0.0\\\mathbf{6.5}\\3.2\end{bmatrix}},\hspace{8mm}
        \overset{\tilde\bc(\by_c)=}{\begin{bmatrix}0\\1\\\mathbf{3}\end{bmatrix}} \succlex
        \overset{\tilde\bc(\by_a)=}{\begin{bmatrix}0\\1\\\mathbf{2}\end{bmatrix}}.
    \end{equation*}
    Since $c_{\varphi_{\text{prog}}}(\by_a), c_{\varphi_{\text{prog}}}(\by_c) \in I_1$, the discretization removes the distinction for $\varphi_{\text{prog}}$. The order in the discrete cost case is therefore determined by the lower-priority specification $\varphi_{\text{lane}}$, and consequently, $\by_c \precont \by_a$, whereas $\by_a \predisc \by_c$.
\end{runningexample}

% --------------------------------------------------------------
\subsection{Utility-based Reformulation}\label{sec:UtilityBasedReformulation}

In our problem setup, it is computationally impractical to use the preemptive scheme for lexicographic optimization problems, which requires solving a sequence of optimization problems, because our cost functions are highly non-smooth, non-convex, and expensive to evaluate. Therefore, we seek to optimize a single scalar utility function $s(\by)$ instead. While lexicographic orders on continuous spaces (e.g., $\mathbb{R}^N$) generally do not admit a real-valued utility representation~\cite{Debreu1983}, the strict finiteness of both the set of specifications $\Phi$ and the values of each discrete cost component $\{0,1,\dots,m_{\varphi}\}$ allows us to construct a scalar utility representation.

Inspired by mixed-radix representations~\cite{Kohli2007}, we construct the utility function $s(\by)$ by assigning disjoint bit ranges to the individual cost components according to their priorities. This preserves the lexicographic order on the trajectories and allows each specification to be discretized independently, without having to account for its effects on other specifications. Formally, this is expressed as
\begin{equation}\label{eq:ScalarizationFunction}
    \begin{aligned}
        s(\by) & := \sum_{i=1}^{N} \tilde c_{\varphi_i}(\by)\, 2^{B_i}, \text{ with} \\
        B_i    & := \begin{cases}
                        \sum_{j=i+1}^{N} b_j & \text{if } i < N, \\
                        0                    & \text{if } i = N,
                    \end{cases}                         \\
        b_i    & := \left\lceil \log_2(m_{\varphi_i}+1) \right\rceil.
    \end{aligned}
\end{equation}
The term $b_i$ denotes the word width needed to represent the specification $\varphi_i$, i.e., the number of Bits required to represent the maximum discrete cost value that a specification can take. The factor $2^{B_i}$ acts as a bit shift that strictly separates priority levels. Consequently, conversions between $\tilde \bc(\by)$ and $s(\by)$ reduce to simple bitwise operations, making the conversion fast and computationally efficient. It is worth noting that the exponential structure of~(\ref{eq:ScalarizationFunction}) may lead to large integer values. However, this is no limitation for practical applications, since arbitrary-precision integer computation libraries are widely available.

\begin{runningexample}
    Using the word widths $b_1 = 1$, $b_2 = 3$, and $b_3 = 2$, the discrete cost $\tilde \bc(\by_d) = [\,\BitBoxA{1},\, \BitBoxB{4},\, \BitBoxC{0}\,]^{\top}$ yields the following scalar cost and equivalent binary representation:
    \begin{equation*}
        s(\by_d) = \BitBoxA{1}\cdot 2^{3+2} + \BitBoxB{4}\cdot 2^{2} + \BitBoxC{0} = 48 \;\equiv\;
        \BitBoxA{1}\,
        \overset{\mathclap{b_2+b_3}}{%
            \overleftarrow{%
                \BitBoxB{100}\,
                \underset{\mathclap{b_3}}{%
                    \underleftarrow{%
                        \BitBoxC{00}%
                    }%
                }%
            }%
        }\, .
    \end{equation*}
\end{runningexample}

\begin{lemma}[Order Representation]\label{lem:OrderRepresentation}
    For any output trajectories $\by,\by'\in\mathcal{Y}$, it holds
    \begin{equation*}
        \tilde\bc(\by)\prelex \tilde\bc(\by') \Longleftrightarrow s(\by)\le s(\by').
    \end{equation*}
\end{lemma}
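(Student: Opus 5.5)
The plan is to prove the two directions separately, using one key bound. That bound says the tail of the sum in~\eqref{eq:ScalarizationFunction} is always smaller than the weight of the preceding priority level. Since $\tilde c_{\varphi_i}(\by) \in \{0,\dots,m_{\varphi_i}\}$ and $b_i = \lceil \log_2(m_{\varphi_i}+1)\rceil$, we have $m_{\varphi_i} \le 2^{b_i}-1$. Hence, for any two trajectories, the difference of the $i$-th components satisfies $|\tilde c_{\varphi_i}(\by)-\tilde c_{\varphi_i}(\by')| \le 2^{b_i}-1$. I would then show by a telescoping argument, using $B_{j} = B_{j+1}+b_{j+1}$ and $B_N=0$, that for every $i^*$
\begin{equation*}
    \sum_{j=i^*+1}^{N} \bigl(2^{b_j}-1\bigr)2^{B_j} = \sum_{j=i^*+1}^{N}\bigl(2^{B_{j-1}} - 2^{B_j}\bigr) = 2^{B_{i^*}} - 1 < 2^{B_{i^*}}.
\end{equation*}
The step I expect to need the most care is this index bookkeeping: checking that $B_j + b_j = B_{j-1}$ holds for all $j\ge 2$, including the boundary case $j=N$.

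\emph{Forward direction.} If $\tilde\bc(\by)=\tilde\bc(\by')$, then $s(\by)=s(\by')$ trivially. Otherwise, by Def.~\ref{def:LexicographicallyOrderedSet} there is a first index $i^*$ with equal components before it and $\tilde c_{\varphi_{i^*}}(\by) \le \tilde c_{\varphi_{i^*}}(\by')-1$, since the values are integers. Then
\begin{equation*}
    s(\by')-s(\by) \ge 2^{B_{i^*}} - \sum_{j>i^*}\bigl|\tilde c_{\varphi_j}(\by')-\tilde c_{\varphi_j}(\by)\bigr|\,2^{B_j} \ge 2^{B_{i^*}} - \bigl(2^{B_{i^*}}-1\bigr) = 1 > 0,
\end{equation*}
so $s(\by) < s(\by')$. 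In particular, this shows that $\tilde\bc(\by)\preslex\tilde\bc(\by')$ implies $s(\by)<s(\by')$.

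\emph{Reverse direction.} I would argue by contraposition, using totality of $\prelex$ (Def.~\ref{def:TotallyOrderedSet}). If $\tilde\bc(\by)\prelex\tilde\bc(\by')$ fails, then $\tilde\bc(\by')\preslex\tilde\bc(\by)$ with distinct vectors. The strict part of the forward argument, applied with the roles of $\by$ and $\by'$ swapped, gives $s(\by')<s(\by)$, which contradicts $s(\by)\le s(\by')$.

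Alternatively, the result follows from injectivity of the mixed-radix encoding. Equality $s(\by)=s(\by')$ forces $\tilde\bc(\by)=\tilde\bc(\by')$, because the bit ranges $[B_i, B_i+b_i)$ are disjoint and each digit fits within its range.
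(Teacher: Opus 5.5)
Your proof is correct and follows the paper's argument: the forward direction rests on the same tail bound $\sum_{j=i^*+1}^{N}(2^{b_j}-1)2^{B_j} = 2^{B_{i^*}}-1$, showing that the leading term dominates; the paper states this bound without the telescoping justification via $B_{j-1}=B_j+b_j$ that you supply. For the reverse direction, the paper only appeals to uniqueness of the base representation, and your contraposition (totality of the lexicographic order plus the strict forward inequality) is a more explicit version of that step.
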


\noindent A formal proof of Lem.~\ref{lem:OrderRepresentation} is provided in Appendix~\ref{app:ProofOrderRepresentation}. Analogously to (\ref{eq:InducedPartialOrderOriginal}), we can infer
\begin{equation*}\label{eq:InducedPartialOrderAdapted}
    \by \predisc \by' \Longleftrightarrow s(\by) \leq s(\by'),
\end{equation*}
with $\by, \by' \in \mathcal{Y}$. Using this utility representation, we can formulate a single-objective problem that is equivalent to the discretized lexicographic problem:
\begin{equation}
    \label{eq:LexMinProblemScalar}
    \begin{aligned}
        \min_{\bx, \bu, \by} \; & s(\by)                                               \\
        \text{s.t. }\;          & \bx_0 = \hat{\bx}_0,                                 \\
                                & \bx_{k+1} = f(\bx_k, \bu_k), \forall k \in [0, K-1], \\
                                & \by_k = g(\bx_k, \bu_k), \forall k \in [0, K].       \\
    \end{aligned}
\end{equation}

\begin{runningexample}
    The scalar cost values are as follows:
    \begin{equation*}
        \underbrace{5}_{=\, s(\by_b)} \leq
        \underbrace{6}_{=\, s(\by_a)} \leq
        \underbrace{7}_{=\, s(\by_c)} \leq
        \underbrace{20}_{=\, s(\by_e)} \leq
        \underbrace{48}_{=\, s(\by_d)}.
    \end{equation*}
    The induced order on the trajectories is again $\by_b \predisc \by_a \predisc \by_c \predisc \by_e \predisc \by_d$ and thus, the best trajectory is $\by^*_{\text{disc}} = \by_b$.
\end{runningexample}

The scalar cost function $s(\by)$ is, by design, non-smooth and non-convex. We thus require an efficient solver capable of handling such challenging cost functions.

\section{Solver}\label{sec:Solver}

We now present our approach to solve \eqref{eq:LexMinProblemScalar}, which is built upon our deterministic MPPI solver introduced in~\cite{Halder2025ICRA}. However, this solver requires a quadratic input cost (cf.~\eqref{eq:PIOptimizationProblem}) and is therefore not directly applicable to our problem. Subsequently, we briefly introduce our algorithm, demonstrate how to eliminate the quadratic input cost, and outline several adaptations that improve the efficiency of the solver in our specific use case.

\subsection{Deterministic MPPI Solver}

Given an initial system state $\hat{\bx}_0$, an initial input trajectory $\hat \bu$, the number of iterations $J$, the initial covariance $\bSigma$, and the initial temperature $\lambda$, Alg.~\ref{alg:PISolver} computes a solution $\by^*$ to \eqref{eq:LexMinProblemScalar}. In each iteration, $M$ input samples are drawn from a truncated normal distribution (see lines~\ref{alg:SamplingStart} to~\ref{alg:SamplingEnd}) and applied to system~\eqref{eq:SystemDefinition} (see lines~\ref{alg:SystemRolloutStart} to~\ref{alg:SystemRolloutEnd}). Subsequently, the cost function is evaluated (see line~\ref{alg:CostEvaluationStart}), and the MPPI correction term is computed according to~\cite[Sec.~3]{Williams2018} (see lines~\ref{alg:CorrectionTermStart} to~\ref{alg:CorrectionTermEnd}). A weight is determined for each sample (see lines~\ref{alg:PIWeightingStart} to~\ref{alg:PIWeightingEnd}), and the input trajectory is updated accordingly (see lines~\ref{alg:InputUpdateStart} and~\ref{alg:InputUpdateEnd}), which serves as the base for the next iteration. By progressively shrinking the initial covariance $\bSigma$ and the initial temperature $\lambda$ across the iterations (see line~\ref{alg:Shrinking}), the algorithm converges to the optimum (cf.~\cite{Halder2025ICRA, Homburger2025}). If the solver operates in a model predictive control (MPC) scheme, the previous solution can be used as the initial input trajectory $\hat \bu$ for warm-starting. Note that MPPI generally exhibits a low sensitivity to a weak initialization of $\hat{\bu}$, since the sampling process implicitly smooths the cost function, reducing the risk of converging to local minima~\cite{Jordana2025}. Next, we describe the parts of Alg.~\ref{alg:PISolver} that distinguish our approach from~\cite{Halder2025ICRA}.

\begin{algorithm}[htb]
    \caption{Deterministic MPPI Solver}\label{alg:PISolver}
    {\small
        \begin{algorithmic}[1]
            \setlength{\itemsep}{1.5pt}  % Increases spacing between items
            \Require Initial system state $\hat{\bx}_0$, initial input trajectory $\hat{\bu}$, number of iterations $J$, initial covariance $\bSigma$, initial temperature $\lambda$
            \Ensure  Optimal output trajectory $\by^*$
            \State $\by_{\text{best}} \gets \emptyset,\ \hat{s}_{\min} \gets \infty$
            \For{$j \in \{1,\dots,J\}$} \Comment{Iterations} \label{alg:Iterations}
            \State $\beta \gets \mathfrak{B}_{\text{cos}}(j)$
            \State $\lambda_j \gets \beta^2 \lambda$, $\bSigma_j \gets \beta \bSigma$ \label{alg:Shrinking}
            \State $M \gets \mathfrak{M}_{\text{cos}}(j)$
            \Statex
            \For{$m \in \{1,\dots,M\}$} \Comment{Samples}
            \For{$k \in \{0,\dots,K\}$} \Comment{Generate inputs}  \label{alg:SamplingStart}
            \State $\beps_k^m \sim \mathcal{N}(0,\bSigma_j)$
            \State $\hat{\bu}_k^m \gets \min(\overline{\bu},\max(\underline{\bu},\hat{\bu}_k+\beps_k^m))$ \Comment{(Comp.-wise)} \label{alg:clipping}
            \State $\beps_k^m \gets \hat{\bu}_k^m - \hat{\bu}_k$
            \EndFor \label{alg:SamplingEnd}
            \Statex
            \State $\bx_0^m = \hat{\bx}_0$ \Comment{Apply to system \eqref{eq:SystemDefinition}} \label{alg:SystemRolloutStart}
            \For{$k \in \{0,\dots,K-1\}$}
            \State $\bx_{k+1}^m \gets f(\bx_k^m,\hat\bu_k^m)$
            \State $\by_k^m \gets g(\bx_k^m,\hat\bu_k^m)$
            \EndFor
            \State $\by^m_K \gets g(\bx^m_K,\hat\bu^m_K)$ \label{alg:SystemRolloutEnd}
            \Statex
            \State $\hat{s}^m \gets s(\by^m)$ \Comment{Evaluate cost function (\ref{eq:ScalarizationFunction})} \label{alg:CostEvaluationStart}
            \If{$\hat{s}^m < \hat{s}_{\min}$} \Comment{Track best sample} \label{alg:TrackingStart}
            \State $\hat{s}_{\min} \gets \hat{s}^m$
            \State $\by_{\text{best}} \gets \by^m$  \label{alg:TrackingEnd}
            \EndIf \label{alg:CostEvaluationEnd}
            \Statex
            \State $\ell^m \gets 0$ \Comment{Compute correction term} \label{alg:CorrectionTermStart}
            \For{$k \in \{0,\dots,K \}$}
            \State $\ell^m \gets \ell^m + \lambda_j {\beps_k^m}^\top \bSigma_j^{-1} \hat{\bu}_k$
            \EndFor
            \State $\ell^m \gets \ell^m + \hat{s}^m$
            \EndFor\label{alg:CorrectionTermEnd}
            \Statex
            \State $\ell_{\min} \gets \min_m \ell^m$ \Comment{Compute sample weights} \label{alg:PIWeightingStart}
            \State $\vartheta \gets \sum_{m=1}^M \exp\!\left(-\frac{1}{\lambda_j}(\ell^m-\ell_{\min})\right)$
            \For{$m \in \{1,\dots,M\}$}
            \State $\omega^m \gets \frac{1}{\vartheta}
                \exp\!\left(-\frac{1}{\lambda_j}(\ell^m-\ell_{\min})\right)$
            \EndFor \label{alg:PIWeightingEnd}
            \Statex
            \For{$k \in \{0,\dots,K \}$}  \Comment{Update input trajectory} \label{alg:InputUpdateStart}
            \State $\hat{\bu}_k \gets \hat{\bu}_k + \sum_{m=1}^M \omega^m \beps_k^m$
            \EndFor \label{alg:InputUpdateEnd}
            \EndFor
            \Statex
            \State $\by^* \gets \by_{\text{best}}$\label{alg:ReturnBestSample} \Comment{Return overall best output trajectory}
            \State \Return $\by^*$
        \end{algorithmic}}
\end{algorithm}

\subsection{Elimination of Quadratic Input Cost}

The deterministic MPPI algorithm~\cite{Halder2025ICRA} incorporates a quadratic input cost term, whose weighting is solely determined by the initial temperature $\lambda$ and the initial covariance matrix $\bSigma$ (cf.~\eqref{eq:SOTAScaling}). These input costs manifest in Alg.~\ref{alg:PISolver} as a correction term (see lines~\ref{alg:CorrectionTermStart} to~\ref{alg:CorrectionTermEnd}). However, our goal in \eqref{eq:LexMinProblemScalar} is to minimize $s(\by)$, meaning the correction term must not influence the solution. To achieve this, in contrast to~\cite{Halder2025ICRA}, we scale $\lambda$ and $\bSigma$ differently (see line~\ref{alg:Shrinking}), resulting in a $\beta$-dependent weighting matrix
\begin{equation}\label{eq:NewRUpdate}
    R(\beta) = \beta^2 \lambda\, (\beta \bSigma)^{-1} = \beta \lambda\, \bSigma^{-1}.
\end{equation}
Consequently, as $\beta \to 0$ over the $J$ iterations, $R(\beta) \to 0$, and the influence of the correction term on the total cost vanishes (cf.~\cite[pp.~84--86]{Homburger2026}). Notably, the convergence of Alg.~\ref{alg:PISolver} to the global optimum of $s(\by)$ for $\lim_{\beta \to 0} R(\beta) = 0$ can be shown using the Laplace method (cf.~\cite[Thm.~1]{Homburger2025}). Our numerical experiments in Sec.~\ref{sec:SolverEvaluations} demonstrate that this approach effectively eliminates the influence of the quadratic input cost.

Since any real implementation is limited to a finite number of trajectory samples, global optimality for optimization problem~\eqref{eq:LexMinProblemScalar} cannot be guaranteed. Thus, the decay behavior of $\beta$ plays a crucial role: reducing the initial covariance $\bSigma$ too quickly causes the solver to get trapped in local optima, whereas maintaining a large covariance introduces suboptimality~\cite{Homburger2025}. Let $\mathfrak{B}: \mathbb{N} \rightarrow \mathbb{R}_{\geq 0}$ be a decay rule returning the value of $\beta$ for iteration $j \in \{1,\dots,J\}$. The exponential decay rule $\mathfrak{B}_{\text{exp}}(j):= \sqrt{\gamma^{\,j-1}}$, with $\gamma \in (0,1)$, which is used in~\cite{Halder2025ICRA, Homburger2025}, causes a rapid reduction of the covariance in early iterations. Consequently, the solver spends many iterations with little exploration, which is highly unfavorable in our nonlinear and non-smooth problem setup. Instead, we propose a cosine-based decay rule
\begin{equation*}
    \mathfrak{B}_{\text{cos}}(j) := 1 - (1 - \beta_{\min}) \frac{1}{2} \left( 1 - \cos \left(\pi \tfrac{j-1}{J-1} \right) \right),
\end{equation*}
with $\beta_{\min} \in (0,1)$. In contrast to the exponential rule, the cosine decay rule maintains a high covariance (and thus an effective exploration) for more iterations, while gradually approaching $\beta_{\text{min}}$. This gradual decay helps to reliably reduce the aforementioned suboptimality (cf.~\cite{Homburger2025Gauss}), ensuring reliable convergence in our experiments.

\begin{runningexample}
    Let $J = 20$, $\gamma = 0.5$ and $\beta_{\text{min}} = 10^{-6}$. Fig.~\ref{fig:solver_example}~(a) shows the exponential and the cosine decay rule, respectively. Fig.~\ref{fig:solver_example}~(b) illustrates the overtaking scenario, including the sampled trajectories at three selected iterations, and the optimal trajectory. The cosine decay rule is used.
\end{runningexample}

\begin{figure}[!htb]
    \centering
    \begin{subfigure}[b]{0.49\columnwidth}
        \centering
        {\footnotesize
            \begin{tikzpicture}
    \begin{axis}[
        width=4.7cm,
        height=4.7cm,
        xlabel={Iteration $j$},
        ylabel={$\beta$},
        ylabel style={at={(axis description cs:-0.12,0.6)}, anchor=south, rotate=-90},
        xmin=1, xmax=20,
        ymin=0, ymax=1.1,
        xtick={1,5,10,15,20},
        ytick={0,0.5,1},
        grid=major,
        legend style={
            at={(0.98,0.98)},
            anchor=north east,
            font=\scriptsize,
            legend cell align=left,
            fill=white
        },
        label style={font=\footnotesize},
        tick label style={font=\footnotesize}
    ]
    \def\J{20}
    \def\betamin{1e-6}
    \def\gamma{0.5}

    % Exponential
    \addplot+[color=green!60!black, line width=1pt, mark=None, domain=1:20, samples=100] {(\gamma)^((x-1)/2)};
    \addlegendentry{$\mathfrak{B}_{\text{exp}}(j)$}

    % Linear
    %\addplot+[color=red!75!white, line width=1pt, mark=None, domain=1:20, samples=2] {1 - (1 - \betamin) * (x-1)/(\J-1)};
    %\addlegendentry{Linear}

    % Cosine
    \addplot+[color=cyan!80!blue, line width=1pt, mark=None, domain=1:20, samples=100] {1 - (1 - \betamin) * 0.5 * (1 - cos(deg(pi * (x-1)/(\J-1))))};
    \addlegendentry{$\mathfrak{B}_{\text{cos}}(j)$}

    \end{axis}
\end{tikzpicture}}
        \caption{Decay rules.}
    \end{subfigure}
    \hfill
    \begin{subfigure}[b]{0.49\columnwidth}
        \centering
        \def\svgwidth{\linewidth}
        {\footnotesize
            %% Creator: Inkscape 1.4.2 (1:1.4.2+202505120737+ebf0e940d0), www.inkscape.org
%% PDF/EPS/PS + LaTeX output extension by Johan Engelen, 2010
%% Accompanies image file 'pi_result_running_example.pdf' (pdf, eps, ps)
%%
%% To include the image in your LaTeX document, write
%%   \input{<filename>.pdf_tex}
%%  instead of
%%   \includegraphics{<filename>.pdf}
%% To scale the image, write
%%   \def\svgwidth{<desired width>}
%%   \input{<filename>.pdf_tex}
%%  instead of
%%   \includegraphics[width=<desired width>]{<filename>.pdf}
%%
%% Images with a different path to the parent latex file can
%% be accessed with the `import' package (which may need to be
%% installed) using
%%   \usepackage{import}
%% in the preamble, and then including the image with
%%   \import{<path to file>}{<filename>.pdf_tex}
%% Alternatively, one can specify
%%   \graphicspath{{<path to file>/}}
%% 
%% For more information, please see info/svg-inkscape on CTAN:
%%   http://tug.ctan.org/tex-archive/info/svg-inkscape
%%
\begingroup%
  \makeatletter%
  \providecommand\color[2][]{%
    \errmessage{(Inkscape) Color is used for the text in Inkscape, but the package 'color.sty' is not loaded}%
    \renewcommand\color[2][]{}%
  }%
  \providecommand\transparent[1]{%
    \errmessage{(Inkscape) Transparency is used (non-zero) for the text in Inkscape, but the package 'transparent.sty' is not loaded}%
    \renewcommand\transparent[1]{}%
  }%
  \providecommand\rotatebox[2]{#2}%
  \newcommand*\fsize{\dimexpr\f@size pt\relax}%
  \newcommand*\lineheight[1]{\fontsize{\fsize}{#1\fsize}\selectfont}%
  \ifx\svgwidth\undefined%
    \setlength{\unitlength}{125.57480099bp}%
    \ifx\svgscale\undefined%
      \relax%
    \else%
      \setlength{\unitlength}{\unitlength * \real{\svgscale}}%
    \fi%
  \else%
    \setlength{\unitlength}{\svgwidth}%
  \fi%
  \global\let\svgwidth\undefined%
  \global\let\svgscale\undefined%
  \makeatother%
  \begin{picture}(1,1)%
    \lineheight{1}%
    \setlength\tabcolsep{0pt}%
    \put(0,0){\includegraphics[width=\unitlength,page=1]{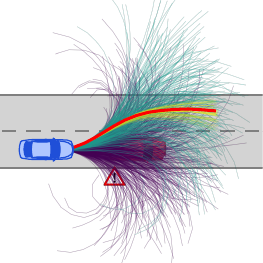}}%
    \put(0.17617073,0.25010294){\color[rgb]{0,0,0}\makebox(0,0)[t]{\lineheight{1.25}\smash{\begin{tabular}[t]{c}Ego\\vehicle\end{tabular}}}}%
    \put(0,0){\includegraphics[width=\unitlength,page=2]{pi_result_running_example.pdf}}%
    \put(0.20522121,0.68907272){\color[rgb]{0,0,0}\makebox(0,0)[t]{\lineheight{1.25}\smash{\begin{tabular}[t]{c}$\by^{*}$\end{tabular}}}}%
    \put(0.2586638,0.90958527){\color[rgb]{0,0,0}\makebox(0,0)[t]{\lineheight{1.25}\smash{\begin{tabular}[t]{c}$j = 0$\end{tabular}}}}%
    \put(0.27127159,0.7625769){\color[rgb]{0,0,0}\makebox(0,0)[t]{\lineheight{1.25}\smash{\begin{tabular}[t]{c}$j = 18$\end{tabular}}}}%
    \put(0.27308451,0.83608108){\color[rgb]{0,0,0}\makebox(0,0)[t]{\lineheight{1.25}\smash{\begin{tabular}[t]{c}$j = 10$\end{tabular}}}}%
    \put(0,0){\includegraphics[width=\unitlength,page=3]{pi_result_running_example.pdf}}%
  \end{picture}%
\endgroup%
}
        \caption{Overtaking scenario.}
    \end{subfigure}
    \caption{Visualization of decay rules and example solution of the proposed algorithm. (a) Visualization of the exponential and cosine decay rule. (b) Overtaking scenario from the running example with sampled trajectories at three selected iterations and the optimal trajectory.}
    \label{fig:solver_example}
    \vspace{-4mm}
\end{figure}

\subsection{Additional Adaptations}

The deterministic MPPI solver draws samples from a normal distribution $\mathcal{N}(0, \bSigma)$. However, in our autonomous driving problem setup, the system inputs must satisfy strict bounds. We thus clip the sampled inputs component-wise to $[\underline{\bu}, \overline{\bu}]$ (see line~\ref{alg:clipping}), similar to~\cite{Williams2018}. While input clipping can break the theoretical guarantees of standard stochastic MPPI, it does not compromise convergence in deterministic setups (cf.~\cite{Homburger2023}).

The runtime of the solver strongly depends on the total number of evaluated samples. Let $\mathfrak{M}: \mathbb{N} \rightarrow \mathbb{N}$ provide the number $M$ of samples for iteration $j$. In our previous work~\cite{Halder2025ICRA}, the number of samples per iteration is constant, i.e., $\mathfrak{M}_{\text{const}}(j):= M_{\text{init}}$, with $M_{\text{init}} \in \mathbb{N}$. To reduce the runtime without compromising solution quality, we couple the number of samples per iteration with the covariance decay. The idea is to use more samples in early iterations when the covariance is large and fewer in later iterations when the covariance is smaller. Analogous to $\mathfrak{B}_{\text{cos}}(j)$, we define a cosine-based decay rule for the number of samples per iteration:
\begin{equation*}
    \mathfrak{M}_{\text{cos}}(j) := \left\lceil M_{\text{init}} - (M_{\text{init}} - M_{\text{final}}) \frac{1}{2} \left(1 - \cos\left(\pi \tfrac{j-1}{J-1} \right)\right) \right\rceil,
\end{equation*}
where $M_{\text{final}} \in \mathbb{N}$ is the final number of samples and $M_{\text{init}} \geq M_{\text{final}}$.

Finally, instead of returning the trajectory $\by_{\text{mppi}}$ which is generated by the input trajectory $\hat \bu$ obtained at the final iteration $J$, we track the sample with the lowest cost across all iterations (see lines \ref{alg:TrackingStart} to \ref{alg:TrackingEnd}) and return its corresponding trajectory $\by_{\text{best}}$ (see line \ref{alg:ReturnBestSample}). This ensures that the executed maneuver corresponds to the best-known solution encountered during the optimization process.
\section{Numerical Experiments}\label{sec:NumericalExperiments}

We evaluate our minimum-violation motion planning approach in several experiments. Specifically, we aim to validate the following hypotheses:
\begin{enumerate}[label=\textit{H\arabic*}]
    \item \label{hyp:Discretization} A finer discretization granularity mitigates the effects of order relaxation and order inversion, and a close approximation of the continuous lexicographic optimum can be achieved in practice with a limited number of violation intervals;
    \item \label{hyp:Mppi} Our proposed solver adaptations improve solution quality while reducing the number of required samples;
    \item \label{hyp:CRScenario} Our minimum-violation motion planning approach effectively handles conflicting specifications in complex, real-world autonomous driving scenarios;
    \item \label{hyp:AlternativeSolvers} Our solution approach achieves substantially lower solve times than the preemptive scheme, and our deterministic MPPI solver is competitive with state-of-the-art solvers;
    \item \label{hyp:RobustnessMeasures} Our proposed space-left-time robustness measure resolves the trajectory indistinguishability of standard space and time robustness measures while remaining computationally competitive.
\end{enumerate}
The experiments are implemented in Python and C++ and are executed on an \texttt{AMD Ryzen Threadripper PRO 5975WX} CPU; the code can be accessed at \href{https://github.com/TUMcps/STL-Lex-MPPI-Planner}{\mbox{https://github.com/TUMcps/STL-Lex-MPPI-Planner}}.

% ##################################################################################
%  Effects of Discretization
% ##################################################################################
\subsection{Effects of Discretization}\label{sec:NumericalEffectsOfDiscretization}

Based on a linear problem setup, we numerically investigate how discretization granularity and the distribution of violation intervals affect the approximation of the continuous lexicographic optimum by the discretized cost function.

\subsubsection{Setup}

We consider a linear scalar discrete-time integrator with the dynamics
\begin{equation*}
    \begin{aligned}
        x_{k+1} & = x_k + u_k, \\
        y_k     & = x_k.
    \end{aligned}
\end{equation*}
The inputs are bounded by $\underline{u} = -1.35$ and $\overline{u} = 1.35$, the time horizon is $K=8$, and the initial state is $\hat x_0 = 0$. We introduce eight STL specifications $\varphi_k$, with $k \in \{1,\dots, 8\}$, which are designed such that they only target specific discrete times:
\begin{equation*}
    \varphi_k := \begin{cases}
        y_k < r_k    & \text{if } k \text{ is odd},  \\
        y_k \geq r_k & \text{if } k \text{ is even},
    \end{cases}
\end{equation*}
where $r_k \in \mathbb{R}$ are user-defined thresholds. As an example, the highest-priority specification $\varphi_1$ requires the output at $k=1$ to be below $r_1$, while the second-highest priority specification $\varphi_2$ requires the output at $k=2$ to be above or equal to $r_2$. This setup allows us to clearly observe the degradation in the optimization result due to discretization, as approximation errors from higher-priority specifications are directly propagated to lower-priority ones.

Given a user-defined upper bound cost value $\overline{c} = 10$, we divide the range $[0, \overline{c}]$ into equally sized intervals. Following \eqref{eq:DefinitionIntervals}, the positive thresholds $\alpha_{\xi, k}$ for a specification $\varphi_k$ and $m_{\varphi_k} > 1$ are generated by
\begin{equation}\label{eq:IntervalBoundariesGenerator}
    \alpha_{\xi, k} := \frac{\overline{c}}{m_{\varphi_k}-1}\,\xi,
\end{equation}
for  $\xi \in \{1, \dots, m_{\varphi_k}-1\}$. Further, let the total number of used violation intervals be denoted as $m_{\text{total}}:= \sum_{k=1}^{8} m_{\varphi_k}$.

The linear problem setup (i.e., linear system dynamics and predicates) allows us to exactly compute the sets of lexicographically optimal solutions $\mathcal{Y}_{\text{cont}}^*$ and $\mathcal{Y}_{\text{disc}}^*$ using standard forward and backward reachability analysis (cf.~\cite{Wetzlinger2026}). To evaluate how much more violation is generated by $\mathcal{Y}_{\text{disc}}^*$ compared to the violation that is lexicographically necessary according to $\mathcal{Y}_{\text{cont}}^*$, we compute the worst-case difference of the continuous cost between both solutions and average this error over all $N$ specifications:
\begin{align*}
    \varepsilon_{\text{viol}} & := \frac{1}{N}\sum_{i=1}^{N} \max(0, \Delta c_{\varphi_i}),\, \text{ with}                                                        \\
    \Delta c_{\varphi_i}      & := \max_{\by \in \mathcal{Y}^*_{\text{disc}}} c_{\varphi_i}(\by) - \min_{\by \in \mathcal{Y}^*_{\text{cont}}} c_{\varphi_i}(\by).
\end{align*}

\subsubsection{Example Scenario}

Fig.~\ref{fig:ExampleScenario}~(a) presents an example scenario for $m_{\varphi_1} = \dots = m_{\varphi_8} = 5$. The deviation of $\mathcal{Y}_{\text{disc}}^*$ from $\mathcal{Y}_{\text{cont}}^*$ due to order relaxation and order inversion (cf. Sec.~\ref{sec:EffectsofDiscretization}) is evident. For instance, the solution of the discretized problem violates $\varphi_2$ more than the solution of the continuous problem, which subsequently enables satisfying $\varphi_3$, which the continuous solution cannot achieve. The violation error for this scenario is $\varepsilon_{\text{viol}} = 0.9$.

\begin{figure}[htb]
    \centering
    \begin{subfigure}[b]{0.99\columnwidth}
        \centering
        \def\svgwidth{\linewidth}
        {\footnotesize
            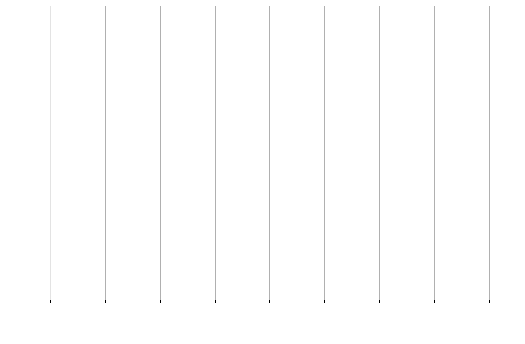}
        \caption{Visualization of the reachable set for the solution using the continuous cost function (blue) and for the solution using the discretized cost function (orange). Additionally, the continuous and discretized costs corresponding to the specifications $\varphi_k$ are presented (arrows pointing towards lower cost values).}
        \vspace{2mm}
    \end{subfigure}
    \begin{subfigure}[b]{0.99\columnwidth}
        \centering
        {\footnotesize
            \begin{tikzpicture}
    \begin{axis}[
        width=9cm,
        height=6cm,
        xlabel={Total number $m_{\text{total}}$ of violation intervals},
        ylabel={Violation error $\varepsilon_{\text{viol}}$},
        xmin=0, xmax=164,
        xtick={0,16,...,160},
        minor xtick={0,8,...,160},
        ytick={0,0.2,0.4,0.6,0.8,1.0,1.2,1.4,1.6},
        ymin=0, ymax=1.6,
        grid=major,
        legend style={
            at={(0.98,0.98)},
            anchor=north east,
            font=\scriptsize,
            fill=white,
            fill opacity=0.8,
            text opacity=1,
            cells={anchor=west}
        },
        label style={font=\footnotesize},
        tick label style={font=\footnotesize},
        cycle list name=color list,
    ]

    % --- Legend Construction (Manual) ---
    % 1. Even Distribution
    \addlegendimage{color=red!75!white, thick, mark=*, mark options={fill=red!75!white, fill opacity=1}}
    \addlegendentry{Even Distribution}

    % 2. Linear Increase
    \addlegendimage{color=cyan!80!blue, thick, mark=triangle*, mark options={fill=cyan!80!blue, fill opacity=1}}
    \addlegendentry{Linear Increase}

    % 3. Linear Decrease
    \addlegendimage{color=green!60!black, thick, mark=triangle*, mark options={fill=green!60!black, rotate=180, fill opacity=1}}
    \addlegendentry{Linear Decrease}

    % 4. Best Distribution
    \addlegendimage{color=black!50!white, thick, mark=*, mark size=1.5pt, mark options={fill=white, fill opacity=1}}
    \addlegendentry{Best Distribution}

    % 5. Raw Data
    \addlegendimage{only marks, mark=*, mark size=1.0pt, color=gray!50}
    \addlegendentry{Sampled Distributions}

    % --- Plots (Drawing Order: Bottom to Top) ---

    % 1. Range (Background)
    \addplot[name path=upper, draw=none, forget plot] table[x=x, y=y_max] {figures/adce_stats_data_scenario_3.txt};
    \addplot[name path=lower, draw=none, forget plot] table[x=x, y=y_min] {figures/adce_stats_data_scenario_3.txt};
    \addplot[fill=gray!20, fill opacity=0.5, forget plot] fill between[of=upper and lower];

    % 2. Raw Data (Scatter)
    \addplot[
        only marks,
        mark=*,
        mark size=1.0pt,
        color=gray!50,
        opacity=0.5,
        forget plot
    ] table[x=x, y=y] {figures/adce_scatter_data_scenario_3.txt};

    % 3. Best Distribution (Min Value)
    \addplot[
        color=black!50!white,
        thick,
        mark=*,
        mark size=1.5pt,
        mark options={fill=white},
        forget plot
    ] table[x=x, y=y_min] {figures/adce_stats_data_scenario_3.txt};

    % 4. Linear Increase
    \addplot[
        color=cyan!80!blue,
        thick,
        mark=triangle*,
        mark options={fill=cyan!80!blue},
        forget plot
    ] table[x=x, y=y_lin_inc] {figures/adce_stats_data_scenario_3.txt};

    % 5. Linear Decrease
    \addplot[
        color=green!60!black,
        thick,
        mark=triangle*,
        mark options={fill=green!60!black, rotate=180},
        forget plot
    ] table[x=x, y=y_lin_dec] {figures/adce_stats_data_scenario_3.txt};

    % 6. Even Distribution (Top)
    \addplot[
        color=red!75!white,
        thick,
        mark=*,
        mark options={fill=red!75!white},
        forget plot
    ] table[x=x, y=y_even] {figures/adce_stats_data_scenario_3.txt};

    \end{axis}
\end{tikzpicture}}
        \caption{Violation error for different interval distributions over the total number of violation intervals. Highlighted are the even distribution (red), distribution of linear increase (blue), distribution of linear decrease (green), and the best found distribution (gray).}
    \end{subfigure}
    \caption{Results of the discretization experiments. (a) Comparison of the lexicographically optimal solutions for the continuous and discretized cost function. (b) Influence of the discretization granularity and violation interval distribution on the violation error.}
    \label{fig:ExampleScenario}
    \vspace{-5mm}
\end{figure}

To investigate the effect of discretization granularity, we increase the total number of violation intervals $m_{\text{total}}$ stepwise from $8$ to $160$ and allocate them among the eight specifications by generating $10000$ unbiased random compositions via the stars-and-bars method~\cite{Feller1991}. By evaluating the violation error for all compositions at each step, we identify the distribution that minimizes the error. Also, we evaluate an even distribution (i.e., $m_{\varphi_1} = \dots = m_{\varphi_8}$), a linear increase, and a linear decrease, mimicking our hierarchical structure. Fig.~\ref{fig:ExampleScenario}~(b) illustrates the resulting violation errors. The graph of the best distribution shows that the finer the discretization, the lower the error. As the total number of violation intervals increases, the individual violation intervals become smaller, thereby mitigating the influence of order relaxation and order inversion. Furthermore, the error strongly depends on the chosen distribution of violation intervals. For a few violation intervals ($m_{\text{total}} < 40$), the linear decrease performs better. However, the even distribution consistently outperforms the other strategies for $m_{\text{total}} \geq 40$.

% ---------------------------
\subsubsection{Batch Evaluation}\label{sec:BatchEvaluation}

Next, we conduct a batch evaluation of $10000$ scenarios to validate our previous findings. For each scenario, the thresholds $r_k$ are randomly sampled from the interval $[-3, 3]$. As in the previous analysis, we vary the total number $m_{\text{total}}$ of violation intervals and sample $10000$ distinct interval distributions for each $m_{\text{total}}$.

Fig.~\ref{fig:AverageViolationErrorBatch} illustrates the average violation error $\bar\varepsilon_{\text{viol}}$ aggregated across all scenarios for different interval distribution strategies. As the total number of intervals increases, the average violation error decreases significantly across all strategies. This confirms that a finer discretization effectively mitigates the influence of order relaxation and order inversion. Also, the results show that even with a limited number of violation intervals, we can achieve a close approximation of the continuous lexicographic optimum, i.e., low violation errors. These findings strongly \textbf{support hypothesis~\ref{hyp:Discretization}}.

\begin{figure}[!htb]
    \centering
    {\footnotesize
        \begin{tikzpicture}
    \definecolor{color_lin_dec_std}{HTML}{CCEACC}
    \definecolor{color_lin_inc_std}{HTML}{CDE9F8}
    \definecolor{color_even_std}{HTML}{FFD8D8}
    \begin{axis}[
        width=9cm,
        height=6cm,
        xlabel={Total number $m_{\text{total}}$ of violation intervals},
        ylabel={Average violation error $\bar \varepsilon_{\text{viol}}$},
        xmin=0, xmax=164,
        xtick={0,16,...,160},
        minor xtick={0,8,...,160},
        ytick={0,0.2,0.4,0.6,0.8,1.0,1.2,1.4,1.6},
        ymin=0, ymax=1.5,
        grid=major,
        legend style={
            at={(0.98,0.98)},
            anchor=north east,
            font=\scriptsize,
            fill=white,
            fill opacity=0.8,
            text opacity=1,
            cells={anchor=west},
            legend columns=1 %2
        },
        label style={font=\footnotesize},
        tick label style={font=\footnotesize},
        cycle list name=color list,
    ]

    % --- Legend Construction (Manual) ---

    % 1. Linear Decrease
    \addlegendimage{color=green!60!black, thick, mark=triangle*, mark options={fill=green!60!black, rotate=180, fill opacity=1}}
    \addlegendentry{Linear Decrease}
    %\addlegendimage{area legend, fill=color_lin_dec_std, fill opacity=1, draw=green!60!black}
    %\addlegendentry{Lin. Dec. $\pm 1\sigma$}

    % 2. Linear Increase
    \addlegendimage{color=cyan!80!blue, thick, mark=triangle*, mark options={fill=cyan!80!blue, fill opacity=1}}
    \addlegendentry{Linear Increase}
    %\addlegendimage{area legend, fill=color_lin_inc_std, fill opacity=1, draw=cyan!80!blue}
    %\addlegendentry{Lin. Inc. $\pm 1\sigma$}

    % 3. Even Distribution
    \addlegendimage{color=red!75!white, thick, mark=*, mark options={fill=red!75!white, fill opacity=1}}
    \addlegendentry{Even Distribution}
    %\addlegendimage{area legend, fill=color_even_std, fill opacity=1, draw=red!75!white}
    %\addlegendentry{Even Dist. $\pm 1\sigma$}

    % 1. Linear Decrease Area
    \addplot[name path=lindec_upper, draw=none, forget plot] table[x=interval_sum, y=adce_lin_dec_upper] {figures/adce_statistics_tikz.txt};
    \addplot[name path=lindec_lower, draw=none, forget plot] table[x=interval_sum, y=adce_lin_dec_lower] {figures/adce_statistics_tikz.txt};
    \addplot[fill=color_lin_dec_std, fill opacity=0.5, draw=green!60!black, draw opacity=0.5, forget plot] fill between[of=lindec_upper and lindec_lower];

    % 2. Linear Increase Area
    \addplot[name path=lininc_upper, draw=none, forget plot] table[x=interval_sum, y=adce_lin_inc_upper] {figures/adce_statistics_tikz.txt};
    \addplot[name path=lininc_lower, draw=none, forget plot] table[x=interval_sum, y=adce_lin_inc_lower] {figures/adce_statistics_tikz.txt};
    \addplot[fill=color_lin_inc_std, fill opacity=0.5, draw=cyan!80!blue, draw opacity=0.5, forget plot] fill between[of=lininc_upper and lininc_lower];

    % 3. Even Distribution Area
    \addplot[name path=even_upper, draw=none, forget plot] table[x=interval_sum, y=adce_even_upper] {figures/adce_statistics_tikz.txt};
    \addplot[name path=even_lower, draw=none, forget plot] table[x=interval_sum, y=adce_even_lower] {figures/adce_statistics_tikz.txt};
    \addplot[fill=color_even_std, fill opacity=0.5, draw=red!75!white, draw opacity=0.5, forget plot] fill between[of=even_upper and even_lower];

    % --- Mean Plot Lines ---

    % 1. Linear Decrease Line
    \addplot[
        color=green!60!black,
        thick,
        mark=triangle*,
        mark options={fill=green!60!black, rotate=180},
        forget plot
    ] table[x=interval_sum, y=adce_lin_dec_mean] {figures/adce_statistics_tikz.txt};

    % 2. Linear Increase Line
    \addplot[
        color=cyan!80!blue,
        thick,
        mark=triangle*,
        mark options={fill=cyan!80!blue},
        forget plot
    ] table[x=interval_sum, y=adce_lin_inc_mean] {figures/adce_statistics_tikz.txt};

    % 3. Even Distribution Line
    \addplot[
        color=red!75!white,
        thick,
        mark=*,
        mark options={fill=red!75!white},
        forget plot
    ] table[x=interval_sum, y=adce_even_mean] {figures/adce_statistics_tikz.txt};
    \end{axis}
\end{tikzpicture}
    }
    \vspace{-5mm}
    \caption{Average violation error $\bar\varepsilon_{\text{viol}}$ over $10000$ scenarios for a varying total number of violation intervals and different distribution strategies. Solid lines represent the mean values, while the shaded areas indicate the standard deviation.}
    \label{fig:AverageViolationErrorBatch}
    \vspace{-5mm}
\end{figure}

% ##################################################################################
%  Solver Evaluations
% ##################################################################################
\subsection{Solver Evaluations}\label{sec:SolverEvaluations}

We investigate the influence of our solver adaptations on the solution quality using the $10000$ scenarios from Sec.~\ref{sec:BatchEvaluation}. Specifically, we examine the impact of the $\beta$-decay ($\mathfrak{B}_{\text{cos}}$ vs. $\mathfrak{B}_{\text{exp}}$), the sample size decay ($\mathfrak{M}_{\text{cos}}$ vs. $\mathfrak{M}_{\text{const}}$), and the output trajectory selection ($\by_{\text{mppi}}$ vs. $\by_{\text{best}}$). We compare a baseline solver (which uses $\mathfrak{B}_{\text{exp}}$, $\mathfrak{M}_{\text{const}}$, and $\by_{\text{mppi}}$) against configurations that incrementally enable our adaptations (see Tab.~\ref{tab:SolverEvaluation}), leading to our proposed Alg.~\ref{alg:PISolver}. We again use reachability analysis to compute the true lexicographic optimum $\mathcal{Y}^*_{\text{disc}}$ of the respective discretized problems for reference. The used parameters are: $J=20$, $\bSigma = 0.5$, $\lambda = 1.0$, $\hat x_0 = 0$, $\hat u_k = 0, \forall k \in [0, K]$, $\gamma =0.6$, $\beta_{\text{min}}=1.0 \mathrm{e}{-6}$, $M_{\text{init}} = 400$, and $M_{\text{final}} = 250$. The solvers are evaluated using the following metrics:
\begin{itemize}
    \item $P_{\text{lower}}, P_{\text{equal}}$, and $P_{\text{higher}}$: the percentage over all scenarios where the evaluated solver yields a lower, equal, or higher cost than the baseline solver, respectively;
    \item $\bar{\varepsilon}_{\text{solv}}$: the mean over all scenarios of the optimality gap $\varepsilon_{\text{solv}} := \frac{s(\by^*_{\text{solv}}) - s(\by^*_{\text{disc}}) }{s(\by^*_{\text{disc}})} \cdot 100\%$, where $\by^*_{\text{solv}}$ is the solution returned by the solver and $\by^*_{\text{disc}} \in \mathcal{Y}^*_{\text{disc}}$;
    \item $\bar{\Delta}{\varepsilon_{\text{solv}}}$: the mean over all scenarios of the gap improvement compared to the baseline, defined as $\Delta{\varepsilon_{\text{solv}}}:= \varepsilon_{\text{solv}} - \varepsilon_{\text{base}}$, where $\varepsilon_{\text{base}}$ is the optimality gap of the baseline solver.
\end{itemize}

Tab.~\ref{tab:SolverEvaluation} summarizes the results. Comparing the baseline solver to the true optimum, it already finds the true optimal solution in $63.17\%$ of the scenarios. Further, implementing our adaptations without reducing the sample size (Config.~1 to Config.~3) consistently outperforms the baseline and substantially lowers the mean optimality gap and the gap improvement. The configurations that use the cosine sample-size decay rule (Config.~4 to Config.~6 and Alg.~\ref{alg:PISolver}) reduce the overall number of used samples, while the degradation in solution quality is negligible compared to the other configurations. Overall, our Alg.~\ref{alg:PISolver} improves the solution quality while also reducing the number of evaluated samples, thereby \textbf{confirming hypothesis~\ref{hyp:Mppi}}.

\begin{table}[htb]
    \vspace{-2mm}
    \caption{Comparison of solver configurations. The baseline solver configuration uses $\mathfrak{B}_{\text{exp}}$, $\mathfrak{M}_{\text{const}}$, and $\by_{\text{mppi}}$. Arrows indicate whether higher or lower values are preferred. All values are percentages.}
    \centering
    \begin{adjustbox}{max width=1.0\columnwidth}
        \centering
        \setlength{\tabcolsep}{5pt}
        \begin{tabular}{
                l
                c
                c
                c
                S[table-format=1.3, round-mode=places, round-precision=3]
                S[table-format=1.3, round-mode=places, round-precision=3]
                S[table-format=1.2, round-mode=places, round-precision=2]
                S[table-format=1.2, round-mode=places, round-precision=2]
                S[table-format=3.0, round-mode=places, round-precision=0]
            }
            \toprule
            \multirow{2}{*}{\textbf{Method}} & \multicolumn{3}{c}{\textbf{Configuration}} & \multicolumn{5}{c}{\textbf{Metrics}}                                                                                                                                                                                                                               \\
            \cmidrule(lr){2-4} \cmidrule(lr){5-9}
                                             & $\mathfrak{M}_{\text{cos}}$                & $\mathfrak{B}_{\text{cos}}$          & $\by_{\text{best}}$ & {$P_{\text{lower}}\!\uparrow$} & {$P_{\text{equal}}$} & {$P_{\text{higher}}\!\downarrow$} & {$\bar{\varepsilon}_{\text{solv}}\!\downarrow$} & {$\bar{\Delta}{\varepsilon_{\text{solv}}}\!\downarrow$} \\
            \midrule
            Baseline                         & $\circ$                                    & $\circ$                              & $\circ$             & 0.00                           & 100.00               & 0.00                              & 4.127                                           & 0.000                                                   \\
            \midrule
            Config. 1                        & $\circ$                                    & $\circ$                              & $\bullet$           & 3.94                           & 96.06                & 0.00                              & 4.013                                           & -0.114                                                  \\
            Config. 2                        & $\circ$                                    & $\bullet$                            & $\circ$             & 23.01                          & 69.37                & 7.62                              & 0.700                                           & -3.427                                                  \\
            Config. 3                        & $\circ$                                    & $\bullet$                            & $\bullet$           & 29.59                          & 66.86                & 3.55                              & 0.233                                           & -3.894                                                  \\
            \midrule
            Config. 4                        & $\bullet$                                  & $\circ$                              & $\circ$             & 2.36                           & 94.25                & 3.39                              & 4.083                                           & -0.045                                                  \\
            Config. 5                        & $\bullet$                                  & $\circ$                              & $\bullet$           & 5.74                           & 91.82                & 2.44                              & 3.961                                           & -0.167                                                  \\
            Config. 6                        & $\bullet$                                  & $\bullet$                            & $\circ$             & 21.01                          & 69.43                & 9.56                              & 0.998                                           & -3.130                                                  \\
            Alg.~\ref{alg:PISolver}          & $\bullet$                                  & $\bullet$                            & $\bullet$           & 28.92                          & 67.12                & 3.96                              & 0.259                                           & -3.868                                                  \\
            \midrule
            Optimum                          & \multicolumn{3}{c}{--}                     & 36.83                                & 63.17               & 0.00                           & 0.000                & -4.127                                                                                                                                        \\
            \bottomrule
        \end{tabular}
    \end{adjustbox}
    \label{tab:SolverEvaluation}
\end{table}

% ##################################################################################
%  CommonRoad Experiments
% ##################################################################################
\subsection{CommonRoad Evaluations}\label{sec:CommonRoadExperiment}
We demonstrate the practical applicability of our minimum-violation motion planner in traffic scenarios from the CommonRoad benchmark suite~\cite{Althoff2017}. We first introduce the problem setup, then present the planning results for an intersection scenario, and finally compare our approach with the preemptive scheme and state-of-the-art solvers on $1750$ CommonRoad scenarios.
\vspace{-3mm}

\subsubsection{Problem Setup}

We use a  kinematic single-track model with state $\bx_k := [\mathtt{x}, \mathtt{y}, \theta, \delta, \mathtt{v}]^{\top}$, where $(\mathtt{x}, \mathtt{y})$ is the rear axle position, $\theta$ is the orientation, $\delta$ is the steering angle, and $\mathtt{v}$ is the velocity. The input is $\bu_k := [\mathtt{v}_{\delta}, \mathtt{a}]^{\top}$, where $\mathtt{v}_{\delta}$ is the steering velocity and $\mathtt{a}$ is the longitudinal acceleration. Given the wheelbase $L_{\text{wb}} \in \mathbb{R}$, the discrete-time system, obtained by explicit Euler discretization, is described by
\begin{equation*}
    \bx_{k+1} = \bx_k + \Delta t
    \begin{bmatrix}
        \mathtt{v}_k \cos(\theta_k)                       \\
        \mathtt{v}_k \sin(\theta_k)                       \\
        \frac{\mathtt{v}_k}{L_{\text{wb}}} \tan(\delta_k) \\
        \mathtt{v}_{\delta, k}                            \\
        \mathtt{a}_k
    \end{bmatrix},
    \quad \by_k =
    \begin{bmatrix}
        \bx_k \\
        \bu_k
    \end{bmatrix}.
\end{equation*}

We define 12 STL specifications that reflect a realistic autonomous-driving setting, including collision avoidance, maintaining safe distances, adhering to speed limits, and complying with a schedule. These specifications are inspired by~\cite{Maierhofer2020,Maierhofer2022}, and are defined as follows:
\begin{align*}
    \varphi_{1}  & := \mathbf{G}(\mu_{\text{velocity-in-limits}} \land \mu_{\text{steering-angle-in-limits}}),                                                                                                          \\
    \varphi_{2}  & := \textstyle\bigwedge_{o \in \mathcal{O}} (\mathbf{G}(\lnot \mu_{\text{collision}}^o)),                                                                                                             \\
    \varphi_{3}  & := \textstyle\bigwedge_{o \in \mathcal{O}} (\mathbf{G}((\mu_{\text{is-ambulance}}^o \land \mu_{\text{is-close}}^o)                                                                                   \\
                 & \qquad \Longrightarrow (\mu_{\text{is-shifted-to-left}} \land \mu_{\text{is-slow}}))),                                                                                                               \\
    \varphi_{4}  & := \textstyle\bigwedge_{o \in \mathcal{O}} (\mathbf{G}(\mu_{\text{keeps-static-distance}}^o)),                                                                                                       \\
    \varphi_{5}  & := \textstyle\bigwedge_{o \in \mathcal{O}} (\mathbf{G}((\mu_{\text{in-same-lane}}^{o} \land \mu_{\text{in-front-of}}^{o}                                                                             \\
                 & \qquad \land \lnot \mathbf{O}_{[0, 10]}(\mu_{\text{cut-in}}^{o} \land \mathbf{O}_{[1,1]}(\lnot \mu_{\text{cut-in}}^{o})))                                                                            \\
                 & \qquad \Longrightarrow \mu_{\text{keeps-safe-distance-prec}}^{o})),                                                                                                                                  \\
    \varphi_{6}  & := \mathbf{G}(\mu_{\text{in-lane}}),                                                                                                                                                                 \\
    \varphi_{7}  & := \textstyle\bigwedge_{\text{in} \in \mathcal{I}} \textstyle\bigwedge_{o \in \mathcal{O}} (\mathbf{G}(\mu_{\text{has-priority}}^o \Longrightarrow \lnot \mu_{\text{in-intersection}}^{\text{in}})), \\
    \varphi_{8}  & := \mathbf{G}(\mu_{\text{below-speed-limit}}),                                                                                                                                                       \\
    \varphi_{9}  & := \mathbf{G}(\mu_{\text{brakes-abruptly}} \Longrightarrow \textstyle\bigwedge_{o \in \mathcal{O}}(\mu_{\text{in-front-of}}^{o} \land \mu_{\text{in-same-lane}}^{o}                                  \\
                 & \qquad  \land (\lnot\mu_{\text{keeps-safe-distance-prec}}^{o} \lor \mu_{\text{brakes-abruptly-relative}}^{o}))),                                                                                     \\
    \varphi_{10} & := \mathbf{G}(( \textstyle\bigwedge_{o \in \mathcal{O}}(\lnot(\mu_{\text{is-slow}}^o \land \mu_{\text{in-same-lane}}^{o} \land \mu_{\text{in-front-of}}^{o})))                                       \\
                 & \qquad \Longrightarrow \mu_{\text{preserves-flow}}),                                                                                                                                                 \\
    \varphi_{11} & := \mathbf{G}(\mu_{\text{below-long-acc-limit}} \land \mu_{\text{below-lat-acc-limit}}),                                                                                                             \\
    \varphi_{12} & := \mathbf{F}_{[K,K]}(\mu_{\text{at-scheduled-long-pos}}),
\end{align*}
where $\mathcal{O}$ is the set of obstacles and $\mathcal{I}$ is the set of intersections in the CommonRoad scenario, respectively. The movement of the obstacles is predicted using a constant-velocity model. For details on the implementation of the predicates, see our source code.

As we investigate the in-lane-driving specification $\varphi_{6}$ later in detail, we formally introduce the used predicate $\mu_{\text{in-lane}}$ subsequently. Provided with a reference path, we assume a constant lane width $d_{\text{lane}}$ along this path. Furthermore, we introduce the functions $\operatorname{leftDeviation}(\by_k)$ and $\operatorname{rightDeviation}(\by_k)$, which determine the maximum left and right deviation from the reference path of the ego vehicle, respectively, utilizing a three-disc approximation of the ego vehicle occupancy~\cite{Ziegler2010}. The predicate is defined as
\begin{align*}
    \mu_{\text{in-lane}} := \min\Bigl( & \frac{d_{\text{lane}}}{2} - \operatorname{leftDeviation}(\by_k),               \\
                                       & \operatorname{rightDeviation}(\by_k) - \frac{d_{\text{lane}}}{2}\Bigr) \geq 0.
\end{align*}

We adopt the following discretization scheme: For the highest-priority specifications $\varphi_1$ and $\varphi_2$, we only differentiate between satisfaction and violation, i.e., $m_{\varphi_1} = m_{\varphi_2} = 1$. Note that other discretizations are also possible. For specifications $\varphi_3$ to $\varphi_{12}$, we use \eqref{eq:IntervalBoundariesGenerator} to define the interval bounds. The specific upper bound cost values $\overline{c}_{\varphi_{i}}$ for each specification are provided in our source code. Tuning $\overline{c}_{\varphi_{i}}$ is merely required because the value ranges of the considered predicate functions $p_\mu(\by, k)$ vary greatly, since they express diverse quantities, e.g., distances, velocities, or abstract measures like criticality, risk, or comfort (cf. \cite{Lin2023a, Westhofen2022, Geisslinger2023, Trauth2023, Rajesh2023}). Normalizing the predicate functions to eliminate manual tuning of $\overline{c}_{\varphi_{i}}$ is left for future work. We set $m_{\varphi_i} = 31$ for all $i \in \{3, \dots, 12\}$ and utilize our space-left-time robustness $\eta_{\text{space-left-time}}$ with $w = 15$ for all predicate functions. The remaining parameters are summarized in Tab.~\ref{tab:CommonRoadParameters}.
\begin{table}[b]
    \centering
    \caption{Parameters of the CommonRoad experiments.}
    \begin{tabular}{ll}
        \toprule
        \textbf{Description}      & \textbf{Notation and value}                                                           \\
        \midrule
        \textbf{Problem Parameters}                                                                                       \\
        Time increment            & $\Delta t = 0.2\,\text{s}$                                                            \\
        Planning time horizon     & $K = 15$                                                                              \\
        Upper input bounds        & $\overline{\bu} = [0.3\,\text{rad/s}, 8.0\,\text{m/s}^2]$                             \\
        Lower input bounds        & $\underline{\bu} = [-0.3\,\text{rad/s}, -8.0\,\text{m/s}^2]$                          \\
        Wheelbase                 & $L_{\text{wb}} = 3.0\,\text{m}$                                                       \\
        Lane width                & $d_{\text{lane}} = 4.0\,\text{m}$                                                     \\
        Min-Max functions         & $\nu_1=\nu_2=10, \nu_3=1, \nu_4=\nu_5=2$                                              \\
        \midrule
        \textbf{Solver Parameters}                                                                                        \\
        Number of iterations      & $J=20$                                                                                \\
        Initial covariance        & $\bSigma = \left[\begin{smallmatrix} 0.1 & 0.0 \\ 0.0 & 6.0 \end{smallmatrix}\right]$ \\
        Initial temperature       & $\lambda = 1.0$                                                                       \\
        Initial input trajectory  & $\hat\bu_k = [0\,\text{rad/s}, 0\,\text{m/s}^2], \forall k \in [0, K]$                \\

        Minimum decay value       & $\beta_{\text{min}}=1.0 \mathrm{e}{-6}$                                               \\
        Initial number of samples & $M_{\text{init}} = 1000$                                                              \\
        Final number of samples   & $M_{\text{final}} = 100$                                                              \\
        \bottomrule
    \end{tabular}
    \label{tab:CommonRoadParameters}
\end{table}

\subsubsection{Intersection Scenario}

We analyze the CommonRoad scenario depicted in Fig.~\ref{fig:CRScenario}, representing a busy intersection on which the ego vehicle intends to turn left. This scenario includes obstacles approaching from the left and right with priority, an approaching ambulance from behind the ego vehicle, and an impatient driver cutting in front of the ego vehicle. We apply our minimum-violation planner in an MPC scheme. At each MPC iteration, the planner computes an optimal trajectory $\by^*$, which the ego vehicle follows for one discrete time instance. The initial state is then updated accordingly. This procedure is repeated $H=70$ times.

\begin{figure*}[htb]
    \centering
    \begin{subfigure}[b]{1.98\columnwidth}
        \centering
        \def\svgwidth{\linewidth}
        {\footnotesize
            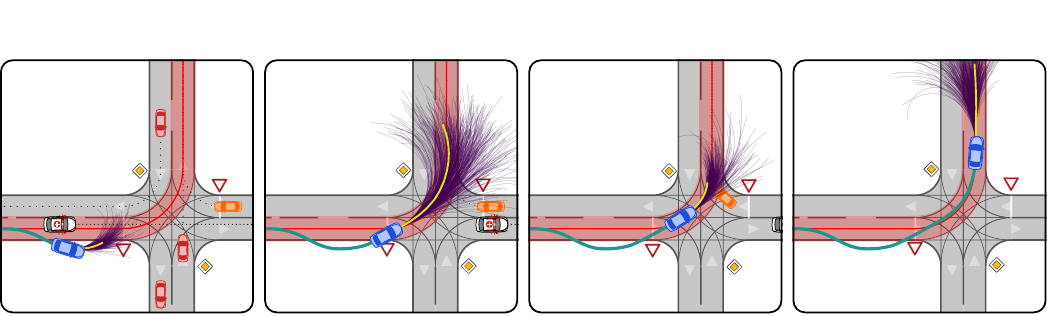}
        \caption{The CommonRoad Scenario at four MPC iterations.}
    \end{subfigure}
    \begin{subfigure}[b]{0.99\columnwidth}
        \centering
        \resizebox{\columnwidth}{!}{
            \definecolor{customgreen}{HTML}{1E968A}
\begin{tikzpicture}
    \begin{axis}[
        width=9cm,
        height=3.5cm,
        xlabel={MPC iteration},
        ylabel={$\mathtt{v}$ [\si{m/s}]},
        xmin=0, xmax=70,
        ymin=-1, ymax=9,
        grid=major,
        label style={font=\footnotesize},
        tick label style={font=\footnotesize},
    ]
        \addplot[color=customgreen, thick, mark=*, mark size=1pt] table [x=step, y=velocity] {figures/executed_trajectory_data.dat};
    \end{axis}
\end{tikzpicture}}
        \vspace{-5mm}
        \caption{Velocity profile of the executed trajectory.}
    \end{subfigure}
    \hfill
    \begin{subfigure}[b]{0.99\columnwidth}
        \centering
        \resizebox{\columnwidth}{!}{
            \definecolor{customgreen}{HTML}{1E968A}
\begin{tikzpicture}
    \begin{axis}[
        width=9cm,
        height=3.5cm,
        xlabel={MPC iteration},
        ylabel={$\delta$ [\si{rad}]},
        xmin=0, xmax=70,
        ymin=-0.4, ymax=0.5,
        ytick={-0.4, -0.2, 0, 0.2, 0.4},
        grid=major,
        label style={font=\footnotesize},
        tick label style={font=\footnotesize},
    ]
        \addplot[color=customgreen, thick, mark=*, mark size=1pt] table [x=step, y=steering_angle] {figures/executed_trajectory_data.dat};
    \end{axis}
\end{tikzpicture}}
        \vspace{-5mm}
        \caption{Steering angle profile of the executed trajectory.}
    \end{subfigure}
    \caption{Planning results of the CommonRoad intersection scenario. (a) Scenario at four MPC iterations, including the planned trajectory, sampled trajectories, and executed trajectory. (b) Velocity profile and (c) steering angle profile of the executed trajectory over the entire MPC horizon.}
    \label{fig:CRScenario}
    \vspace{-5mm}
\end{figure*}

Fig.~\ref{fig:CRScenario} shows the scenario at four MPC iterations, visualizing the planned, sampled, and executed trajectories, respectively, along with the velocity and steering angle profiles for the entire MPC horizon. A video of this maneuver is provided as a supplementary attachment. During the maneuver, the ego vehicle shifts to the right to allow the ambulance to pass (see MPC iteration $20$). Subsequently, it returns to the lane and plans to cross the intersection, as the impatient driver indicates no intention to move at this time and no other priority vehicles are present (see MPC iteration $42$). However, the impatient driver abruptly cuts in front, necessitating a hard-braking maneuver of the ego vehicle (see MPC iteration $46$). Following this, the ego vehicle proceeds by following the leading vehicle (see MPC iteration $58$). Overall, given the specifications and system bounds, our minimum-violation planning approach yields a reasonable driving behavior in this highly complex scenario, thereby \textbf{confirming hypothesis \ref{hyp:CRScenario}}.

\subsubsection{Comparison with Alternative Approaches}\label{sec:AlternativeSolvers}

On $1750$ scenarios from the CommonRoad benchmark suite, we compare our solution approach with the preemptive scheme and evaluate our deterministic MPPI solver against state-of-the-art solvers.

We implement the preemptive scheme according to \eqref{eq:LexMinProblem} using our deterministic MPPI solver and the parameters in Tab.~\ref{tab:CommonRoadParameters}. Instead of minimizing $s(\by)$, we sequentially optimize the respective cost functions $\tilde c_{\varphi_i}(\by)$. The constraints $c_j(\by)=c_j(\by_j^*)$ in \eqref{eq:LexMinProblem} are enforced by discarding violating samples. Unlike our approach, the preemptive scheme must enforce these constraints and therefore fails to find feasible solutions in $51.3\%$ of the considered scenarios. In the remaining scenarios, both approaches achieve nearly identical solution quality. However, due to the sequential nature of the preemptive scheme, its average solve time of $4.9\,\si{\second}$ is significantly higher than the $0.38\,\si{\second}$ of our solution approach. These results strongly \textbf{support hypothesis~\ref{hyp:AlternativeSolvers}}.

Next, we compare our deterministic MPPI solver against the following state-of-the-art solvers: random shooting (RS), covariance matrix adaptation evolution strategy (CMA-ES), differential evolution (DE), simulated annealing (SA), and Frenetix (FREN). RS is our own implementation, FREN is adopted from~\cite{Trauth2024}, and for CMA-ES, DE, and SA, we use the implementations provided by the optimization library \texttt{pagmo}~\cite{Biscani2020}. To ensure a fair comparison, we choose the solver hyperparameters such that each solver evaluates approximately the same number of trajectory samples. We then extensively tune the remaining hyperparameters with \texttt{Optuna}~\cite{Akiba2019} on $150$ randomly selected CommonRoad scenarios, maximizing the number of scenarios in which the respective solver achieves a lower cost than our deterministic MPPI solver. The resulting parameters are provided in our source code.

Across the $1750$ CommonRoad scenarios, we compare the percentage $P_{\text{equal-or-lower}}$ of scenarios in which our deterministic MPPI solver achieves an equal or lower cost than the respective alternative solver. The results are presented in Tab.~\ref{tab:AlternativeSolvers}.

\begin{table}[htb]
    \centering
    \caption{Percentage of CommonRoad scenarios in which our deterministic MPPI solver achieves an equal or lower cost than state-of-the-art solvers.}
    \label{tab:AlternativeSolvers}
    \begin{tabular}{
            l
            S[table-format=2.1]
            S[table-format=2.1]
            S[table-format=2.1]
            S[table-format=2.1]
            S[table-format=2.1]
        }
        \toprule
        \textbf{Solver} & {RS} & {CMA-ES} & {DE} & {SA} & {FREN} \\
        \midrule
        {$P_{\text{equal-or-lower}}\,[\%]$}
                        & 93.1 & 77.2     & 75.0 & 83.5 & 74.2   \\
        \bottomrule
    \end{tabular}
\end{table}

The results indicate that, although the state-of-the-art solvers occasionally obtain solutions with lower cost, our deterministic MPPI solver achieves equal or lower cost in the majority of the evaluated scenarios. The consistently high percentages show that our deterministic MPPI solver is competitive with these solvers, thereby \textbf{confirming hypothesis~\ref{hyp:AlternativeSolvers}}.

% ------------------------------------------------------------------
\subsection{Comparison of Robustness Measures}\label{sec:ComparisonOfRobustnessMeasures}

Finally, we compare the comprehensive set of robustness measures defined in~Tab.~\ref{tab:RobustnessDefinitions} and conduct experiments using our running example and the previously introduced CommonRoad scenario. We also assess the computational performance of the measures across $1750$ CommonRoad scenarios.

\subsubsection{Running Example}\label{sec:ComparisonOfRobustnessMeasures_RunningExample}

We revisit the overtaking maneuver of a broken-down obstacle from the running example. Fig.~\ref{fig:robustness_comparison_scenarios}~(a) illustrates two distinct planning times $t_1$ and $t_2$, and $21$ manually created trajectories, respectively. The lane width is $d_{\text{lane}} = 6\,\text{m}$. We evaluate the robustness measures from Tab.~\ref{tab:RobustnessDefinitions} on the trajectories. For clarity, we omit $\eta_{\text{left-time}}$, $\eta_{\text{right-time}}$, and $\eta_{\text{dur}}$ as they behave similarly to $\eta_{\text{comb-time}}$, as well as $\eta_{\text{agm}}$ and $\eta_{\text{new}}$ which behave similar to $\eta_{\text{pm}}$. Fig.~\ref{fig:robustness_comparison_scenarios}~(b) shows the resulting robustness values for all samples for both time steps.
\begin{figure}[!htb]
    \centering
    \begin{subfigure}[b]{0.99\columnwidth}
        \centering
        \def\svgwidth{\linewidth}
        {\footnotesize
            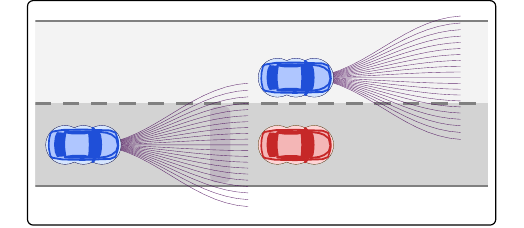}
        \caption{Scenario at time $t_1$ and time $t_2$ with the respective trajectory samples.}
        \vspace{2mm}
    \end{subfigure}
    \begin{subfigure}[b]{0.99\columnwidth}
        \centering
        {\footnotesize
            \resizebox{\columnwidth}{!}{
                \begin{tikzpicture}
    % Group 1: Space (Strong Red)
    \definecolor{spaceRed}{HTML}{C1121F}

    % Group 2: Averaging (Teal → Cyan)
    \definecolor{avgDark}{HTML}{005F73}
    \definecolor{avgLight}{HTML}{8EDED5}

    % Group 3: Duration (Burnt Orange)
    \definecolor{durLight}{HTML}{F08C00}

    % Group 4: Time (Deep Blue → Sky)
    \definecolor{timeLight}{HTML}{4EA8DE}

    % Group 5: Space Left Time (Royal Purple)
    \definecolor{spaceTime}{HTML}{009900}

    \pgfplotsset{
        my style/.style={
            width=9cm,
            height=5cm,
            xlabel={Sample index},
            ylabel={Robustness value $\eta^{\varphi_{\text{lane}}}$},
            xmin=0, xmax=20,
            xtick={0,2,...,20},
            minor xtick={0,1,...,20},
            grid=major,
            cycle list name=color list,
            %yticklabel style={text width=4ex, align=right}, % Ensure vertical alignment of axes
        }
    }

    % ==========================================================================================
    % Scenario 0 (Top)
    % ==========================================================================================
    \begin{axis}[
        name=plot0,
        my style,
        ymin=-1.1, ymax=1.1,
        ytick={-1,-0.5,0,0.5,1},
        xticklabels={}, % Hide x labels
        xlabel= {},
        extra y ticks={0},
        extra y tick style={grid style={black, thick}},
        legend style={
            at={(0.5,1.025)},
            anchor=south,
            font=\footnotesize,
            draw=none,
            cells={anchor=west},
            legend columns=3,
        },
    ]

    %\addplot[color=gray, thin, forget plot] table[x=SampleIndex, y=AGM] {figures/robustness_values_scenario_0.txt};
    %\addplot[color=gray, thin, forget plot] table[x=SampleIndex, y=New] {figures/robustness_values_scenario_0.txt};
    %\addplot[color=gray, thin, forget plot] table[x=SampleIndex, y=Duration] {figures/robustness_values_scenario_0.txt};
    %\addplot[color=gray, thin, forget plot] table[x=SampleIndex, y=TimeLeft] {figures/robustness_values_scenario_0.txt};
    %\addplot[color=gray, thin, forget plot] table[x=SampleIndex, y=TimeRight] {figures/robustness_values_scenario_0.txt};

    % Smooth
    \addplot[color=avgDark, thick, mark=square*, mark size=1pt, mark options={fill=avgDark}]
    table[x=SampleIndex, y=Smooth] {figures/robustness_values_scenario_0.txt};
    \addlegendentry{$\eta_{\text{smooth}}$}

    % PowerMean
    \addplot[color=avgLight, thick, mark=square*, mark size=1pt, mark options={fill=avgLight}]
    table[x=SampleIndex, y=PowerMean] {figures/robustness_values_scenario_0.txt};
    \addlegendentry{$\eta_{\text{pm}}$}

    % DurationSeverity
    \addplot[color=durLight, thick, mark=square*, mark size=1pt, mark options={fill=durLight}]
    table[x=SampleIndex, y=DurationSeverity] {figures/robustness_values_scenario_0.txt};
    \addlegendentry{$\eta_{\text{dur-sev}}$}

    % Space
    \addplot[color=spaceRed, line width=1pt, mark=*, mark size=1pt, mark options={fill=spaceRed}]
    table[x=SampleIndex, y=Space] {figures/robustness_values_scenario_0.txt};
    \addlegendentry{$\eta_{\text{space}}$}

    % TimeCombined
    \addplot[color=timeLight, line width=1pt, mark=triangle*, mark size=1pt, mark options={fill=timeLight}]
    table[x=SampleIndex, y=TimeCombined] {figures/robustness_values_scenario_0.txt};
    \addlegendentry{$\eta_{\text{comb-time}}$}

    % SpaceLeftTime
    \addplot[color=spaceTime, line width=1pt, mark=*, mark size=1pt, mark options={fill=spaceTime}]
    table[x=SampleIndex, y=SpaceLeftTime] {figures/robustness_values_scenario_0.txt};
    \addlegendentry{$\eta_{\text{space-left-time}}$}

    \node[anchor=north west, fill=white, fill opacity=0.8, text opacity=1, inner sep=1pt] at (rel axis cs: 0.02, 0.95) {\scriptsize Ego vehicle at $t_1$};

    \end{axis}

    % ==========================================================================================
    % Scenario 1 (Bottom)
    % ==========================================================================================
    \begin{axis}[
        name=plot1,
        at={(plot0.south west)},
        anchor=north west,
        ymin=-1.05, ymax=-0.25,
        ytick={-1, -0.8, -0.6, -0.4, -0.2},
        yshift=-4mm, % Small vertical gap
        my style,
    ]

    %\addplot[color=gray, thin, forget plot] table[x=SampleIndex, y=AGM] {figures/robustness_values_scenario_1.txt};
    %\addplot[color=gray, thin, forget plot] table[x=SampleIndex, y=New] {figures/robustness_values_scenario_1.txt};
    %\addplot[color=gray, thin, forget plot] table[x=SampleIndex, y=Duration] {figures/robustness_values_scenario_1.txt};
    %\addplot[color=gray, thin, forget plot] table[x=SampleIndex, y=TimeLeft] {figures/robustness_values_scenario_1.txt};
    %\addplot[color=gray, thin, forget plot] table[x=SampleIndex, y=TimeRight] {figures/robustness_values_scenario_1.txt};

    % Smooth
    \addplot[color=avgDark, thick, mark=square*, mark size=1pt, mark options={fill=avgDark}]
    table[x=SampleIndex, y=Smooth] {figures/robustness_values_scenario_1.txt};
    %\addlegendentry{$\eta_{\text{smooth}}$}

    % PowerMean
    \addplot[color=avgLight, thick, mark=square*, mark size=1pt, mark options={fill=avgLight}]
    table[x=SampleIndex, y=PowerMean] {figures/robustness_values_scenario_1.txt};
    %\addlegendentry{$\eta_{\text{pm}}$}

    % DurationSeverity
    \addplot[color=durLight, thick, mark=square*, mark size=1pt, mark options={fill=durLight}]
    table[x=SampleIndex, y=DurationSeverity] {figures/robustness_values_scenario_1.txt};
    %\addlegendentry{$\eta_{\text{dur-sev}}$}

    % Space
    \addplot[color=spaceRed, line width=1pt, mark=*, mark size=1pt, mark options={fill=spaceRed}]
    table[x=SampleIndex, y=Space] {figures/robustness_values_scenario_1.txt};
    %\addlegendentry{$\eta_{\text{space}}$}

    % TimeCombined
    \addplot[color=timeLight, line width=1pt, mark=triangle*, mark size=1pt, mark options={fill=timeLight}]
    table[x=SampleIndex, y=TimeCombined] {figures/robustness_values_scenario_1.txt};
    %\addlegendentry{$\eta_{\text{comb-time}}$}

    % SpaceLeftTime
    \addplot[color=spaceTime, line width=1pt, mark=*, mark size=1pt, mark options={fill=spaceTime}]
    table[x=SampleIndex, y=SpaceLeftTime] {figures/robustness_values_scenario_1.txt};
    %\addlegendentry{$\eta_{\text{space-left-time}}$}

    \node[anchor=north west, fill=white, fill opacity=0.8, text opacity=1, inner sep=1pt] at (rel axis cs: 0.02, 0.95) {\scriptsize Ego vehicle at $t_2$};

    \end{axis}
\end{tikzpicture}}}
        \vspace{-5mm}
        \caption{Robustness values of the specification $\varphi_{\text{lane}}$ for different robustness measures at time $t_1$ (top) and time $t_2$ (bottom). The values are normalized to $[-1,1]$ for each measure.}
    \end{subfigure}
    \caption{Comparison of robustness measures for the running example. (a) Visualization of the scenario and trajectory samples. (b) Robustness values for $\varphi_{\text{lane}}$.}
    \label{fig:robustness_comparison_scenarios}
    \vspace{-7mm}
\end{figure}

For our motion planning purposes, trajectories describing different maneuvers must be numerically distinguishable. This does not hold for the space robustness $\eta_{\text{space}}$, which is evident at $t_2$, as the samples that would turn right towards the lane (samples $10$ to $20$) share the same robustness value. This effect is even more dominant for the combined-time robustness $\eta_{\text{comb-time}}$. At time $t_1$, we cannot differentiate the samples that remain in the lane (samples $7$ to $13$). At time $t_2$, only the samples $18$ to $20$ can be distinguished, as only for these samples the ego vehicle is completely back in the lane. For the smooth robustness $\eta_{\text{smooth}}$ at $t_2$, the samples $12$ to $20$ have almost identical robustness values, making differentiation challenging. In contrast, the power-mean robustness $\eta_{\text{pm}}$, the duration-severity robustness $\eta_{\text{dur-sev}}$, and our space-left-time robustness $\eta_{\text{space-left-time}}$ provide a good differentiation of the trajectories at both time steps. These results \textbf{support hypothesis~\ref{hyp:RobustnessMeasures}}.

\subsubsection{CommonRoad Scenario}

We now evaluate the robustness measures on the CommonRoad scenario from Sec.~\ref{sec:CommonRoadExperiment}. We plan over $H=70$ MPC iterations using our Alg.~\ref{alg:PISolver}. Fig.~\ref{fig:cr_robustness_comparison_scenarios}~(a) visualizes the scenario, and Fig.~\ref{fig:cr_robustness_comparison_scenarios}~(b) presents the robustness values of the executed trajectory per measure over the MPC iterations for the in-lane-driving specification $\varphi_{6}$. Again, we only present a subset of the robustness measures for clarity.
\begin{figure}[!htb]
    \centering
    \begin{subfigure}[b]{0.99\columnwidth}
        \centering
        \def\svgwidth{\linewidth}
        {\footnotesize
            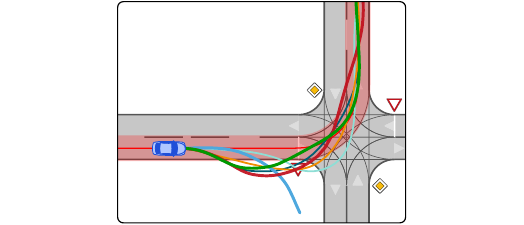}
        \caption{Executed trajectories for different robustness measures.}
        \vspace{2mm}
    \end{subfigure}
    \begin{subfigure}[b]{0.99\columnwidth}
        \centering
        {\footnotesize
            \resizebox{\columnwidth}{!}{
                \begin{tikzpicture}[spy using outlines={circle, magnification=1.9, size=1.5cm, connect spies}]
    % Group 1: Space (Strong Red)
    \definecolor{spaceRed}{HTML}{C1121F}

    % Group 2: Averaging (Teal → Cyan)
    \definecolor{avgDark}{HTML}{005F73}
    \definecolor{avgLight}{HTML}{8EDED5}

    % Group 3: Duration (Burnt Orange)
    \definecolor{durLight}{HTML}{F08C00}

    % Group 4: Time (Deep Blue → Sky)
    \definecolor{timeLight}{HTML}{4EA8DE}

    % Group 5: Space Left Time (Royal Purple)
    \definecolor{spaceTime}{HTML}{009900}

    \begin{axis}[
        name=plot0,
        width=9cm,
        height=6cm,
        xlabel={Discrete time $k$},
        ylabel={Robustness value $\eta^{\varphi_6}$},
        xmin=0, xmax=70,
        xtick={0,10,...,70},
        minor xtick={0,5,...,70},
        grid=major,
        cycle list name=color list,
        ymin=-1.1, ymax=0.25,
        ytick={-1,-0.75, -0.5,-0.25, 0, 0.25, 0.5},
        extra y ticks={0},
        extra y tick style={grid style={black, thick}},
        legend style={
            at={(0.5,1.025)},
            anchor=south,
            font=\footnotesize,
            fill=none,
            draw=none,
            cells={anchor=west},
            legend columns=3,
        },
    ]

    % Background plots (commented out as in reference)
    %\addplot[color=gray, thin, forget plot] table[x=TimeStep, y=AGM] {figures/scaled_inlanedriving_robustness.txt};
    %\addplot[color=gray, thin, forget plot] table[x=TimeStep, y=New] {figures/scaled_inlanedriving_robustness.txt};
    %\addplot[color=gray, thin, forget plot] table[x=TimeStep, y=Duration] {figures/scaled_inlanedriving_robustness.txt};
    %\addplot[color=gray, thin, forget plot] table[x=TimeStep, y=TimeLeft] {figures/scaled_inlanedriving_robustness.txt};
    %\addplot[color=gray, thin, forget plot] table[x=TimeStep, y=TimeRight] {figures/scaled_inlanedriving_robustness.txt};

    % Smooth
    \addplot[color=avgDark, thick, mark=square*, mark size=1pt, mark repeat=5, mark options={fill=avgDark}]
    table[x=TimeStep, y=Smooth] {figures/scaled_inlanedriving_robustness.txt};
    \addlegendentry{$\eta_{\text{smooth}}$}

    % PowerMean
    \addplot[color=avgLight, thick, mark=square*, mark size=1pt, mark repeat=5, mark options={fill=avgLight}]
    table[x=TimeStep, y=PowerMean] {figures/scaled_inlanedriving_robustness.txt};
    \addlegendentry{$\eta_{\text{pm}}$}

    % DurationSeverity
    \addplot[color=durLight, thick, mark=square*, mark size=1pt, mark repeat=5, mark options={fill=durLight}]
    table[x=TimeStep, y=DurationSeverity] {figures/scaled_inlanedriving_robustness.txt};
    \addlegendentry{$\eta_{\text{dur-sev}}$}

    % Space
    \addplot[color=spaceRed, line width=1pt, mark=*, mark size=1pt, mark repeat=5, mark options={fill=spaceRed}]
    table[x=TimeStep, y=Space] {figures/scaled_inlanedriving_robustness.txt};
    \addlegendentry{$\eta_{\text{space}}$}

    % TimeCombined
    \addplot[color=timeLight, line width=1pt, mark=triangle*, mark size=1pt, mark repeat=5, mark options={fill=timeLight}]
    table[x=TimeStep, y=TimeCombined] {figures/scaled_inlanedriving_robustness.txt};
    \addlegendentry{$\eta_{\text{comb-time}}$}

    % SpaceLeftTime
    \addplot[color=spaceTime, line width=1pt, mark=*, mark size=1pt, mark repeat=5, mark options={fill=spaceTime}]
    table[x=TimeStep, y=SpaceLeftTime] {figures/scaled_inlanedriving_robustness.txt};
    \addlegendentry{$\eta_{\text{space-left-time}}$}

    \coordinate (spypoint) at (axis cs:64.5,0.02);
    \coordinate (magnifyglass) at (axis cs:60,-0.65);

    \end{axis}
    \spy [black, size=2.5cm] on (spypoint) in node [fill=white] at (magnifyglass);
\end{tikzpicture}}}
        \vspace{-5mm}
        \caption{Robustness values for the specification $\varphi_6$ over the MPC iterations. The values are normalized to $[-1,1]$ for each measure.}
    \end{subfigure}
    \caption{Comparison of robustness measures in the CommonRoad scenario. (a) Visualization of the executed trajectories. (b) Robustness values for the in-lane-driving specification $\varphi_6$.}
    \label{fig:cr_robustness_comparison_scenarios}
\end{figure}

Fig.~\ref{fig:cr_robustness_comparison_scenarios}~(a) shows that all robustness measures, except the combined-time robustness $\eta_{\text{comb-time}}$, initially swerve to the right to clear the way for the ambulance and subsequently return to the lane. In contrast, $\eta_{\text{comb-time}}$ causes the ego vehicle to drive off the road and fails to return to the lane, which is also evident in its robustness values in Fig.~\ref{fig:cr_robustness_comparison_scenarios}~(b). Similarly, the values of the space robustness $\eta_{\text{space}}$ remain negative after the ego vehicle leaves the lane, clearly emphasizing the lack of differentiation between trajectory samples (as demonstrated in Sec.~\ref{sec:ComparisonOfRobustnessMeasures_RunningExample}). For the space-left-time robustness $\eta_{\text{space-left-time}}$, the ego vehicle rapidly moves to the right, as indicated by the steep decline of the robustness values during MPC iterations $0$ to $10$. Subsequently, the ego vehicle quickly returns to the lane, and the robustness remains positive after MPC iteration $45$. The remaining robustness measures exhibit similar behavior.

These results show that, unlike our combined measure, the pure space and time robustness measures are not suitable for our motion planning purposes. This again \textbf{supports hypothesis~\ref{hyp:RobustnessMeasures}}.

\subsubsection{Computational Performance}

Finally, we compare the computational performance of the robustness measures over $1750$ CommonRoad scenarios. We compare the following metrics over all scenarios:
\begin{itemize}
    \item $P_{\text{calls}}$: the number of calls of the predicate function $p_{\mu_{\text{in-lane}}}(\by,k)$ for one trajectory evaluation;
    \item $t_{\text{rob}}$: the time required to evaluate the respective robustness measure for one trajectory for the in-lane-driving specification $\varphi_6$;
    \item $t_{\text{sol}}$: the total solve time of Alg.~\ref{alg:PISolver} per MPC iteration.
\end{itemize}

The results are presented in Tab.~\ref{tab:CalculationTimes}: Due to their mathematical definition (cf. Tab.~\ref{tab:PredicateRobustnessDefinitions}), the space-left-time robustness and the time robustness measures inherently require the highest number of calls $P_{\text{calls}}$ to the predicate function $p_{\mu_{\text{in-lane}}}(\by,k)$. However, for a given trajectory $\by$, any predicate function $p_{\mu}(\by,k)$ can take at most $K+1$ distinct values. By caching these values, we avoid redundant predicate function evaluations, which significantly reduces the required computation time for the space-left-time robustness and the time robustness measures. As a result, evaluating our space-left-time robustness for specification $\varphi_6$ only doubles $t_{\text{rob}}$ compared to the standard space robustness. For specifications with more complex predicates (e.g., the collision avoidance specification $\varphi_2$, which requires expensive geometric computations for each obstacle), this relative difference in evaluation time for the robustness measures becomes even smaller. In these cases, $t_{\text{rob}}$ is heavily dominated by the evaluation of the predicate functions rather than the calculation of the robustness measure itself. This is also reflected in the total solve times $t_{\text{sol}}$, which are almost identical across all measures, indicating that the choice of the robustness measure has only a minor influence on the overall solve time. Consequently, the total solve time grows proportionally with the planning time horizon $K$ due to the corresponding linear increase in required predicate function evaluations.

\begin{table}[htb]
    \vspace{-2mm}
    \centering
    \caption{Computational performance of robustness measures.}
    \label{tab:CalculationTimes}
    \begin{tabular}{
            l
            S[table-format=3.0]
            S[table-format=2.0]
            S[table-format=1.2]
            S[table-format=1.2]
            S[table-format=3.1]
            S[table-format=2.1]
        }
        \toprule
        \multirow{2}{*}[-0.6ex]{\shortstack[l]{\textbf{Robustness}                                                                     \\\textbf{Measure}}}        &
        \multicolumn{2}{c}{\text{$P_{\text{calls}}$}}                     &
        \multicolumn{2}{c}{\text{$t_{\text{rob}}\ [\si{\micro\second}]$}} &
        \multicolumn{2}{c}{\text{$t_{\text{sol}}\ [\si{\milli\second}]$}}                                                              \\
        \cmidrule(lr){2-3} \cmidrule(lr){4-5} \cmidrule(lr){6-7}
                                                                          & \multicolumn{1}{c}{Mean}
                                                                          & \multicolumn{1}{c}{Std.}
                                                                          & \multicolumn{1}{c}{Mean}
                                                                          & \multicolumn{1}{c}{Std.}
                                                                          & \multicolumn{1}{c}{Mean}
                                                                          & \multicolumn{1}{c}{Std.}                                   \\
        \midrule
        $\eta_{\text{space}}$                                             & 16                       & 0  & 0.87 & 0.03 & 355.4 & 38.2 \\
        $\eta_{\text{left-time}}$                                         & 100                      & 23 & 1.60 & 0.09 & 377.0 & 44.2 \\
        $\eta_{\text{right-time}}$                                        & 100                      & 23 & 1.48 & 0.07 & 369.1 & 43.5 \\
        $\eta_{\text{comb-time}}$                                         & 95                       & 21 & 1.62 & 0.08 & 373.1 & 43.2 \\
        \midrule
        $\eta_{\text{dur}}$                                               & 16                       & 0  & 0.91 & 0.03 & 357.4 & 40.0 \\
        $\eta_{\text{dur-sev}}$                                           & 16                       & 0  & 0.89 & 0.03 & 355.9 & 40.6 \\
        $\eta_{\text{smooth}}$                                            & 16                       & 0  & 1.02 & 0.03 & 359.9 & 41.4 \\
        $\eta_{\text{agm}}$                                               & 16                       & 0  & 0.90 & 0.03 & 356.7 & 36.6 \\
        $\eta_{\text{new}}$                                               & 16                       & 0  & 1.31 & 0.06 & 383.7 & 40.3 \\
        $\eta_{\text{pm}}$                                                & 16                       & 0  & 1.05 & 0.05 & 366.8 & 38.4 \\
        \midrule
        $\eta_{\text{space-left-time}}$                                   & 100                      & 24 & 1.64 & 0.08 & 377.0 & 43.9 \\
        \bottomrule
    \end{tabular}
    %\vspace{-3mm}
\end{table}

These results demonstrate that our space-left-time robustness remains computationally competitive, thus \textbf{supporting hypothesis~\ref{hyp:RobustnessMeasures}}. Note that a qualitative comparison of the resulting trajectories is omitted, as solution quality depends heavily on the parameterization and specifications used.

\section{Conclusions}\label{sec:Conclusions}

We present a comprehensive framework for minimum-violation motion planning that resolves conflicting, totally ordered specifications formalized in STL. By discretizing the underlying lexicographic optimization problem and transforming it into a single-objective problem via bit-shifting, we avoid the high computational complexity of the preemptive scheme. This enables the optimization of non-smooth STL robustness measures without restricting the STL fragment, thereby allowing the efficient solution of highly complex autonomous driving scenarios in a computationally tractable manner. To address the resulting non-smooth optimization problem, we extend the deterministic MPPI solver by eliminating the mandatory quadratic input cost and introducing additional adaptations that improve solution quality and computation time. Our extensive experimental evaluations demonstrate that for the proposed scalarization, even a limited number of evenly distributed violation intervals achieves low violation errors. Finally, our benchmark confirms that our proposed space-left-time robustness measure resolves the trajectory indistinguishability of standard measures while remaining computationally competitive. Future work will focus on automatically normalizing the predicate functions to eliminate manual tuning of the upper bound cost values. We also plan to validate the approach through real-world testing.

% ===================================
% Acknowledgement section
% ===================================

\section{Acknowledgment}

The authors gratefully thank Fabian Christ for his help with the code implementations.

% ===================================
% Appendix section
% ===================================

\appendices

\section{Minimum and Maximum Operators Definitions}\label{app:MinMaxDefinitions}

This appendix provides the definitions of the minimum operators $\amin_{\star}$ and the maximum operators $\amax_{\star}$. Let $\bkappa = [\kappa_1, \dots, \kappa_z] \in \overline{\mathbb{R}}^z$ with $z \ge 1$, and let $\nu_1, \nu_2, \nu_3, \nu_4, \nu_5 \in \mathbb{R}_{>0}$. For $\star \in \{\text{std}, \text{dur}\}$, we define the associated maximum operators by the duality rule $\amax_{\star}(\bkappa) := - \amin_{\star}(-\bkappa)$. For $\star \in \{\text{agm}, \text{new}, \text{pm}, \text{dur-sev}\}$, we likewise define $\amax_{\star}$ by duality, but additionally enforce the following conventions on $\overline{\mathbb{R}}$ for the corresponding minimum operators: if any component satisfies $\kappa_i = -\infty$, then $\amin_{\star}(\bkappa) := -\infty$; if all components satisfy $\kappa_i = +\infty$, then $\amin_{\star}(\bkappa) := +\infty$. The smooth operators are defined explicitly (i.e., $\amin_{\text{smooth}}$ and $\amax_{\text{smooth}}$ are not linked by duality); therefore, to ensure soundness on $\overline{\mathbb{R}}$ we also impose extended-real conventions: if any $\kappa_i = -\infty$, then $\amin_{\text{smooth}}(\bkappa) := -\infty$; if all $\kappa_i = +\infty$, then $\amin_{\text{smooth}}(\bkappa) := +\infty$. Similarly for the maximum operator, if any $\kappa_i = +\infty$, then $\amax_{\text{smooth}}(\bkappa) := +\infty$, and if all $\kappa_i = -\infty$, then $\amax_{\text{smooth}}(\bkappa) := -\infty$. In all remaining cases (i.e., after applying the above conventions), the operators are evaluated using standard extended-real arithmetic conventions (e.g., $e^{-\infty}=0$). The operators are defined as follows:
\begingroup
\allowdisplaybreaks
\begin{align*}
    \amin_{\text{std}}(\bkappa)     & := \kappa_{\text{min}},                                                                                                                                                                                                                  \\
    \amin_{\text{dur}}(\bkappa)     & := \begin{cases}
                                             \kappa_{\text{min}},                                 & \text{if } \kappa_{\text{min}} > 0, \\
                                             -\frac{1}{z} \sum_{i=1}^z \mathbf{1}_{\kappa_i < 0}, & \text{else},
                                         \end{cases}                                                                                                                      \\
    \amin_{\text{dur-sev}}(\bkappa) & := \begin{cases}
                                             \kappa_{\text{min}},                        & \text{if } \kappa_{\text{min}} > 0, \\
                                             \frac{1}{z} \sum_{i=1}^z \min(\kappa_i, 0), & \text{else},
                                         \end{cases}                                                                                                                             \\
    \amin_{\text{smooth}}(\bkappa)  & := -\frac{1}{\nu_1} \log \left( \sum_{i=1}^{z} e^{-\nu_1 \kappa_i} \right),                                                                                                                                                              \\
    \amax_{\text{smooth}}(\bkappa)  & := \frac{\sum_{i=1}^{z} \kappa_i e^{\nu_2 \kappa_i}}{\sum_{i=1}^{z} e^{\nu_2 \kappa_i}},                                                                                                                                                 \\
    \amin_{\text{agm}}(\bkappa)     & := \begin{cases}
                                             \frac{1}{z} \sum_{i=1}^z \min(\kappa_i, 0), & \text{if } \kappa_{\text{min}} \leq 0, \\[0.5em]
                                             \sqrt[z]{\prod_{i=1}^z (1+\kappa_i)} - 1,   & \text{else},                           \\
                                         \end{cases}                                                                                                                             \\
    \amin_{\text{new}}(\bkappa)     & := \begin{cases}
                                             \dfrac{\sum_{i=1}^z \kappa_{\text{min}} e^{\tilde\kappa_i} e^{\nu_3 \tilde\kappa_i}}{\sum_{i=1}^z e^{\nu_3 \tilde\kappa_i}}, & \text{if } \kappa_{\text{min}} < 0, \\[1em]
                                             \dfrac{\sum_{i=1}^z \kappa_i e^{-\nu_3 \tilde\kappa_i}}{\sum_{i=1}^z e^{-\nu_3 \tilde\kappa_i}},                             & \text{if } \kappa_{\text{min}} > 0, \\[1em]
                                             0,                                                                                                                           & \text{else},
                                         \end{cases} \\
    \amin_{\text{pm}}(\bkappa)      & := \begin{cases}
                                             \left(\frac{1}{z} \sum_{i=1}^z \kappa_i^{\nu_4} \right)^{\frac{1}{\nu_4}}, \text{if } \kappa_{\text{min}} > 0, \\[1em]
                                             -\left(\frac{1}{z} \sum_{i=1}^z (-\min(\kappa_i, 0))^{\nu_5} \right)^{\frac{1}{\nu_5}}, \text{else},
                                         \end{cases}
\end{align*}
\endgroup
with $\kappa_{\text{min}} := \min(\bkappa)$, $\tilde\kappa_i := (\kappa_i-\kappa_{\text{min}})/\kappa_{\text{min}}$ for $\kappa_{\text{min}} \neq 0$, and $\mathbf{1}_{\kappa_i < 0} := 1$ if $\kappa_i < 0$, else $0$.

\section{Proof of Order Representation}\label{app:ProofOrderRepresentation}

This appendix provides the proof for Lem.~\ref{lem:OrderRepresentation}, establishing the equivalence between the lexicographic order on the discrete cost vectors and the scalar cost values.

\begin{proof}
    If $\tilde\bc(\by) = \tilde\bc(\by')$, then $s(\by) = s(\by')$ holds trivially.
    Assume $\tilde\bc(\by) \neq \tilde\bc(\by')$ and $\tilde\bc(\by) \prelex \tilde\bc(\by')$. Let $i^* \in \{1,\dots, N\}$ be the smallest index at which the discrete cost vectors differ, i.e., $\tilde c_{\varphi_i}(\by) = \tilde c_{\varphi_i}(\by')$ for all $i < i^*$ and $\tilde c_{\varphi_{i^*}}(\by) < \tilde c_{\varphi_{i^*}}(\by')$ (cf. Def.~\ref{def:LexicographicallyOrderedSet}). This strict inequality implies $\tilde c_{\varphi_{i^*}}(\by') - \tilde c_{\varphi_{i^*}}(\by) \geq 1$.
    The difference in scalar cost is
    \begin{equation*}
        \begin{aligned}
            s(\by') - s(\by) =\; & (\tilde c_{\varphi_{i^*}}(\by') - \tilde c_{\varphi_{i^*}}(\by)) 2^{B_{i^*}}         \\
                                 & + \sum_{i=i^*+1}^N (\tilde c_{\varphi_i}(\by') - \tilde c_{\varphi_i}(\by)) 2^{B_i}.
        \end{aligned}
    \end{equation*}
    Since $0 \le \tilde c_{\varphi_i}(\cdot) \le m_{\varphi_i} < 2^{b_i}$, the magnitude of the residual sum is strictly bounded by $\sum_{i=i^*+1}^N (2^{b_i}-1) 2^{B_i} = 2^{B_{i^*}} - 1$. Consequently, the term $(\tilde c_{\varphi_{i^*}}(\by') - \tilde c_{\varphi_{i^*}}(\by)) 2^{B_{i^*}} \ge 2^{B_{i^*}}$ dominates the lower-priority terms, ensuring $s(\by') - s(\by) > 0$. The reverse direction follows from the uniqueness of the base representation, which is guaranteed by the disjoint bit ranges assigned to each cost component in~(\ref{eq:ScalarizationFunction}).
\end{proof}

% ===================================
% Bibliography
% ===================================

\bibliographystyle{IEEEtran}
\bibliography{2026_journal.bib}

@inproceedings{Althoff2017,
  author    = {Althoff, Matthias and Koschi, Markus and Manzinger, Stefanie},
  booktitle = {Proc. of the {IEEE} Intelligent Vehicles Symposium},
  title     = {{CommonRoad}: {Composable} Benchmarks for Motion Planning on Roads},
  year      = {2017},
  pages     = {719--726}
}

@inproceedings{Halder2025ICRA,
  author    = {Halder, Patrick and Homburger, Hannes and Kiltz, Lothar and Reuter, Johannes and Althoff, Matthias},
  booktitle = {Proc. of the {IEEE} Int. Conf. on Robotics and Automation},
  title     = {Trajectory Planning with Signal Temporal Logic Costs Using Deterministic Path Integral Optimization},
  year      = {2025},
  pages     = {4221--4228}
}

@article{Homburger2025,
  author  = {Homburger, Hannes and Messerer, Florian and Diehl, Moritz and Reuter, Johannes},
  journal = {IEEE Control Systems Letters},
  title   = {Optimality and Suboptimality of {MPPI} Control in Stochastic and Deterministic Settings},
  year    = {2025},
  pages   = {763--768},
  volume  = {9}
}

@article{Homburger2025Gauss,
  author  = {Homburger, Hannes and Baumg{\"a}rtner, Katrin and Diehl, Moritz and Reuter, Johannes},
  journal = {arXiv preprint arXiv:2512.04579},
  title   = {Gauss-Newton accelerated MPPI Control},
  year    = {2025},
  pages   = {1--6}
}

@article{Lin2025,
  author  = {Lin, Yuanfei and Xing, Zekun and Han, Xuyuan and Althoff, Matthias},
  journal = {IEEE Trans. on Robotics},
  title   = {Traffic-Rule-Compliant Trajectory Repair via Satisfiability Modulo Theories and Reachability Analysis},
  year    = {2025},
  pages   = {5932--5950},
  volume  = {41}
}

@inproceedings{Halder2025IV,
  author    = {Halder, Patrick and Althoff, Matthias},
  booktitle = {Proc. of the {IEEE} Intelligent Vehicles Symposium},
  title     = {Sampling-Based Motion Planning with Preordered Objectives},
  year      = {2025},
  pages     = {125--131}
}

@inproceedings{Halder2022,
  author    = {Halder, Patrick and Althoff, Matthias},
  booktitle = {Proc. of the {IEEE} Int. Conf. on Intelligent Transportation Systems},
  title     = {Minimum-Violation Velocity Planning with Temporal Logic Constraints},
  year      = {2022},
  pages     = {2520--2527}
}

@inproceedings{Halder2023,
  author    = {Halder, Patrick and Christ, Fabian and Althoff, Matthias},
  booktitle = {Proc. of the {IEEE} Int. Conf. on Intelligent Transportation Systems},
  title     = {Lexicographic Mixed-Integer Motion Planning with {STL} Constraints},
  year      = {2023},
  pages     = {1361--1367}
}

@inproceedings{Lercher2025,
  author    = {Lercher, Florian and Althoff, Matthias},
  booktitle = {Proc. of the {IEEE} Int. Conf. on Intelligent Transportation Systems},
  title     = {Efficient Driving Corridor Extraction for Autonomous Vehicles Using Best-First Reachability Analysis},
  year      = {2025},
  pages     = {407--414}
}

@book{Ehrgott2005,
  author    = {Ehrgott, Matthias},
  publisher = {Springer},
  title     = {Multicriteria Optimization},
  year      = {2005},
  volume    = {2}
}

@book{Feller1991,
  author    = {Feller, William},
  publisher = {John Wiley \& Sons},
  title     = {An introduction to probability theory and its applications},
  year      = {1991},
  volume    = {2}
}

@inproceedings{Homburger2023,
  author    = {Homburger, Hannes and Wirtensohn, Stefan and Reuter, Johannes},
  booktitle = {Proc. of the European Control Conf.},
  title     = {Efficient Nonlinear Model Predictive Path Integral Control for Stochastic Systems considering Input Constraints},
  year      = {2023},
  pages     = {1--6}
}

@book{Schroeder2016,
  author    = {Schr{\"o}der, Bernd Siegfried Walter},
  publisher = {Springer},
  title     = {Ordered Sets: {An} Introduction with Connections from Combinatorics to Topology},
  year      = {2016},
  volume    = {2}
}

@article{Beardon2002,
  author  = {Beardon, Alan F. and Candeal, Juan C. and Herden, Gerhard and Indur{\'a}in, Esteban and Mehta, Ghanshyam B.},
  journal = {Journal of Mathematical Economics},
  title   = {The Non-Existence of a Utility Function and the Structure of Non-Representable Preference Relations},
  year    = {2002},
  number  = {1},
  pages   = {17--38},
  volume  = {37}
}

@incollection{Debreu1983,
  author    = {Debreu, Gerard and Hildenbrand, Werner},
  booktitle = {Mathematical Economics: {T}wenty Papers of Gerard Debreu},
  title     = {Representation of a Preference Ordering by a Numerical Function},
  year      = {1983},
  pages     = {105--110}
}

@article{Mehdipour2023,
  author  = {Mehdipour, Noushin and Althoff, Matthias and Duintjer Tebbens, Radboud and Belta, Calin},
  journal = {Automatica},
  title   = {Formal Methods to Comply With Rules of the Road in Autonomous Driving: {State} of the Art and Grand Challenges},
  year    = {2023},
  pages   = {1--15},
  volume  = {152}
}

@article{Wetzlinger2026,
  author  = {Wetzlinger, Mark and Althoff, Matthias},
  journal = {IEEE Trans. on Automatic Control},
  title   = {Backward Reachability Analysis of Perturbed Continuous-Time Linear Systems Using Set Propagation},
  year    = {2026},
  number  = {1},
  pages   = {352--367},
  volume  = {71}
}

@inproceedings{Rizaldi2017,
  author    = {Rizaldi, Albert and Keinholz, Jonas and Huber, Monika and Feldle, Jochen and Immler, Fabian and Althoff, Matthias and Hilgendorf, Eric and Nipkow, Tobias},
  booktitle = {Proc. of the Int. Conf. on Integrated Formal Methods},
  title     = {Formalising and Monitoring Traffic Rules for Autonomous Vehicles in {Isabelle}/{HOL}},
  year      = {2017},
  pages     = {50--66}
}

@inproceedings{Esterle2020,
  author    = {Esterle, Klemens and Gressenbuch, Luis and Knoll, Alois},
  booktitle = {Proc. of the {IEEE} Connected and Automated Vehicles Symposium},
  title     = {Formalizing Traffic Rules for Machine Interpretability},
  year      = {2020},
  pages     = {1--7}
}

@inproceedings{Tumova2013,
  author    = {T{\r{u}}mov{\'a}, Jana and Hall, Gavin C. and Karaman, Sertac and Frazzoli, Emilio and Rus, Daniela},
  booktitle = {Proc. of the {ACM} Int. Conf. on Hybrid Systems: {Computation} and Control},
  title     = {Least-Violating Control Strategy Synthesis with Safety Rules},
  year      = {2013},
  pages     = {1--10}
}

@inproceedings{ReyesCastro2013,
  author    = {Reyes Castro, Luis I. and Chaudhari, Pratik and T{\r{u}}mov{\'a}, Jana and Karaman, Sertac and Frazzoli, Emilio and Rus, Daniela},
  booktitle = {Proc. of the {IEEE} Conf. on Decision and Control},
  title     = {Incremental Sampling-Based Algorithm for Minimum-Violation Motion Planning},
  year      = {2013},
  pages     = {3217--3224}
}

@inproceedings{Rong2020,
  author    = {Rong, Jikun and Luan, Nan},
  booktitle = {Proc. of the {IEEE} Int. Conf. on Mechatronics and Automation},
  title     = {Safe Reinforcement Learning with Policy-Guided Planning for Autonomous Driving},
  year      = {2020},
  pages     = {320--326}
}

@inproceedings{Schlueter2018,
  author    = {Schl{\"u}ter, Henning and Schillinger, Philipp and B{\"u}rger, Mathias},
  booktitle = {Proc. of the {IEEE} Conf. on Decision and Control},
  title     = {On the Design of Penalty Structures for Minimum-Violation {LTL} Motion Planning},
  year      = {2018},
  pages     = {4153--4158}
}

@inproceedings{Vasile2017,
  author    = {Vasile, Cristian-Ioan and T{\r{u}}mov{\'a}, Jana and Karaman, Sertac and Belta, Calin and Rus, Daniela},
  booktitle = {Proc. of the {IEEE} Int. Conf. on Robotics and Automation},
  title     = {Minimum-Violation {scLTL} Motion Planning for Mobility-on-Demand},
  year      = {2017},
  pages     = {1481--1488}
}

@inproceedings{Maierhofer2020,
  author    = {Maierhofer, Sebastian and Rettinger, Anna-Katharina and Mayer, Eva Charlotte and Althoff, Matthias},
  booktitle = {Proc. of the {IEEE} Intelligent Vehicles Symposium},
  title     = {Formalization of Interstate Traffic Rules in Temporal Logic},
  year      = {2020},
  pages     = {752--759}
}

@article{Shi2025,
  author  = {Shi, Ruolin and Wang, Xuesong},
  journal = {Expert Systems with Applications},
  title   = {Digitizing Traffic Rules to Guide Automated Vehicle Trajectory Planning},
  year    = {2025},
  pages   = {1--22},
  volume  = {272}
}

@inproceedings{Hekmatnejad2019,
  author    = {Hekmatnejad, Mohammad and Yaghoubi, Shakiba and Dokhanchi, Adel and Amor, Heni Ben and Shrivastava, Aviral and Karam, Lina and Fainekos, Georgios},
  booktitle = {Proc. of the {ACM}/{IEEE} Int. Conf. on Formal Methods and Models for System Design},
  title     = {Encoding and Monitoring Responsibility Sensitive Safety Rules for Automated Vehicles in Signal Temporal Logic},
  year      = {2019},
  pages     = {1--11}
}

@inproceedings{Arechiga2019,
  author    = {Ar{\'e}chiga, Nikos},
  booktitle = {Proc. of the {IEEE} Intelligent Vehicles Symposium},
  title     = {Specifying Safety of Autonomous Vehicles in Signal Temporal Logic},
  year      = {2019},
  pages     = {58--63}
}

@incollection{Bartocci2018,
  author    = {Bartocci, Ezio and Deshmukh, Jyotirmoy and Donz{\'e}, Alexandre and Fainekos, Georgios and Maler, Oded and Ni{\v{c}}kovi{\'c}, Dejan and Sankaranarayanan, Sriram},
  booktitle = {Lectures on Runtime Verification: {I}ntroductory and Advanced Topics},
  title     = {Specification-Based Monitoring of Cyber-Physical Systems: {A} Survey on Theory, Tools and Applications},
  year      = {2018},
  pages     = {135--175}
}

@inproceedings{Siu2023,
  author    = {Siu, Ho Chit and Leahy, Kevin and Mann, Makai},
  booktitle = {Proc. of the {IEEE}/{RSJ} Int. Conf. on Intelligent Robots and Systems},
  title     = {{STL}: {S}urprisingly Tricky Logic (for System Validation)},
  year      = {2023},
  pages     = {8613--8620}
}

@inproceedings{Hurley2024,
  author    = {Hurley, Isabelle and Paleja, Rohan and Suh, Ashley and Pe\~{n}a, Jaime D. and Siu, Ho Chit},
  booktitle = {Proc. of the Conf. on Neural Information Processing Systems},
  title     = {{STL}: {Still} Tricky Logic (for System Validation, Even When Showing Your Work)},
  year      = {2024},
  pages     = {119099--119122}
}

@inproceedings{Maierhofer2022,
  author    = {Maierhofer, Sebastian and Moosbrugger, Paul and Althoff, Matthias},
  booktitle = {Proc. of the {IEEE} Intelligent Vehicles Symposium},
  title     = {Formalization of Intersection Traffic Rules in Temporal Logic},
  year      = {2022},
  pages     = {1135--1144}
}

@inproceedings{Krasowski2021,
  author    = {Krasowski, Hanna and Althoff, Matthias},
  booktitle = {Proc. of the {IEEE} Intelligent Vehicles Symposium},
  title     = {Temporal Logic Formalization of Marine Traffic Rules},
  year      = {2021},
  pages     = {186--192}
}

@article{Uzun2024,
  author  = {Uzun, Samet and Elango, Purnanand and Garoche, Pierre-Loic and Acikmese, Behcet},
  journal = {arXiv preprint arXiv:2405.10996},
  title   = {Optimization with Temporal and Logical Specifications via Generalized Mean-Based Smooth Robustness Measures},
  year    = {2024},
  pages   = {1--16}
}

@article{Welikala2023,
  author  = {Welikala, Shirantha and Lin, Hai and Antsaklis, Panos J.},
  journal = {arXiv preprint arXiv:2305.09116},
  title   = {Smooth Robustness Measures for Symbolic Control via Signal Temporal Logic},
  year    = {2023},
  pages   = {1--16}
}

@article{Lin2020,
  author  = {Lin, Zhenyu and Baras, John S.},
  journal = {IFAC-PapersOnLine},
  title   = {Optimization-Based Motion Planning and Runtime Monitoring for Robotic Agent with Space and Time Tolerances},
  year    = {2020},
  number  = {2},
  pages   = {1874--1879},
  volume  = {53}
}

@inproceedings{Karlsson2021,
  author    = {Karlsson, Jesper and van Waveren, Sanne and Pek, Christian and Torre, Ilaria and Leite, Iolanda and T{\r{u}}mov{\'a}, Jana},
  booktitle = {Proc. of the {IEEE} Int. Conf. on Robotics and Automation},
  title     = {Encoding Human Driving Styles in Motion Planning for Autonomous Vehicles},
  year      = {2021},
  pages     = {1050--1056}
}

@article{Karlsson2020,
  author  = {Karlsson, Jesper and Barbosa, Fernando S. and T{\r{u}}mov{\'a}, Jana},
  journal = {IFAC-PapersOnLine},
  title   = {Sampling-Based Motion Planning with Temporal Logic Missions and Spatial Preferences},
  year    = {2020},
  number  = {2},
  pages   = {15537--15543},
  volume  = {53}
}

@article{Rodionova2023,
  author  = {Rodionova, Al{\"e}na and Lindemann, Lars and Morari, Manfred and Pappas, George J.},
  journal = {IEEE Control Systems Letters},
  title   = {Combined Left and Right Temporal Robustness for Control Under {STL} Specifications},
  year    = {2023},
  pages   = {619--624},
  volume  = {7}
}

@inproceedings{Verhagen2024,
  author    = {Verhagen, Joris and Lindemann, Lars and T{\r{u}}mov{\'a}, Jana},
  booktitle = {Proc. of the American Control Conf.},
  title     = {Temporally Robust Multi-Agent {STL} Motion Planning in Continuous Time},
  year      = {2024},
  pages     = {251--258}
}

@inproceedings{Yu2023,
  author    = {Yu, Xinyi and Yin, Xiang and Lindemann, Lars},
  booktitle = {Proc. of the {IEEE} Conf. on Decision and Control},
  title     = {Efficient {STL} Control Synthesis Under Asynchronous Temporal Robustness Constraints},
  year      = {2023},
  pages     = {6847--6854}
}

@inproceedings{Lindemann2022,
  author    = {Lindemann, Lars and Rodionova, Alena and Pappas, George},
  booktitle = {Proc. of the {ACM} Int. Conf. on Hybrid Systems: {C}omputation and Control},
  title     = {Temporal Robustness of Stochastic Signals},
  year      = {2022},
  pages     = {1--11}
}

@article{Mehdipour2025,
  author  = {Mehdipour, Noushin and Vasile, Cristian-Ioan and Belta, Calin},
  journal = {IEEE Trans. on Automatic Control},
  title   = {Generalized Mean Robustness for Signal Temporal Logic},
  year    = {2025},
  number  = {3},
  pages   = {1949--1956},
  volume  = {70}
}

@article{Lin2023,
  author  = {Lin, Yuanfei and Li, Haoxuan and Althoff, Matthias},
  journal = {IEEE Robotics and Automation Letters},
  title   = {Model Predictive Robustness of Signal Temporal Logic Predicates},
  year    = {2023},
  number  = {12},
  pages   = {8050--8057},
  volume  = {8}
}

@inproceedings{Lin2023a,
  author    = {Lin, Yuanfei and Althoff, Matthias},
  booktitle = {Proc. of the {IEEE} Intelligent Vehicles Symposium},
  title     = {{CommonRoad-CriMe}: {A} Toolbox for Criticality Measures of Autonomous Vehicles},
  year      = {2023},
  pages     = {1-8}
}

@inproceedings{Zhao2022,
  author    = {Zhao, Deng and Zhou, Zhangbing and Cai, Zhipeng and Long, Teng and Yangui, Sami and Xue, Xiao},
  booktitle = {Proc. of the {IEEE} Int. Conf. on Web Services},
  title     = {{ASTL}: {A}ccumulative Signal Temporal Logic for {IoT} Service Monitoring},
  year      = {2022},
  pages     = {256--265}
}

@inproceedings{Haghighi2019,
  author    = {Haghighi, Iman and Mehdipour, Noushin and Bartocci, Ezio and Belta, Calin},
  booktitle = {Proc. of the {IEEE} Conf. on Decision and Control},
  title     = {Control from Signal Temporal Logic Specifications with Smooth Cumulative Quantitative Semantics},
  year      = {2019},
  pages     = {4361--4366}
}

@article{Kurtz2021,
  author  = {Kurtz, Vince and Lin, Hai},
  journal = {IEEE Control Systems Letters},
  title   = {Trajectory Optimization for High-Dimensional Nonlinear Systems Under {STL} Specifications},
  year    = {2021},
  number  = {4},
  pages   = {1429--1434},
  volume  = {5}
}

@article{Gilpin2021,
  author  = {Gilpin, Yann and Kurtz, Vince and Lin, Hai},
  journal = {IEEE Control Systems Letters},
  title   = {A Smooth Robustness Measure of Signal Temporal Logic for Symbolic Control},
  year    = {2021},
  number  = {1},
  pages   = {241--246},
  volume  = {5}
}

@inproceedings{Dawson2022,
  author    = {Dawson, Charles and Fan, Chuchu},
  booktitle = {Proc. of the {IEEE}/{RSJ} Int. Conf. on Intelligent Robots and Systems},
  title     = {Robust Counterexample-Guided Optimization for Planning from Differentiable Temporal Logic},
  year      = {2022},
  pages     = {7205--7212}
}

@inproceedings{Pant2018,
  author    = {Pant, Yash Vardhan and Abbas, Houssam and Quaye, Rhudii A. and Mangharam, Rahul},
  booktitle = {Proc. of the {ACM}/{IEEE} Int. Conf. on Cyber-Physical Systems},
  title     = {Fly-by-Logic: {C}ontrol of Multi-Drone Fleets with Temporal Logic Objectives},
  year      = {2018},
  pages     = {186--197}
}

@inproceedings{Pant2017,
  author    = {Pant, Yash Vardhan and Abbas, Houssam and Mangharam, Rahul},
  booktitle = {Proc. of the {IEEE} Conf. on Control Technology and Applications},
  title     = {Smooth Operator: {Control} Using the Smooth Robustness of Temporal Logic},
  year      = {2017},
  pages     = {1235--1240}
}

@article{Baharisangari2022,
  author  = {Baharisangari, Nasim and Hirota, Kazuma and Yan, Ruixuan and Julius, Agung and Xu, Zhe},
  journal = {IEEE Control Systems Letters},
  title   = {Weighted Graph-Based Signal Temporal Logic Inference Using Neural Networks},
  year    = {2022},
  pages   = {2096--2101},
  volume  = {6}
}

@inproceedings{Cardona2023,
  author    = {Cardona, Gustavo A. and Vasile, Cristian-Ioan},
  booktitle = {Proc. of the European Control Conf.},
  title     = {Preferences on Partial Satisfaction using Weighted Signal Temporal Logic Specifications},
  year      = {2023},
  pages     = {1--6}
}

@article{Mehdipour2021,
  author  = {Mehdipour, Noushin and Vasile, Cristian-Ioan and Belta, Calin},
  journal = {IEEE Control Systems Letters},
  title   = {Specifying User Preferences Using Weighted Signal Temporal Logic},
  year    = {2021},
  number  = {6},
  pages   = {2006--2011},
  volume  = {5}
}

@inproceedings{Varnai2020,
  author    = {Varnai, Peter and Dimarogonas, Dimos V.},
  booktitle = {Proc. of the American Control Conf.},
  title     = {On Robustness Metrics for Learning {STL} Tasks},
  year      = {2020},
  pages     = {5394--5399}
}

@inproceedings{Mehdipour2019,
  author    = {Mehdipour, Noushin and Vasile, Cristian-Ioan and Belta, Calin},
  booktitle = {Proc. of the {IEEE} Conf. on Decision and Control},
  title     = {Average-Based Robustness for Continuous-Time Signal Temporal Logic},
  year      = {2019},
  pages     = {5312--5317}
}

@inproceedings{Mehdipour2019a,
  author    = {Mehdipour, Noushin and Vasile, Cristian-Ioan and Belta, Calin},
  booktitle = {Proc. of the American Control Conf.},
  title     = {Arithmetic-Geometric Mean Robustness for Control from Signal Temporal Logic Specifications},
  year      = {2019},
  pages     = {1690--1695}
}

@inproceedings{Lindemann2017,
  author    = {Lindemann, Lars and Dimarogonas, Dimos V.},
  booktitle = {Proc. of the American Control Conf.},
  title     = {Robust Motion Planning Employing Signal Temporal Logic},
  year      = {2017},
  pages     = {2950--2955}
}

@inproceedings{Donze2010,
  author    = {Donz{\'e}, Alexandre and Maler, Oded},
  booktitle = {Proc. of the Int. Conf. on Formal Modeling and Analysis of Timed Systems},
  title     = {Robust Satisfaction of Temporal Logic over Real-Valued Signals},
  year      = {2010},
  pages     = {92--106}
}

@article{Rodionova2022,
  author  = {Rodionova, Alena and Lindemann, Lars and Morari, Manfred and Pappas, George},
  journal = {ACM Trans. on Embedded Computing Systems},
  title   = {Temporal Robustness of Temporal Logic Specifications: {Analysis} and Control Design},
  year    = {2022},
  number  = {1},
  pages   = {1--44},
  volume  = {22}
}

@article{Leung2023,
  author  = {Leung, Karen and Ar{\'e}chiga, Nikos and Pavone, Marco},
  journal = {The Int. Journal of Robotics Research},
  title   = {Backpropagation Through Signal Temporal Logic Specifications: {Infusing} Logical Structure Into Gradient-Based Methods},
  year    = {2023},
  number  = {6},
  pages   = {356--370},
  volume  = {42}
}

@article{Marchesini2025,
  author      = {Marchesini, Gregorio and Liu, Siyuan and Lindemann, Lars and Dimarogonas, Dimos V.},
  journal     = {{IEEE} Trans. on Automatic Control},
  title       = {Sampling-Based Planning Under {STL} Specifications: {A} Forward Invariance Approach},
  year        = {2026},
  pages       = {1-16},
  earlyaccess = {true}
}

@article{Farahani2015,
  author  = {Farahani, Samira S. and Raman, Vasumathi and Murray, Richard M.},
  journal = {IFAC-PapersOnLine},
  title   = {Robust Model Predictive Control for Signal Temporal Logic Synthesis},
  year    = {2015},
  number  = {27},
  pages   = {323--328},
  volume  = {48}
}

@inproceedings{Sadraddini2015,
  author    = {Sadraddini, Sadra and Belta, Calin},
  booktitle = {Proc. of the Allerton Conf. on Communication, Control, and Computing},
  title     = {Robust Temporal Logic Model Predictive Control},
  year      = {2015},
  pages     = {772--779}
}

@inproceedings{Kapoor2024,
  author    = {Kapoor, Parv and Kang, Eunsuk and Meira-G{\'o}es, R{\^o}mulo},
  booktitle = {Proc. of the {NASA} Formal Methods Symposium},
  title     = {Safe Planning Through Incremental Decomposition of Signal Temporal Logic Specifications},
  year      = {2024},
  pages     = {377--396}
}

@article{Barbosa2019,
  author  = {Barbosa, Fernando S. and Duberg, Daniel and Jensfelt, Patric and T{\r{u}}mov{\'a}, Jana},
  journal = {IEEE Robotics and Automation Letters},
  title   = {Guiding Autonomous Exploration With Signal Temporal Logic},
  year    = {2019},
  number  = {4},
  pages   = {3332--3339},
  volume  = {4}
}

@article{Althoff2025,
  author  = {Althoff, Matthias and Maierhofer, Sebastian and W{\"u}rsching, Gerald and Lin, Yuanfei and Lercher, Florian and Stolz, Roland},
  journal = {Proc. of the IEEE},
  title   = {No More Traffic Tickets: {A} Tutorial to Ensure Traffic-Rule Compliance of Automated Vehicles},
  year    = {2025},
  pages   = {1--30}
}

@inproceedings{Leung2022,
  author    = {Leung, Karen and Pavone, Marco},
  booktitle = {Proc. of the American Control Conf.},
  title     = {Semi-Supervised Trajectory-Feedback Controller Synthesis for Signal Temporal Logic Specifications},
  year      = {2022},
  pages     = {178--185}
}

@article{Liu2022,
  author  = {Liu, Wenliang and Mehdipour, Noushin and Belta, Calin},
  journal = {IEEE Control Systems Letters},
  title   = {Recurrent Neural Network Controllers for Signal Temporal Logic Specifications Subject to Safety Constraints},
  year    = {2022},
  pages   = {91--96},
  volume  = {6}
}

@inproceedings{Saxena2023,
  author    = {Saxena, Naman and Sandeep, Gorantla and Jagtap, Pushpak},
  booktitle = {Proc. of the Indian Control Conf.},
  title     = {Reinforcement Learning for Signal Temporal Logic using Funnel-Based Approach},
  year      = {2023},
  pages     = {1--6}
}

@inproceedings{Singh2023,
  author    = {Singh, Nikhil Kumar and Saha, Indranil},
  booktitle = {Proc. of the {AAAI} Conf. on Artificial Intelligence},
  title     = {{STL}-Based Synthesis of Feedback Controllers Using Reinforcement Learning},
  year      = {2023},
  pages     = {15118--15126}
}

@article{Meng2023,
  author  = {Meng, Yue and Fan, Chuchu},
  journal = {IEEE Robotics and Automation Letters},
  title   = {Signal Temporal Logic Neural Predictive Control},
  year    = {2023},
  number  = {11},
  pages   = {7719--7726},
  volume  = {8}
}

@inproceedings{Sewlia2023,
  author    = {Sewlia, Mayank and Verginis, Christos K. and Dimarogonas, Dimos V.},
  booktitle = {Proc. of the American Control Conf.},
  title     = {Cooperative Sampling-Based Motion Planning under Signal Temporal Logic Specifications},
  year      = {2023},
  pages     = {2697--2702}
}

@inproceedings{Linard2023,
  author    = {Linard, Alexis and Torre, Ilaria and Bartoli, Ermanno and Sleat, Alex and Leite, Iolanda and T{\r{u}}mov{\'a}, Jana},
  booktitle = {Proc. of the {IEEE}/{RSJ} Int. Conf. on Intelligent Robots and Systems},
  title     = {Real-Time {RRT}* with Signal Temporal Logic Preferences},
  year      = {2023},
  pages     = {8621--8627}
}

@inproceedings{Barbosa2021,
  author    = {Barbosa, Fernando S. and Karlsson, Jesper and Tajvar, Pouria and T{\r{u}}mov{\'a}, Jana},
  booktitle = {Proc. of the Int. Conf. on Formal Modeling and Analysis of Timed Systems},
  title     = {Formal Methods for Robot Motion Planning with Time and Space Constraints},
  year      = {2021},
  pages     = {1--14}
}

@article{Yu2024,
  author  = {Yu, Pian and Tan, Xiao and Dimarogonas, Dimos V.},
  journal = {IEEE Trans. on Robotics},
  title   = {Continuous-Time Control Synthesis Under Nested Signal Temporal Logic Specifications},
  year    = {2024},
  pages   = {2272--2286},
  volume  = {40}
}

@inproceedings{Ruo2024,
  author    = {Ruo, Andrea and Sabattini, Lorenzo and Villani, Valeria},
  booktitle = {Proc. of the European Control Conf.},
  title     = {{CBF}-Based Motion Planning for Socially Responsible Robot Navigation Guaranteeing {STL} Specification*},
  year      = {2024},
  pages     = {122--127}
}

@article{Huang2024,
  author  = {Huang, Zhiyuan and Lan, Weiyao and Yu, Xiao},
  journal = {IEEE Trans. on Intelligent Vehicles},
  title   = {A Formal Control Framework of Autonomous Vehicle for Signal Temporal Logic Tasks and Obstacle Avoidance},
  year    = {2024},
  number  = {1},
  pages   = {1930--1940},
  volume  = {9}
}

@article{Lindemann2020,
  author  = {Lindemann, Lars and Dimarogonas, Dimos V.},
  journal = {IEEE Trans. on Control of Network Systems},
  title   = {Barrier Function Based Collaborative Control of Multiple Robots Under Signal Temporal Logic Tasks},
  year    = {2020},
  number  = {4},
  pages   = {1916--1928},
  volume  = {7}
}

@inproceedings{Xiao2021,
  author    = {Xiao, Wei and Mehdipour, Noushin and Collin, Anne and Bin-Nun, Amitai Y. and Frazzoli, Emilio and Tebbens, Radboud Duintjer and Belta, Calin},
  booktitle = {Proc. of the {ACM}/{IEEE} Int. Conf. on Cyber-Physical Systems},
  title     = {Rule-based optimal control for autonomous driving},
  year      = {2021},
  pages     = {143--154}
}

@inproceedings{Charitidou2021,
  author    = {Charitidou, Maria and Dimarogonas, Dimos V.},
  booktitle = {Proc. of the European Control Conf.},
  title     = {Barrier Function-based Model Predictive Control under Signal Temporal Logic Specifications},
  year      = {2021},
  pages     = {734--739}
}

@inproceedings{Lindemann2020a,
  author    = {Lindemann, Lars and Pappas, George J. and Dimarogonas, Dimos V.},
  booktitle = {Proc. of the {IEEE} Conf. on Decision and Control},
  title     = {Control Barrier Functions for Nonholonomic Systems under Risk Signal Temporal Logic Specifications},
  year      = {2020},
  pages     = {1422--1428}
}

@article{Lindemann2019,
  author  = {Lindemann, Lars and Dimarogonas, Dimos V.},
  journal = {IEEE Control Systems Letters},
  title   = {Control Barrier Functions for Signal Temporal Logic Tasks},
  year    = {2019},
  number  = {1},
  pages   = {96--101},
  volume  = {3}
}

@article{Takayama2025,
  author  = {Takayama, Yoshinari and Hashimoto, Kazumune and Ohtsuka, Toshiyuki},
  journal = {IEEE Trans. on Automatic Control},
  title   = {{STLCCP}: {E}fficient Convex Optimization-Based Framework for Signal Temporal Logic Specifications},
  year    = {2025},
  number  = {9},
  pages   = {6064--6079},
  volume  = {70}
}

@article{Claudet2024,
  author  = {Claudet, Thomas and Martire, Davide and Losa, Damiana and Sanfedino, Francesco and Alazard, Daniel},
  journal = {IFAC Journal of Systems and Control},
  title   = {A novel graph-based theory for convexification of mission-planning constraints and generative pre-trained trajectory optimization},
  year    = {2024},
  pages   = {1--13},
  volume  = {30}
}

@inproceedings{Mao2022,
  author    = {Mao, Yuanqi and Acikmese, Behcet and Garoche, Pierre-Loic and Chapoutot, Alexandre},
  booktitle = {Proc. of the {ACM} Int. Conf. on Hybrid Systems: {C}omputation and Control},
  title     = {Successive Convexification for Optimal Control with Signal Temporal Logic Specifications},
  year      = {2022},
  pages     = {1--7}
}

@article{Buyukkocak2025,
  author  = {Buyukkocak, Ali Tevfik and Aksaray, Derya},
  journal = {IEEE Robotics and Automation Letters},
  title   = {Resilient Online Planning for Mobile Robots With Minimal Relaxation of Signal Temporal Logic Specifications},
  year    = {2025},
  number  = {6},
  pages   = {5935--5942},
  volume  = {10}
}

@article{Belta2019,
  author  = {Belta, Calin and Sadraddini, Sadra},
  journal = {Annual Review of Control, Robotics, and Autonomous Systems},
  title   = {Formal Methods for Control Synthesis: {A}n Optimization Perspective},
  year    = {2019},
  number  = {1},
  pages   = {115--140},
  volume  = {2}
}

@inproceedings{Rodionova2021,
  author    = {Rodionova, Al{\"e}na and Lindemann, Lars and Morari, Manfred and Pappas, George J.},
  booktitle = {Proc. of the {IEEE} Conf. on Decision and Control},
  title     = {Time-Robust Control for {STL} Specifications},
  year      = {2021},
  pages     = {572--579}
}

@article{Aasi2025,
  author  = {Aasi, Erfan and Cai, Mingyu and Vasile, Cristian-Ioan and Belta, Calin},
  journal = {IEEE Trans. on Intelligent Transportation Systems},
  title   = {A Two-Level Control Algorithm for Autonomous Driving in Urban Environments},
  year    = {2025},
  number  = {1},
  pages   = {410--424},
  volume  = {26}
}

@article{Yuan2024,
  author  = {Yuan, Yating and Quartz, Thanin and Liu, Jun},
  journal = {IEEE Control Systems Letters},
  title   = {Signal Temporal Logic Planning With Time-Varying Robustness},
  year    = {2024},
  pages   = {3015--3020},
  volume  = {8}
}

@article{Kurtz2022,
  author  = {Kurtz, Vincent and Lin, Hai},
  journal = {IEEE Control Systems Letters},
  title   = {Mixed-Integer Programming for Signal Temporal Logic With Fewer Binary Variables},
  year    = {2022},
  pages   = {2635--2640},
  volume  = {6}
}

@inproceedings{Aasi2021,
  author    = {Aasi, Erfan and Vasile, Cristian Ioan and Belta, Calin},
  booktitle = {Proc. of the American Control Conf.},
  title     = {A Control Architecture for Provably-Correct Autonomous Driving},
  year      = {2021},
  pages     = {2913--2918}
}

@inproceedings{Sahin2020,
  author    = {Sahin, Yunus Emre and Quirynen, Rien and Cairano, Stefano Di},
  booktitle = {Proc. of the American Control Conf.},
  title     = {Autonomous Vehicle Decision-Making and Monitoring based on Signal Temporal Logic and Mixed-Integer Programming},
  year      = {2020},
  pages     = {454--459}
}

@article{Sadraddini2019,
  author  = {Sadraddini, Sadra and Belta, Calin},
  journal = {IEEE Trans. on Automatic Control},
  title   = {Formal Synthesis of Control Strategies for Positive Monotone Systems},
  year    = {2019},
  number  = {2},
  pages   = {480--495},
  volume  = {64}
}

@inproceedings{Gambo2021,
  author    = {Gambo, Ishaya Peni and Taveter, Kuldar},
  booktitle = {Proc. of the Int. Conf. on Evaluation of Novel Approaches to Software Engineering},
  title     = {Identifying and Resolving Conflicts in Requirements by Stakeholders: {A} Clustering Approach},
  year      = {2021},
  pages     = {158--169}
}

@article{Marler2004,
  author  = {Marler, R. Timothy and Arora, Jasbir S.},
  journal = {Structural and Multidisciplinary Optimization},
  title   = {Survey of Multi-Objective Optimization Methods for Engineering},
  year    = {2004},
  pages   = {369--395},
  volume  = {26}
}

@article{Das1997,
  author  = {Das, Indraneel and Dennis, John E},
  journal = {Structural Optimization},
  title   = {A Closer Look at Drawbacks of Minimizing Weighted Sums of Objectives for Pareto Set Generation in Multicriteria Optimization Problems},
  year    = {1997},
  number  = {1},
  pages   = {63--69},
  volume  = {14}
}

@book{Branke2008,
  author    = {Branke, J{\"u}rgen},
  publisher = {Springer},
  title     = {Multiobjective Optimization: {Interactive} and Evolutionary Approaches},
  year      = {2008},
  volume    = {1}
}

@article{Penlington2024,
  author  = {Penlington, Matteo and Zanardi, Alessandro and Frazzoli, Emilio},
  journal = {arXiv preprint arXiv:2409.11199},
  title   = {Optimization of Rulebooks via Asymptotically Representing Lexicographic Hierarchies for Autonomous Vehicles},
  year    = {2024},
  pages   = {1--7}
}

@article{Jordana2025,
  author  = {Jordana, Armand and Zhang, Jianghan and Amigo, Joseph and Righetti, Ludovic},
  journal = {arXiv preprint arXiv:2506.22087},
  title   = {An Introduction to Zero-order Optimization Techniques for Robotics},
  year    = {2025},
  pages   = {1--16}
}

@inproceedings{Molins2025,
  author    = {Molins, Pau de las Heras and Roy-Almonacid, Eric and Lee, Dong Ho and Peters, Lasse and Fridovich-Keil, David and Bakirtzis, Georgios},
  booktitle = {Proc. of the {IEEE} Int. Conf. on Intelligent Transportation Systems},
  title     = {Approximate Solutions to Games of Ordered Preference},
  year      = {2025},
  pages     = {3378--3385}
}

@inproceedings{Censi2019,
  author    = {Censi, Andrea and Slutsky, Konstantin and Wongpiromsarn, Tichakorn and Yershov, Dmitry and Pendleton, Scott and Fu, James and Frazzoli, Emilio},
  booktitle = {Proc. of the {IEEE} Int. Conf. on Robotics and Automation},
  title     = {Liability, Ethics, and Culture-Aware Behavior Specification Using Rulebooks},
  year      = {2019},
  pages     = {8536--8542}
}

@article{Lai2022,
  author  = {Lai, Leonardo and Fiaschi, Lorenzo and Cococcioni, Marco and Deb, Kalyanmoy},
  journal = {Natural Computing},
  title   = {Pure and Mixed Lexicographic-Paretian Many-Objective Optimization: {State} of the Art},
  year    = {2022},
  number  = {2},
  pages   = {227--242},
  volume  = {22}
}

@inproceedings{Helou2021,
  author    = {Helou, Bassam and Dusi, Aditya and Collin, Anne and Mehdipour, Noushin and Chen, Zhiliang and Lizarazo, Cristhian and Belta, Calin and Wongpiromsarn, Tichakorn and Tebbens, Radboud Duintjer and Beijbom, Oscar},
  booktitle = {Proc. of the {IEEE}/{RSJ} Int. Conf. on Intelligent Robots and Systems},
  title     = {The Reasonable Crowd: {Towards} Evidence-Based and Interpretable Models of Driving Behavior},
  year      = {2021},
  pages     = {6708--6715}
}

@article{Sherali1983,
  author  = {Sherali, Hanif D and Soyster, Allen L},
  journal = {Journal of Optimization Theory and Applications},
  title   = {Preemptive and Nonpreemptive Multi-Objective Programming: {Relationship} and Counterexamples},
  year    = {1983},
  number  = {2},
  pages   = {173--186},
  volume  = {39}
}

@article{Zarepisheh2011,
  author  = {Zarepisheh, Masoud and Khorram, Esmaile},
  journal = {Mathematical Methods of Operations Research},
  title   = {On the Transformation of Lexicographic Nonlinear Multiobjective Programs to Single Objective Programs},
  year    = {2011},
  number  = {2},
  pages   = {217--231},
  volume  = {74}
}

@article{Pek2021,
  author  = {Pek, Christian and Althoff, Matthias},
  journal = {IEEE Trans. on Robotics},
  title   = {Fail-Safe Motion Planning for Online Verification of Autonomous Vehicles Using Convex Optimization},
  year    = {2021},
  number  = {3},
  pages   = {798--814},
  volume  = {37}
}

@inproceedings{Baldini2024,
  author    = {Baldini, Francesca and Tariq, Faizan M. and Bae, Sangjae and Isele, David},
  booktitle = {Proc. of the {IEEE} Int. Conf. on Intelligent Transportation Systems},
  title     = {Don't Get Stuck: {A} Deadlock Recovery Approach},
  year      = {2024},
  pages     = {3688--3695}
}

@inproceedings{Wongpiromsarn2021,
  author    = {Wongpiromsarn, Tichakorn and Slutsky, Konstantin and Frazzoli, Emilio and Topcu, Ufuk},
  booktitle = {Proc. of the American Control Conf.},
  title     = {Minimum-Violation Planning for Autonomous Systems: {T}heoretical and Practical Considerations},
  year      = {2021},
  pages     = {4866--4872}
}

@inproceedings{Karlsson2018,
  author    = {Karlsson, Jesper and Vasile, Cristian-Ioan and T{\r{u}}mov{\'a}, Jana and Karaman, Sertac and Rus, Daniela},
  booktitle = {Proc. of the {IEEE} Int. Conf. on Robotics and Automation},
  title     = {Multi-Vehicle Motion Planning for Social Optimal Mobility-on-Demand},
  year      = {2018},
  pages     = {7298--7305}
}

@inproceedings{Lahijanian2015,
  author    = {Lahijanian, Morteza and Almagor, Shaull and Fried, Dror and Kavraki, Lydia and Vardi, Moshe},
  booktitle = {Proc. of the {AAAI} Conf. on Artificial Intelligence},
  title     = {This Time the Robot Settles for a Cost: {A} Quantitative Approach to Temporal Logic Planning with Partial Satisfaction},
  year      = {2015},
  pages     = {1--8}
}

@inproceedings{Tumova2013a,
  author    = {T{\r{u}}mov{\'a}, Jana and Reyes Castro, Luis I. and Karaman, Sertac and Frazzoli, Emilio and Rus, Daniela},
  booktitle = {Proc. of the American Control Conf.},
  title     = {Minimum-Violation {LTL} Planning with Conflicting Specifications},
  year      = {2013},
  pages     = {200--205}
}

@inproceedings{Veer2023,
  author    = {Veer, Sushant and Leung, Karen and Cosner, Ryan K. and Chen, Yuxiao and Karkus, Peter and Pavone, Marco},
  booktitle = {Proc. of the {IEEE} Int. Conf. on Robotics and Automation},
  title     = {Receding Horizon Planning with Rule Hierarchies for Autonomous Vehicles},
  year      = {2023},
  pages     = {1507--1513}
}

@inproceedings{Rahmani2020,
  author    = {Rahmani, Hazhar and O'Kane, Jason M.},
  booktitle = {Proc. of the {IEEE}/{RSJ} Int. Conf. on Intelligent Robots and Systems},
  title     = {What to Do When You Can't Do It All: {T}emporal Logic Planning with Soft Temporal Logic Constraints},
  year      = {2020},
  pages     = {6619--6626}
}

@article{Sathya2021,
  author  = {Sathya, Ajay Suresha and Pipeleers, Goele and Decr{\'e}, Wilm and Swevers, Jan},
  journal = {IEEE Robotics and Automation Letters},
  title   = {A Weighted Method for Fast Resolution of Strictly Hierarchical Robot Task Specifications Using Exact Penalty Functions},
  year    = {2021},
  number  = {2},
  pages   = {3057--3064},
  volume  = {6}
}

@article{MaristanydelasCasas2023,
  author  = {Maristany de las Casas, Pedro and Kraus, Luitgard and Sede{\~n}o-Noda, Antonio and Bornd{\"o}rfer, Ralf},
  journal = {Networks},
  title   = {Targeted Multiobjective {Dijkstra} Algorithm},
  year    = {2023},
  number  = {3},
  pages   = {277--298},
  volume  = {82}
}

@article{Fiaschi2022,
  author  = {Fiaschi, Lorenzo and Cococcioni, Marco},
  journal = {Mathematics},
  title   = {A Non-Archimedean Interior Point Method and Its Application to the Lexicographic Multi-Objective Quadratic Programming},
  year    = {2022},
  number  = {23},
  pages   = {1--34},
  volume  = {10}
}

@article{Zanardi2021,
  author  = {Zanardi, Alessandro and Mion, Enrico and Bruschetta, Mattia and Bolognani, Saverio and Censi, Andrea and Frazzoli, Emilio},
  journal = {IEEE Robotics and Automation Letters},
  title   = {Urban Driving Games With Lexicographic Preferences and Socially Efficient {Nash} Equilibria},
  year    = {2021},
  number  = {3},
  pages   = {4978--4985},
  volume  = {6}
}

@article{Hussain2021,
  author  = {Hussain, Akhtar and Kim, Hak-Man},
  journal = {IEEE Trans. on Transportation Electrification},
  title   = {{EV} Prioritization and Power Allocation During Outages: {A} Lexicographic Method-Based Multiobjective Optimization Approach},
  year    = {2021},
  number  = {4},
  pages   = {2474--2487},
  volume  = {7}
}

@article{Wang2020,
  author  = {Wang, Hong and Huang, Yanjun and Khajepour, Amir and Cao, Dongpu and Lv, Chen},
  journal = {IEEE Trans. on Vehicular Technology},
  title   = {Ethical Decision-Making Platform in Autonomous Vehicles With Lexicographic Optimization Based Model Predictive Controller},
  year    = {2020},
  number  = {8},
  pages   = {8164--8175},
  volume  = {69}
}

@article{Cococcioni2018,
  author  = {Cococcioni, Marco and Pappalardo, Massimo and Sergeyev, Yaroslav D.},
  journal = {Applied Mathematics and Computation},
  title   = {Lexicographic Multi-Objective Linear Programming Using Grossone Methodology: {Theory} and Algorithm},
  year    = {2018},
  pages   = {298--311},
  volume  = {318}
}

@inproceedings{Shan2020,
  author    = {Shan, Tixiao and Wang, Wei and Englot, Brendan and Ratti, Carlo and Rus, Daniela},
  booktitle = {Proc. of the {IEEE} Conf. on Decision and Control},
  title     = {A Receding Horizon Multi-Objective Planner for Autonomous Surface Vehicles in Urban Waterways},
  year      = {2020},
  pages     = {4085--4092}
}

@article{Khosravani2018,
  author  = {Khosravani, Saeid and Jalali, Milad and Khajepour, Amir and Kasaiezadeh, Alireza and Chen, Shih-Ken and Litkouhi, Bakhtiar},
  journal = {IEEE Trans. on Industrial Electronics},
  title   = {Application of Lexicographic Optimization Method to Integrated Vehicle Control Systems},
  year    = {2018},
  number  = {12},
  pages   = {9677--9686},
  volume  = {65}
}

@inproceedings{Schnurr2018,
  author    = {Schnurr, Christoph and Hohmann, S{\"o}ren and Kolb, Johannes},
  booktitle = {Proc. of the {IEEE} Conf. on Control Technology and Applications},
  title     = {Novel Lexicographic {MPC} for Loss Optimized Torque Control of Nonlinear {PMSM}},
  year      = {2018},
  pages     = {690--697}
}

@inproceedings{Abouelazm2024,
  author    = {Abouelazm, Ahmed and Michel, Jonas and Z{\"o}llner, J. Marius},
  booktitle = {Proc. of the {IEEE} Intelligent Vehicles Symposium},
  title     = {A Review of Reward Functions for Reinforcement Learning in the Context of Autonomous Driving},
  year      = {2024},
  pages     = {156--163}
}

@article{Wilde2024,
  author  = {Wilde, Nils and Smith, Stephen L. and Alonso-Mora, Javier},
  journal = {IEEE Robotics and Automation Letters},
  title   = {Scalarizing Multi-Objective Robot Planning Problems Using Weighted Maximization},
  year    = {2024},
  number  = {3},
  pages   = {2503--2510},
  volume  = {9}
}

@article{Williams2018,
  author  = {Williams, Grady and Drews, Paul and Goldfain, Brian and Rehg, James M. and Theodorou, Evangelos A.},
  journal = {IEEE Trans. on Robotics},
  title   = {Information-Theoretic Model Predictive Control: {Theory} and Applications to Autonomous Driving},
  year    = {2018},
  number  = {6},
  pages   = {1603--1622},
  volume  = {34}
}

@article{Thoma2023,
  author  = {Thoma, A. and Thomessen, K. and Gardi, A. and Fisher, A. and Braun, C.},
  journal = {The Aeronautical Journal},
  title   = {Prioritising Paths: {An} Improved Cost Function for Local Path Planning for {UAV} in Medical Applications},
  year    = {2023},
  number  = {1318},
  pages   = {2125--2142},
  volume  = {127}
}

@inproceedings{Yi2015,
  author    = {Yi, Daqing and Goodrich, Michael A. and Seppi, Kevin D.},
  booktitle = {Proc. of the Int. joint Conf. on Artificial Intelligence},
  title     = {{MORRF}: {Sampling}-Based Multi-Objective Motion Planning},
  year      = {2015},
  pages     = {1733--1739}
}

@article{Paden2016,
  author  = {Paden, Brian and Č{\'a}p, Michal and Yong, Sze Zheng and Yershov, Dmitry and Frazzoli, Emilio},
  journal = {IEEE Trans. on Intelligent Vehicles},
  title   = {A Survey of Motion Planning and Control Techniques for Self-Driving Urban Vehicles},
  year    = {2016},
  number  = {1},
  pages   = {33--55},
  volume  = {1}
}

@inproceedings{Linard2022,
  author    = {Linard, Alexis and Torre, Ilaria and Leite, Iolanda and T{\r{u}}mov{\'a}, Jana},
  booktitle = {Proc. of the {IEEE}/{RSJ} Int. Conf. on Intelligent Robots and Systems},
  title     = {Inference of Multi-Class {STL} Specifications for Multi-Label Human-Robot Encounters},
  year      = {2022},
  pages     = {1305--1311}
}

@article{Khanna2024,
  author  = {Khanna, Parag and Fredberg, Jonathan and Bj{\"o}rkman, M{\aa}rten and Smith, Christian and Linard, Alexis},
  journal = {IEEE Robotics and Automation Letters},
  title   = {Hand It to Me Formally! {D}ata-Driven Control for Human-Robot Handovers With Signal Temporal Logic},
  year    = {2024},
  number  = {10},
  pages   = {9039--9046},
  volume  = {9}
}

@article{Kohli2007,
  author  = {Kohli, Rajeev and Jedidi, Kamel},
  journal = {Marketing Science},
  title   = {Representation and Inference of Lexicographic Preference Models and their Variants},
  year    = {2007},
  number  = {3},
  pages   = {380--399},
  volume  = {26}
}

@inproceedings{Finkeldei2025,
  author    = {Finkeldei, Florian and Wolf, Michael and Weghorn, Jan-Niklas and Pretschner, Alexander and Althoff, Matthias},
  booktitle = {Proc. of the {IEEE} Int. Conf. on Intelligent Transportation Systems},
  title     = {Enhanced Traffic Rule Monitoring Using Model-Predictive and Duration-Aware Robustness},
  year      = {2025},
  pages     = {1322--1329}
}

@phdthesis{Varnai2022,
  author = {V{\'a}rnai, P{\'e}ter},
  school = {KTH Royal Institute of Technology},
  title  = {Path integral control endowed robot planning under spatiotemporal logic specifications},
  year   = {2022}
}

@phdthesis{Homburger2026,
  author = {Homburger, Hannes},
  school = {Albert-Ludwigs-Universität Freiburg},
  title  = {Optimal Control for Efficient Vessel Operation: {From} Theory to Real-World Applications},
  year   = {2026}
}

@inproceedings{Ziegler2010,
  author    = {Ziegler, Julius and Stiller, Christoph},
  booktitle = {Proc. of the {IEEE} Intelligent Vehicles Symposium},
  title     = {Fast collision checking for intelligent vehicle motion planning},
  year      = {2010},
  pages     = {518--522}
}

@inproceedings{Akiba2019,
  author    = {Akiba, Takuya and Sano, Shotaro and Yanase, Toshihiko and Ohta, Takeru and Koyama, Masanori},
  booktitle = {Proc. of the {ACM} {SIGKDD} Int. Conf. on Knowledge Discovery \& Data Mining},
  title     = {Optuna: {A} next-generation hyperparameter optimization framework},
  year      = {2019},
  pages     = {2623--2631}
}

@article{Trauth2024,
  author  = {Trauth, Rainer and Moller, Korbinian and Würsching, Gerald and Betz, Johannes},
  journal = {{IEEE} Access},
  title   = {{FRENETIX}: {A} High-Performance and Modular Motion Planning Framework for Autonomous Driving},
  year    = {2024},
  volume  = {12},
  pages   = {127426--127439}
}

@article{Biscani2020,
  title   = {A parallel global multiobjective framework for optimization: {Pagmo}},
  author  = {Biscani, Francesco and Izzo, Dario},
  journal = {Journal of Open Source Software},
  volume  = {5},
  number  = {53},
  pages   = {1--7},
  year    = {2020}
}

@article{Geisslinger2023,
  author  = {Geisslinger, Maximilian and Trauth, Rainer and Kaljavesi, Gemb and Lienkamp, Markus},
  journal = {IEEE Open Journal of Intelligent Transportation Systems},
  title   = {Maximum Acceptable Risk as Criterion for Decision-Making in Autonomous Vehicle Trajectory Planning},
  year    = {2023},
  volume  = {4},
  pages   = {570--579}
}

@article{Tong2025,
  author  = {Tong, Kailin and Steinberger, Martin and Horn, Martin and Solmaz, Selim and Watzenig, Daniel},
  journal = {IEEE Open Journal of Intelligent Transportation Systems},
  title   = {Anytime Optimal Trajectory Repairing for Autonomous Vehicles},
  year    = {2025},
  volume  = {6},
  pages   = {537--553}
}

@article{Westhofen2022,
  author  = {Westhofen, Lukas and Neurohr, Christian and Butz, Martin and Scholtes, Maike and Schuldes, Michael},
  journal = {IEEE Open Journal of Intelligent Transportation Systems},
  title   = {Using Ontologies for the Formalization and Recognition of Criticality for Automated Driving},
  year    = {2022},
  volume  = {3},
  pages   = {519--538}
}

@article{Trauth2023,
  author  = {Trauth, Rainer and Moller, Korbinian and Betz, Johannes},
  journal = {IEEE Open Journal of Intelligent Transportation Systems},
  title   = {Toward Safer Autonomous Vehicles: {Occlusion}-Aware Trajectory Planning to Minimize Risky Behavior},
  year    = {2023},
  volume  = {4},
  pages   = {929--942}
}

@article{Rajesh2023,
  author  = {Rajesh, Nishant and Zheng, Yanggu and Shyrokau, Barys},
  journal = {IEEE Open Journal of Intelligent Transportation Systems},
  title   = {Comfort-Oriented Motion Planning for Automated Vehicles Using Deep Reinforcement Learning},
  year    = {2023},
  volume  = {4},
  pages   = {348--359}
}

% ===================================
% Author Biographies
% ===================================
% Author Biographies
\begin{IEEEbiography}[{\includegraphics[width=1in, height=1.25in, clip, keepaspectratio]{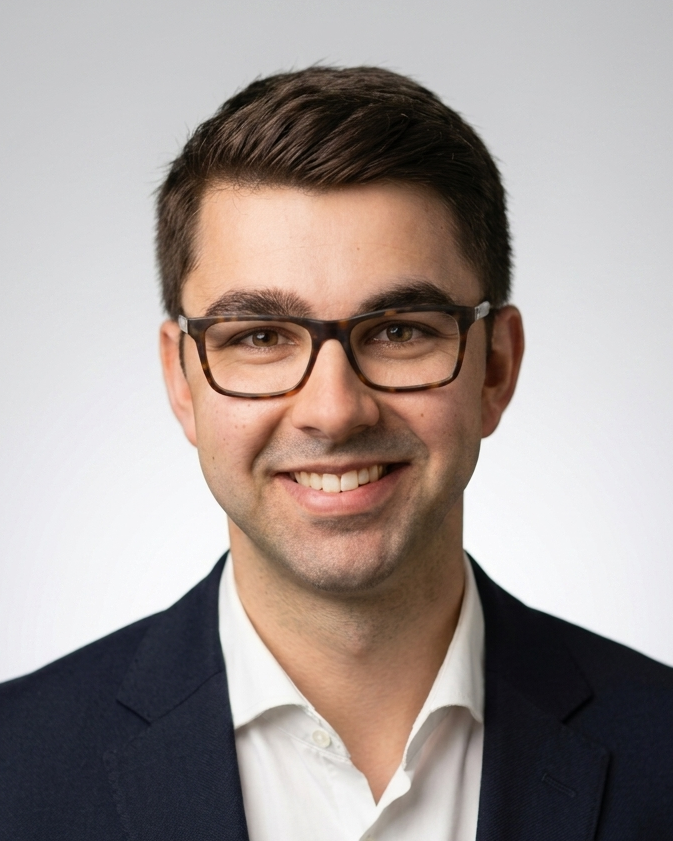}}]{Patrick Halder }
    is a Ph.D. candidate in the Cyber-Physical Systems Group at the Technical University of Munich, Germany, and is a software developer at ZF Friedrichshafen AG, Germany. He received his M.Sc. degree in Mechanical Engineering from the Technical University of Munich, Germany, in 2019, and his B.Eng. degree in Mechanical Engineering from Baden-Württemberg Cooperative State University Friedrichshafen, Germany, in 2016. His research interests include motion planning, formal methods, and multi-objective optimization.
\end{IEEEbiography}

\begin{IEEEbiography}[{\includegraphics[width=1in, height=1.25in, clip, keepaspectratio]{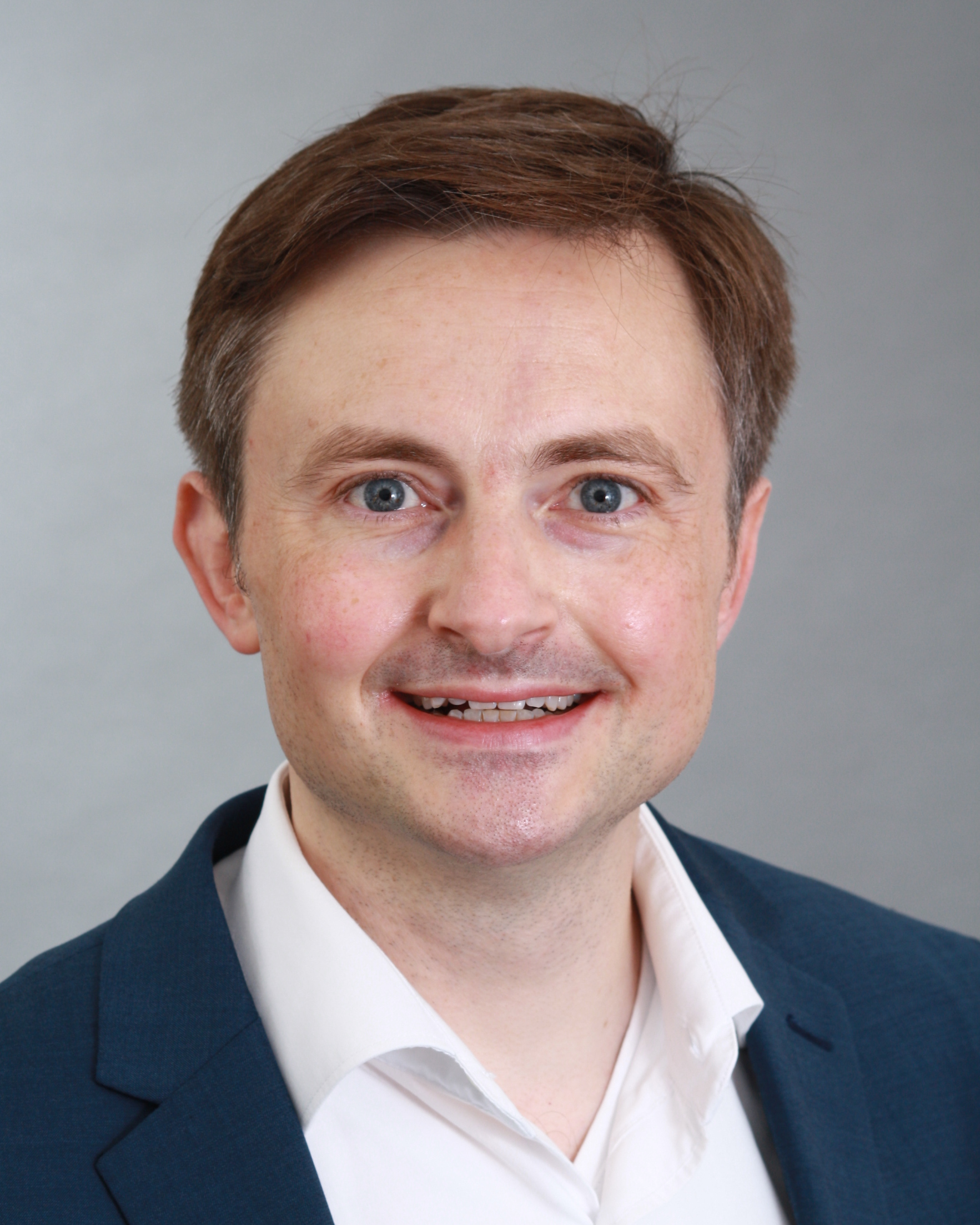}}]{Lothar Kiltz }
    received the Diploma degree in systems and electrical engineering and the Ph.D. degree in mechatronics engineering from Saarland University, Saarbrücken, Germany. He joined ZF Friedrichshafen AG in 2014 and is currently a Software Project Manager in the Industrial Technology Division. From 2020 to 2024, he led the Corporate Expert Team of Control Engineering. His research interests include modeling, estimation, and control of mechatronic systems and their robust, optimization-based design, as well as systems engineering with an emphasis on software-intensive, large-scale, variability-rich systems and their complexity and governance. He is an appointed member of the VDI/VDE Committee AUTOREG.
\end{IEEEbiography}

\begin{IEEEbiography}[{\includegraphics[width=1in, height=1.25in, clip, keepaspectratio]{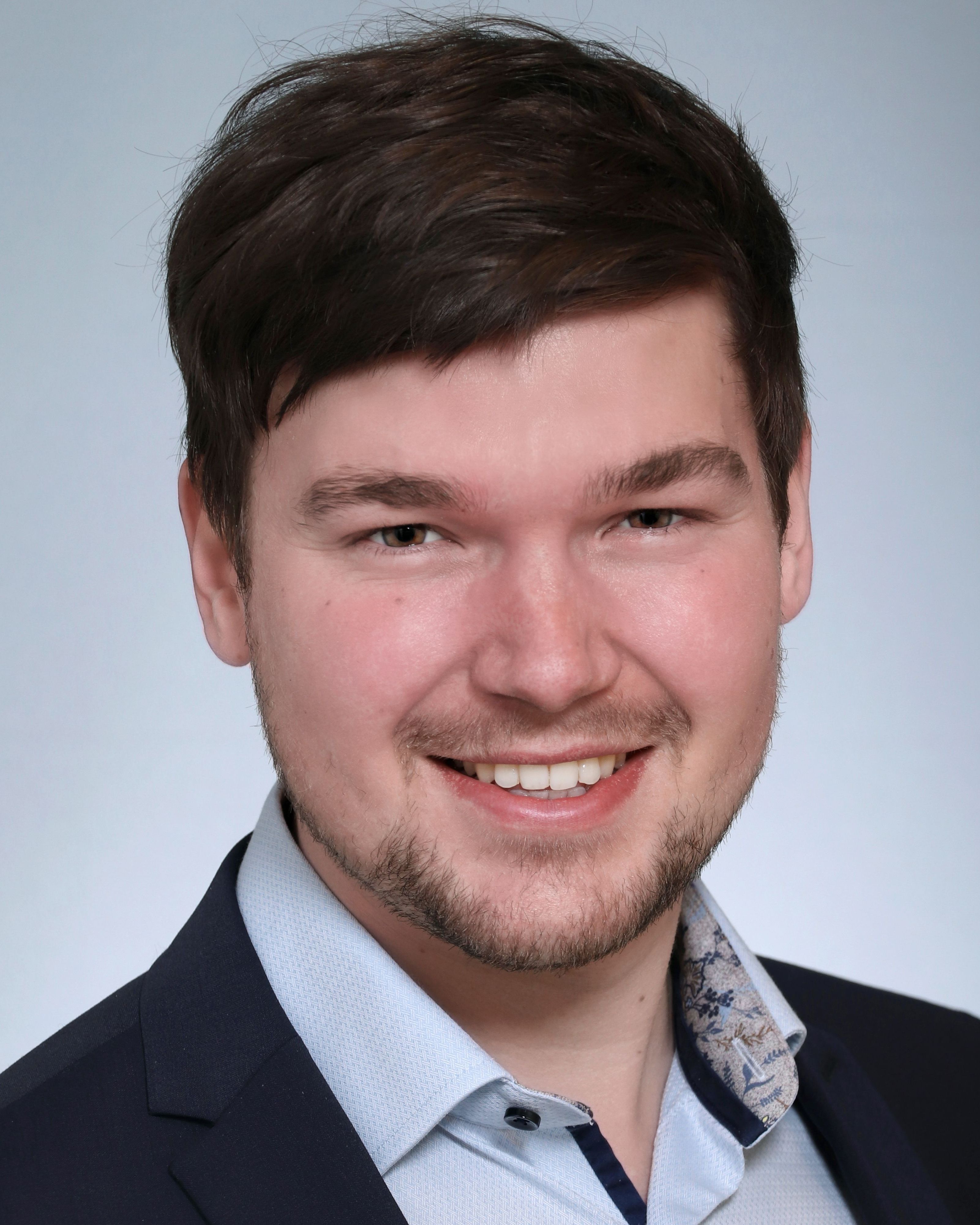}}]{Hannes Homburger }
    is a cooperative Ph.D. candidate at the Institute of System Dynamics at the University of Applied Sciences Konstanz, Germany, and the Systems Control and Optimization Laboratory at the University of Freiburg, Germany. He received his M.Eng. degree in Electrical Systems from the University of Applied Sciences Konstanz in 2021 and his B.Eng. degree in Electrical Engineering from the Baden-Württemberg Cooperative State University Stuttgart, Germany, in 2017. His research interests include optimal planning and model predictive control based on zero- and second-order methods with applications to autonomous systems.
\end{IEEEbiography}

\begin{IEEEbiography}[{\includegraphics[width=1in, height=1.25in, clip, keepaspectratio]{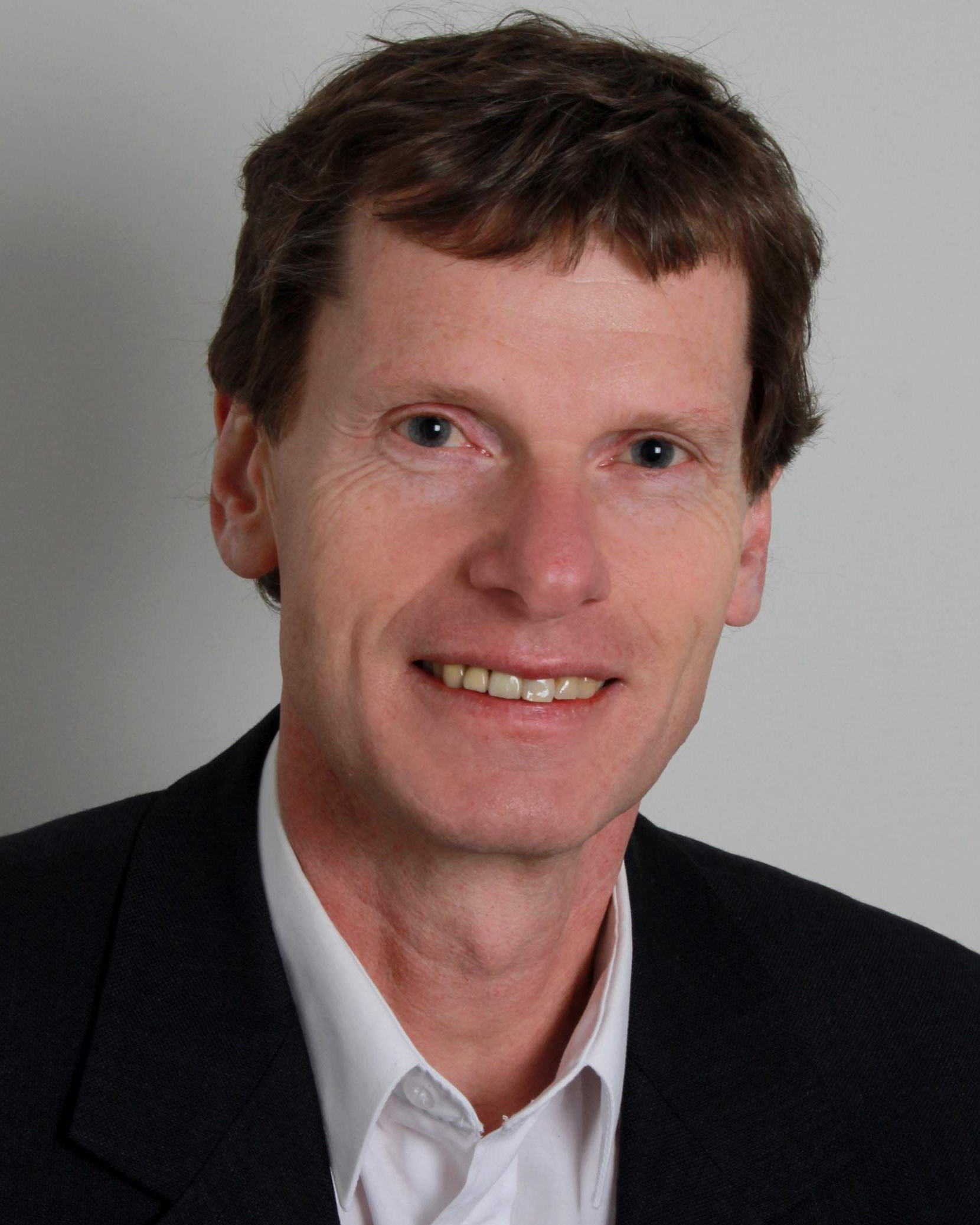}}]{Johannes Reuter }
    is a professor at the University of Applied Sciences Konstanz, Germany, since 2007, and the director of the Institute of System Dynamics at the University of Applied Sciences Konstanz since 2010. He received his Ph.D. degree from the Technical University of Berlin, Germany, in 2000. Between 2000 and 2007, he held various leadership positions at IAV GmbH in Berlin, Germany; IAV Automotive Engineering in Ann Arbor, USA; and the Eaton Corporation Innovation Center in Southfield, USA. His research interests include nonlinear control, multi-sensor data fusion, and autonomous systems.
\end{IEEEbiography}

\begin{IEEEbiography}[{\includegraphics[width=1in, height=1.25in, clip, keepaspectratio]{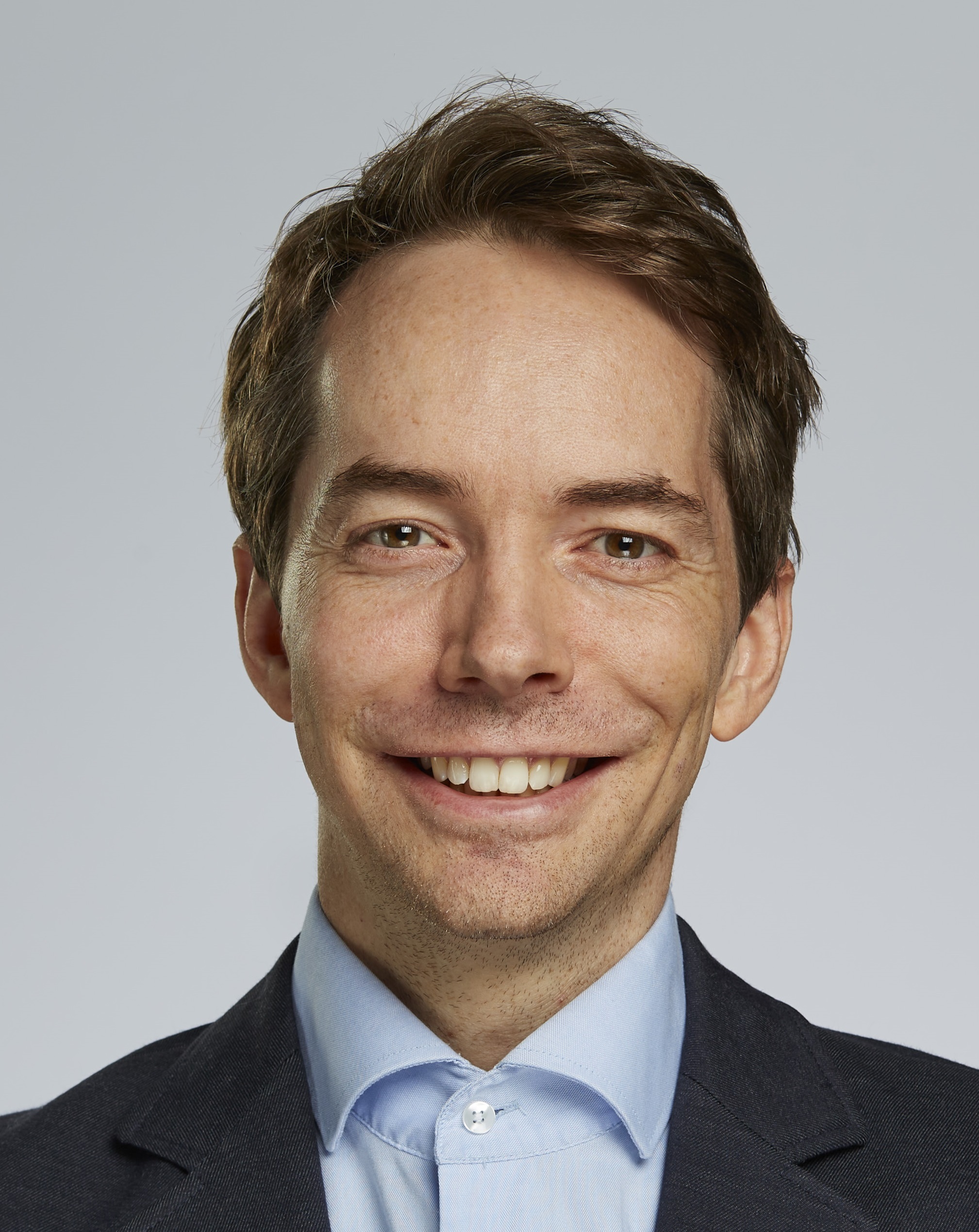}}]{Matthias Althoff }
    is an associate professor in computer science at the Technical University of Munich, Germany. He received his diploma engineering degree in Mechanical Engineering in 2005, and his Ph.D. degree in Electrical Engineering in 2010, both from the Technical University of Munich, Germany. From 2010 to 2012, he was a postdoctoral researcher at Carnegie Mellon University, Pittsburgh, USA, and from 2012 to 2013 an assistant professor at Technische Universität Ilmenau, Germany. His research interests include formal verification of continuous and hybrid systems, reachability analysis, planning algorithms, nonlinear control, automated vehicles, and power systems.
\end{IEEEbiography}

\end{document}